\documentclass[11pt]{article}
\setlength{\arraycolsep}{0.0em}
\setlength{\hoffset}{-1in}
   \setlength{\voffset}{-1in}
   \setlength{\oddsidemargin}{1.1in}
   \setlength{\textwidth}{6.42in}
   \setlength{\topmargin}{0.5in}
   \setlength{\headheight}{0.25in}
   \setlength{\headsep}{0.25in}
   \setlength{\textheight}{9.0in}
\usepackage{graphicx}
\usepackage{hyperref}
\usepackage{amsmath, amssymb, enumerate}
\usepackage{natbib} 
\usepackage{cancel}
\usepackage{algorithm,tikz}
\usepackage{algpseudocode}
\usepackage{appendix}
\newcommand*\circled[1]{\tikz[baseline=(char.base)]{
            \node[shape=circle,draw,inner sep=2pt] (char) {#1};}}
\usepackage[applemac]{inputenc}
\usepackage[T1]{fontenc}
\usepackage{amsfonts}
\usepackage{float}
\usepackage{tikz,authblk}
\usetikzlibrary{shapes.geometric,arrows}
\usepackage{mathtools}

\usepackage{parskip}
\usepackage{multirow}
\usepackage{amsthm}
\newtheorem{theorem}{Theorem}
\newtheorem{lemma}{Lemma}

\newtheorem{definition}{Definition}
\newtheorem{assumption}{}[]

\newtheorem{remark}{Remark}
\setlength{\parindent}{0in}

\newcommand{\E}{\mathbb{E}}
\DeclareMathOperator{\tr}{Tr}

\newcommand{\Slinv}{(\Sigma + \lambda I)^{-1}}
\newcommand{\ts}{\textsuperscript}
\newcommand{\fh}{\hat{f}}
\newcommand{\Sigmah}{\hat{\Sigma}}
\newcommand{\ahat}{\hat{a}}
\newcommand{\pih}{\hat{\pi}}
\newcommand{\tauh}{\hat{\tau}}

\newcommand{\argmax}{\text{arg\ max}}
\newcommand{\argmin}{\text{arg\ min}}

\newcommand\norm[1]{\left\lVert#1\right\rVert}
\newcommand\normx[1]{\Vert#1\Vert}
\bibliographystyle{apalike}

\setlength{\tabcolsep}{5pt}
\begin{document}
\title{Kernel $\epsilon$-Greedy for Multi-Armed Bandits with Covariates}
\author{Sakshi Arya\textsuperscript{1} \text{and} Bharath K. Sriperumbudur\textsuperscript{2}}
\affil{\textsuperscript{1}Department of Mathematics, Applied Mathematics and Statistics,\\
Case Western Reserve University, Cleveland, OH 44106, USA.}
\affil{\textsuperscript{2}Department of Statistics,  
Pennsylvania State University\\
University Park, PA 16802, USA.\\
\texttt{sxa1351@case.edu, bks18@psu.edu}}

\date{}
\maketitle

\begin{abstract}
  We consider 
the $\epsilon$-greedy strategy for the multi-arm bandit with covariates (MABC) problem, where 
the mean reward functions are assumed to lie in a reproducing kernel Hilbert space (RKHS). We propose to estimate the unknown mean reward functions using an online weighted kernel ridge regression estimator, and show the resultant estimator to be consistent under appropriate decay rates of
the exploration probability sequence, $\{\epsilon_t\}_t$, and 
regularization parameter, $\{\lambda_t\}_t$. 
Moreover, we show that for any choice of kernel and the corresponding RKHS, we achieve a sub-linear regret rate depending on the intrinsic dimensionality of the RKHS. Furthermore, we achieve the optimal regret rate of $\sqrt{T}$ under a margin condition for finite-dimensional RKHS.
\end{abstract}
\textbf{MSC 2010 subject classification:} Primary: 62L10; Secondary: 62G05, 68T05.\\
\textbf{Keywords and phrases:} Contextual bandits, reproducing kernel Hilbert space, covariance operator, $\epsilon$-greedy, cumulative regret, inverse probability weighting, kernel ridge regression
\setlength{\parskip}{4pt}

\section{Introduction}
Sequential decision-making in real time is increasingly becoming important in various applications, such as clinical trials \citep{bather1985allocation, villar2015multi}, news article recommendation \citep{li2010contextual}, and mobile health \citep{nahum2017just}. In all such problems, the decision-maker is faced with several alternatives, from which they have to make a series of choices (referred to as \emph{arms}) sequentially, based on the information available at any given time. In doing so, the decision-maker takes into account additional information or covariates (characteristics) that help in making informed decisions. As far as our knowledge extends, no prior research has explored the utilization of kernel methods within a nonparametric Multi-Armed Bandit with Covariates (MABC) framework \citep{rigollet2010nonparametric}. This approach differs from the conventional kernel bandits framework, primarily in terms of the underlying modeling framework. In a treatment allocation problem, this can be described as follows: Given finitely many competing treatments for a disease, the decision-maker (physician) chooses the treatment best suited for individual patients as they arrive, and each allocated treatment results in a \textit{reward} (outcome). While doing so, the decision-maker takes into account the patient's covariates and information available about previous patients with the same disease, with the eventual goal of maximizing the total reward accumulated over a period of time. The technical challenge in achieving this is two-fold: 1) learning the relationship between the covariates and optimal arms, and, 2) balancing the \textit{exploration-exploitation trade-off}, which arises due to the sequential (or online) nature of the problem. In other words, in a sequential setup, at each time point, the physician has to effectively identify the best treatment (exploration) and treat patients as effectively as possible during the trial (exploitation). Since there has been a substantial number of advancements in the contextual bandit problem in recent years, we refer the reader to \citet{lattimore2020bandit} for a detailed description of recent developments in this area and \citet{tewari2017ads} for a comprehensive survey of both parametric and nonparametric methods in both contextual bandits and MABC.

In this paper, we consider a kernelized MABC framework, where the relationship between the rewards and covariates for each arm is modeled by functions in a reproducing kernel Hilbert space (RKHS), and study a popular heuristic in multi-armed bandit problems known as the annealed  $\epsilon$-greedy strategy \citep{sutton2018reinforcement}. This strategy allocates arms based on a randomized strategy to balance the exploration-exploitation trade-off, with a careful reduction in exploration over time. For example, in a two-armed bandit, the $\epsilon$-greedy strategy chooses the current best-performing arm with probability $1-\epsilon$ and makes a random decision with a small probability $\epsilon$. In the annealed version of the algorithm, $\epsilon$ is a non-increasing function of time. This algorithm falls in the broad category of `algorithms with myopic exploration', which are easy to implement and could result in good empirical performance in some situations with appropriate exploration probability choices \citep{bietti2021contextual, mnih2015human}. In practice, they are often selected as the top choices due to their simplicity. However, they have not been studied extensively in the literature.  Recently, \citet{dann2022guarantees} studied the $\epsilon$-greedy strategy in the more general reinforcement learning setup and provided theoretical guarantees in terms of what they define as the myopic exploration gap, which is a problem-dependent quantity. Using $\epsilon$-greedy strategy, they achieve the optimal regret rate of $\tilde{O}(T^{2/3})$ for contextual bandits, which matches the rate we obtain for our algorithm when the RKHS is finite-dimensional. \citet{chen2021statistical} study the $\epsilon$-greedy strategy for the linear bandit problem (parametric MABC with a linear regression framework) and establish the regret rate of $\tilde{O}(\sqrt{T})$, which our work recovers when the RKHS is finite-dimensional and under the margin condition.

\subsection{Contributions}
The main contribution of this work is in developing a theoretical understanding of kernelized $\epsilon$-greedy algorithm in a nonparametric multi-armed bandits framework. More specifically: 
\begin{itemize}
 \item We propose an inverse probability weighted kernel ridge (IPWKR) regression type of online estimator for the mean reward functions in Section~\ref{sec: IPWR_estimator}, whose implementation details are provided in Section~\ref{sec: implementation}. Such estimators have been studied in the context of linear bandits for mitigating the problem of estimation bias in adaptively collected data \citep{dimakopoulou2019balanced}. Using IPWKR estimator, we propose a kernel $\epsilon$-greedy algorithm for multi-armed bandits with covariates and provide regret bounds.
 We highlight here that the inverse probability weights appearing in IPWKR are deterministic known quantities that involve user-determined exploration probabilities.
 \item In Section~\ref{sec: estimation_error}, we establish upper bounds on the estimation error (see Theorem~\ref{thm: theorem1}) for the proposed IPWKR estimator, which we specialize to the setting of finite-dimensional RKHS in Theorem~\ref{thm: theorem_finiteRKHS}. These results hold for specific conditions on the exploration probability sequence $\{\epsilon_t\}_t$ and choices of the regularization parameter sequence $\{\lambda_t\}_t$. As a comparison, in the linear bandit framework, \citet{chen2021statistical} propose an online weighted least squares (WLS) estimator similar to our proposed IPWKR estimator but without regularization. In fact, when the kernel is linear, our proposed estimator can be seen as a dualized representation of their online WLS estimator if $\lambda=0$. In Section \ref{sec: estimation_error} (see Theorem~\ref{thm: theorem_finiteRKHS} and Remark~\ref{rem:0}), we show that our consistency result is stronger than that in \citep[Proposition 4.1]{chen2021statistical} in the sense that we achieve consistency in estimation for large ranges of the exploration probabilities, $\{\epsilon_t\}_t$, i.e.,  for decaying choices of $\{\epsilon_t\}_t$ faster than those considered in \citet{chen2021statistical}. Interestingly, for these choices of $\{\epsilon_t\}_t$, we obtain the same convergence rate as in \citet{chen2021statistical}---obtained for linear MABC---even when the true regression functions are non-linear, as long as the RKHS is finite-dimensional. Also, we highlight that, compared to the existing literature \citep{valko2013finite, zenati2022efficient}, in a finite-dimensional setting, our analysis provides an {explicit time-dependent choice} for regularization parameter, $\{\lambda_t\}_t$, circumventing the need to tune it when implementing the algorithm.
     \item In Theorem~\ref{thm: regret_infinite_case_pre} of Section \ref{sec: regret_analysis}, we establish finite-time regret bounds for kernel $\epsilon$-greedy strategy for the MABC problem with finitely many arms when the mean reward functions are assumed to be in an RKHS. These regret bounds are sub-linear for all choices of bounded, positive definite, and symmetric kernels. Compared to the results in the literature, this is a significant result since the 
     Mat\'ern kernel can lead to linear regret for some choices of the kernel parameters \citep{vakili2021open, scarlett2017lower}. In fact, our finite-time regret bound matches the state-of-the-art upper bound for $\epsilon$-greedy strategy when the RKHS is finite-dimensional (Theorem~\ref{thm: regret_finite_case_pre}), both with and without the \emph{margin condition}---the margin conditions ensures that there is sufficient gap between the rewards for different arms---, which are $\tilde{O}(T^{1/2})$  and $\tilde{O}(T^{2/3})$, respectively. When the RKHS is infinite-dimensional, the bounds are controlled by the intrinsic dimensionality of the RKHS---in turn, controlled by the decay rate of the eigenvalues of the covariance operator---and a source condition, which captures the smoothness of the target.
 \end{itemize}

\subsection{Related work}
Multi-Armed Bandits with Covariates (MABC) and contextual bandits in a parametric framework, especially linear bandits have been extensively studied in the bandit literature \citep{lattimore2020bandit}. In a linear bandit framework, every arm corresponds to a known, finite-dimensional context, and its expected reward is assumed to be an unknown linear function of its context.
Upper Confidence Bound (UCB) and Thompson sampling strategies are the most commonly used bandit algorithms. While the UCB uses the \textit{optimism in the face of uncertainty} idea where the exploration-exploitation trade-off is balanced by creating confidence sets on the unknown reward functions, Thompson sampling is a randomized algorithm and a popular heuristic based on Bayesian ideas. Both these types of algorithms are extremely popular, well-studied, and enjoy tight regret guarantees \citep{dani2008stochastic,li2010contextual,NIPS2011_e1d5be1c,pmlr-v22-agarwal12}. On the other hand, there are relatively fewer works on $\epsilon$-greedy strategy except for the epoch-greedy version of \cite{langford2007epoch} and the mostly exploration-free algorithms based on \cite{bastani2018mostly}, which have gained a lot of popularity recently. In a nonparametric framework, $\epsilon$-greedy strategies and modifications of the same have been studied by \citet{yang2002randomized, qian2016kernel, ARYA2020108818, qian2023adaptive}.

In the contextual bandit framework, there are also two types of setups that are considered in the literature: (a) continuous action space linear bandits: action (arm) space is the same as the context (covariate) space \citep{NIPS2011_e1d5be1c,dani2008stochastic,rusmevichientong2010linearly} and (b) finite action space linear bandits: finite action (arm) set which is different from the context space \citep{li2010contextual, chen2021statistical}.  
Under these two steps, infinite-dimensional extensions of contexts and/or arms have been considered in both the frequentist and Bayesian perspectives. In the frequentist perspective, \cite{valko2013finite} propose a \textit{KernelUCB} algorithm for finite action space, which is obtained by kernelizing the \textit{LinUCB} and \textit{SupLinUCB} algorithms of \cite{li2010contextual, chu2011contextual}, and \cite{auer2002finite}. They give a
bound on the regret in terms of a data-dependent quantity, the effective dimension, $\tilde{d}$. In the Bayesian perspective, \cite{srinivas2010gaussian} propose the Gaussian Process (GP)-UCB for the context-free stochastic bandit problem, which assumes that the reward function is drawn from a GP prior. \cite{krause2011contextual} generalize the GP-UCB algorithm by taking context information into account in the decision-making process. \cite{gopalan2014thompson} study the stochastic multi-armed bandit problem with continuous action space and propose an improved version of GP-UCB, which they call \textit{IGP-UCB}. In this work, they also propose a nonparametric version of Thompson sampling, \textit{GP-Thompson sampling}. The regret bounds in all of the Gaussian process bandits line of work are in terms of the quantity, $\gamma_t$,  which is the maximum information gain at time $t$ and depends on the choice of kernels that define the underlying RKHS containing the reward functions. Moreover, in \cite{valko2013finite}, the regret rates are controlled by the intrinsic dimension,  $\tilde{d}$, which is shown to be closely related to $\gamma_t$.  \cite{zhou2020neural} propose a neural-net-based algorithm, called NeuralUCB, which uses a neural network-based random feature mapping to construct an upper confidence bound (UCB). More recently, \cite{zenati2022efficient} propose an efficient contextual UCB type  algorithm for computational efficiency. This algorithm relies on incremental Nystr\"om approximations of the joint kernel embedding of contexts and actions. Furthermore, \cite{janz2020bandit} and \cite{vakili2021information} provide improved regret bounds for GP bandits with M\'atern kernels. \cite{scarlett2017lower} and \cite{cai2021lower} provide lower bounds for the Gaussian Process bandit optimization problem with squared exponential and M\'atern kernel for the contextual bandit problem with continuous action space. Another approach for kernelized contextual bandits is related to experimental design and also aims at optimal pure exploration in kernel bandits as has been studied by \cite{camilleri2021high} and \cite{zhu2021pure}. A partial list of some of the other unrelated work in nonparametric MABC/contextual bandit problem includes \citet{yang2002randomized, rigollet2010nonparametric, magureanu2014lipschitz, hu2020smooth, kleinberg2008multi, slivkins2011contextual, zhou2020neural, liu2023adaptation, whitehouse2023sublinear}. 

While UCB and Thompson sampling algorithms for kernel contextual bandits have received considerable attention in the recent past, to the best of our knowledge, the kernelized version of the $\epsilon$-greedy algorithm has not been studied previously. 
{Note that our approach is inspired by the nonparametric bandit framework studied by \citep{rigollet2010nonparametric, yang2002randomized}, diverging from the conventional `kernel bandits' modeling framework. The key distinctions encompass two fundamental aspects:
1) we operate under the assumption of independent and identically distributed (i.i.d.) covariates. This differs from the standard `kernel bandits' framework, where the covariates or contexts depend on the arms.
2) In our nonparametric model, we adopt distinct regression models for each of the $K$ arms, resulting in the need to estimate $K$ different mean reward functions. In contrast, the `kernel bandits' framework \citep{valko2013finite} features a singular model with a sole mean reward function `$f$' encapsulating the relationship between rewards and arm-dependent contexts.
The suitability of either model depends on the intended application. Our framework finds its motivation in scenarios that involve treatment decision-making. Here, the context encapsulates patient characteristics that remain consistent across treatments. Modeling the relationship between responses and covariates separately for each treatment aligns with the nature of decision-making in this practical setting.}
 
  Our main contribution is a theoretical analysis of this kernelized $\epsilon$-greedy approach validated by some numerical results. The regret bounds we achieve are different from the previous line of work, as our results do not depend on the data-dependent quantities such as $\tilde{d}$ in \cite{valko2013finite} or maximum information gain $\gamma_t$ as in the GP-bandits line of work. One can quantify this information gain for a specific kernel choice and find the corresponding regret rate, see \cite{scarlett2017lower}. Therefore, for some choices of kernel parameters for the M\'atern kernel, it has been observed that one could obtain a linear cumulative regret. On the other hand, our results depend on the intrinsic dimensionality of the RKHS stemming from the assumption on the rate of the  eigenvalue decay for the covariance operator and the source condition for the space in which the true mean reward functions are assumed to belong. As a result, we always obtain a sub-linear regret rate irrespective of the kernel choice. Note that, the $\epsilon$-greedy algorithm that we propose uses an inverse probability-weighted online kernel ridge regression estimator. The inverse probability weighting is reminiscent of the inverse propensity score weighted algorithms considered in parametric frameworks to handle model misspecifications.  These are known as balanced bandit algorithms \citep{dimakopoulou2019balanced} where  each observation is divided by its propensity score to correct for estimation bias.
\cite{dimakopoulou2019balanced} propose the balanced linear UCB and balanced Thompson sampling algorithms, establish their corresponding regret rates, and assess empirical performances under model misspecifications. \cite{bogunovic2021misspecified} study misspecified Gaussian Process bandit optimization, but for continuous action space and they establish regret bounds in terms of the amount of model misspecification.  \cite{chen2021statistical} consider an `balanced' (weighted) linear $\epsilon$-greedy strategy for handling model-misspecification with weighted least squares estimator. Our work can be thought of as an extension to their work offering flexible and nonparametric modeling for the relationship between the rewards and covariates, where we study a kernelized version of the weighted linear $\epsilon$-greedy algorithm. 
\subsection{Organization }
The rest of the paper is organized as follows. In Section \ref{sec:notations}, we define the notations used in the rest of the paper. In Section \ref{sec: setup}, we introduce the setting of MABC and the online regression framework, along with the definition of regret that is used to assess the performance of the proposed algorithm. The kernel $\epsilon$-greedy strategy and the online kernel ridge-regression estimator are presented in Section \ref{sec: algorithm}. In Section \ref{sec: implementation}, we provide an implementable version of the regression estimator that is then employed in the proposed algorithm for empirical evaluation on synthetic data sets in Section \ref{sec: simulations}. The convergence rates for the estimation error and regret are provided in Sections \ref{sec: estimation_error} and \ref{sec: regret_analysis}, respectively, for both the finite and infinite-dimensional settings. 
Under an additional assumption of the margin condition, improved regret bounds are presented for the proposed algorithm in Section \ref{sec: margincondition}. Finally, all the proofs are provided in Section \ref{sec: proofs}.

\section{Background and problem setup} \label{sec: background}
In this section, we introduce the notations, followed by the problem setup of the sequential decision-making framework of MABC.
\subsection{Notations } \label{sec:notations}  
 For a Hilbert space $\mathcal{H}$,  $\langle f, g \rangle_{\mathcal{H}}$ denotes the inner product of $f,g\in \mathcal{H}$. We denote $\Vert\cdot\Vert$ or $\Vert\cdot\Vert_\mathcal{H}$ to denote the corresponding norm in $\mathcal{H}$. For $h \in \mathcal{H}$, we use $\normx{h}_\mathcal{H} = \sqrt{\langle h,h \rangle_{\mathcal{H}}}$ to denote the RKHS norm and $\Vert A\Vert_\infty$ denotes the operator norm of a bounded operator $A$. For operators $A$ and $B$ on $\mathcal{H}$, $A \preceq B$ if and only if $B - A$ is a positive definite operator. For two real numbers $x$ and $y$, $x \lesssim y$  denotes that  $x$ is less than or equal to $y$ up to a constant factor, and $\otimes$ denotes the tensor product. The notation $X \perp Y$ for two random variables, $X$ and $Y$, translates to $X$ is independent of $Y$. $\tilde{O}(\cdot)$ denotes the order of approximation (big-$O$) with some additional constant terms or terms of logarithmic order in time. 
  
\subsection{Problem Setup}
\label{sec: setup}
In the MABC problem with finitely many \textit{arms}, the decision-maker has $L \in \mathbb{N}$ competing choices of arms (or actions), say $\mathcal{A}:= \{1, \hdots, L\}$, and have to choose arms sequentially over time, $\mathcal{T} = \{1,\hdots, T\}$, while using the contextual information available at each time point, with $T$ being the horizon. The contextual information can be thought of as patient characteristics in a treatment allocation problem or user information in a recommender system application. That is, at each time point $t \in \mathcal{T}$, the decision-maker observes contextual information (covariate), $X_t \in \mathbb{R}^{d}$, from an underlying probability distribution $\mathcal{P}_X$. Now, based on the available information until time $t$, the decision-maker then chooses an arm $a_t$ from a finite set of arms, $\mathcal{A} = \{1, \hdots, L\}$. Choosing (pulling) the arm $a_t$ results in a \textit{reward}, $y_t \in \mathbb{R}$. The reward can be thought of as a quantitative outcome of assigning the arm at that time. For instance, it could mean the amount of benefit caused by assigning a particular treatment to a patient at a given time. 

 In order to make informed decisions, the decision-maker must understand the relationship between the rewards, $\{y_t\}_{t=1}^T$, and covariates, $\{X_t\}_{t = 1}^T$, for each arm in $\{1,\hdots,L\}$. This relationship, usually stochastic in nature, can naturally be formulated as a regression problem. 
 In this work, we assume a nonparametric regression framework to model this relationship. 
 For each arm, $i \in 1,\hdots, L$, we consider the following regression model:
\begin{align}
y_t = f_i(X_t)+ e_t, 
\label{eq: linear_model}
\end{align}
where the corresponding mean reward function $f_i \in \mathcal{H}$, with $\mathcal{H}$ being a reproducing kernel Hilbert space (RKHS) with reproducing kernel $k: \mathcal{X} \times \mathcal{X} \rightarrow \mathcal{H}$, where $\mathcal{X}$ is a separable topological space. We assume that $k$ is bounded and continuous, i.e., there exists a $\kappa > 0$, such  that $\sup_{x\in\mathcal{X}}k(x,x) \leq \kappa$. Note that, by the reproducing property, $f_i(x_t) = \langle f_i, k(\cdot, x_t) \rangle_{H}$.  We make the following model assumptions: 
\begin{assumption}
    \label{assump:independenterrors} Errors $\{e_t\}_t$ conditioned on an arm, i.e., $\{e_t|i\}_t$ are i.i.d.~random variables with mean 0 and finite variance,  $\sigma^2 := \E(e_t^2|i) < \infty$ for $i = 1,\hdots, L$.
\end{assumption}

\begin{assumption}
\label{assump: errorsindependentX}  $e_t \perp x_t| a_t = i$ for all $t = 1,\ldots,T$, and $i =  1,\ldots, L$.
\end{assumption}

Note that the above distributional assumptions on the error are weaker than those made in  \cite{chen2021statistical}, where the errors are assumed to be sub-Gaussian in addition to satisfying the assumptions \ref{assump:independenterrors} and \ref{assump: errorsindependentX}. 

In Section \ref{subsec: kernel_eps_greedy_orig}, we propose an allocation strategy or an algorithm, $\mathcal{B}$, for choosing the next arm based on the sequence of past information on arms played, the covariates, and the rewards obtained, respectively.
We will build online estimators for the mean reward functions, $f_i, i = 1,\hdots L$, in Section \ref{sec: IPWR_estimator}. Then, we use these estimators to make optimal decisions about arms in a sequential manner. Next, we formulate the notion of \textit{regret}, which is a standard way to assess the performance of MABC algorithms.

For covariate $X_t = x_t$, let, 
$$a^*_t:= \argmax_{a \in \{1,\hdots,L\}} f_a(x_t)$$ 
be the true best arm at time $t$ and $f_{a^*_t}(x_t)$ the corresponding best function value. 
Then, given the previously observed contexts, arms and rewards $\{(X_s, a_s, y_s),$ $s=1,\hdots, t-1\}$ and the current context $X_t = x_t$, a standard goal in an MABC problem is to choose an action $a_t$ in order to minimize the regret (see Definition \ref{def: Regret}) after $T$ rounds. Let $\mathcal{F}_t = \sigma \langle a_1, x_1, y_1, a_2, x_2, y_2, \hdots, a_t, x_t, y_t\rangle$ denote the sigma-field generated by all information on the covariates observed, arms pulled, and rewards obtained, respectively, until time $t$.

\begin{definition}
\label{def: Regret}
\textit{The instantaneous regret at time $t$ is $r_t(\mathcal{B}) := f_{a^*_t}(x_t) - f_{a_t}(x_t)$, where $a^*_t$ is the optimal arm at time $t$ and $a_t$ is the arm chosen by the bandit algorithm, $\mathcal{B}$, at time $t$. The cumulative regret $R_T(\mathcal{B})$ with horizon $T$ is defined as:
\begin{align*}
	R_T(\mathcal{B}):=  \sum_{t=1}^T (f_{a^*_t}(X_t)  - f_{a_t}(X_t)).
	\end{align*}}
\end{definition}	
Note that the regret, as defined above, is a random quantity. Thus, we are interested in finding an upper bound for regret in probability or in expectation.
Since our method aims at providing model flexibility by allowing us to discover a non-linear relationship between expected rewards and covariates in an RKHS while trying to achieve MABC designs which are less prone to problems of bias, we also study the estimation error. Let $\hat{f}_{a}$ denote the proposed estimator for $f_a$. The estimation error at time $t$ is defined as $\normx{\hat{f}_{a_t} - f_{a_t}}_\mathcal{H}$, where $a_t$ is the arm chosen by the algorithm.

\section{Kernel \texorpdfstring{$\epsilon$}--greedy algorithm \& IPWKR estimator}
\label{sec: algorithm}
A simple policy to make sequential decisions in the MABC framework is to be greedy and choose the arm yielding the highest estimated reward for that covariate. However, this could lead to under-exploration of some arms, thus adversely affecting the performance of the algorithm. A way around this is to use $\epsilon$-greedy, a randomized version of the greedy algorithm, which chooses the best arm with a large probability, i.e., $(1-\epsilon)$, and explores the remaining arms with a small probability, i.e., $\epsilon$. In the following, we propose a kernel $\epsilon$-greedy algorithm that sequentially makes decisions about which arms to play for the MABC problem as described in Section \ref{sec: setup}.

\subsection{Kernel \texorpdfstring{$\epsilon$}--greedy algorithm}
\label{subsec: kernel_eps_greedy_orig}
The proposed algorithm is a kernelized version of the popular $\epsilon$-greedy algorithm for the MABC problem. Let $\{\epsilon_t\}_t$ be a sequence of non-increasing probabilities, such that $\epsilon_t \rightarrow 0$ as $t \rightarrow \infty$. We denote $\hat{a}_t$ to be the arm chosen by the proposed algorithm at time $t$, as it depends on all previous data. Below, we describe the kernel $\epsilon$-greedy strategy. 
\begin{enumerate}
	\item \textbf{Initialize.} \label{alg:initialization} Randomly select among the $L$ arms up to time $t_0$ for $t = 1,2,\hdots, t_0$, such that at least one reward per arm is obtained by time $t_0$.
	\item \textbf{Estimate $f_i$.} \label{alg: estimate_f_step} At time $t_0$, construct regression estimators for the $L$ arms and denote them by $\hat{f}_{i,t_0}, i = 1,\hdots, L$.
	\item \label{alg: most_promising_arm_step} \textbf{Most promising arm at time $t$.} For $t = t_0+1$, observe covariate $X_t = x_t$ and define:
	\begin{align*}
		A_t = \argmax_{i \in \mathcal{A}} \hat{f}_{i,t-1}(x_t),  
	\end{align*}
	be the arm corresponding to the highest estimated value at the current covariate.
	\item \label{alg:eps_greedy_step} \textbf{$\epsilon$-greedy step.} For a non-increasing probability sequence $\{\epsilon_t\}_t$, the arm pulled is given by the following randomized scheme:
	\begin{align}
		\hat{a}_t = 
		\begin{cases}
A_t & \ \text{with probability } 1- \epsilon_t\\
\{1,\hdots,L\}\backslash A_t & \ \text{with probability } \frac{\epsilon_t}{L -1}
		\end{cases}. \label{eq: La_hat_def}
		\end{align}
  \item \label{alg: update_estimates}  \textbf{Update the estimators.} Corresponding to the arm pulled at time $t = t_0 + 1$, observe reward $Y_t$  and update $\hat{f}_{\hat{a}_t,t}$. For the remaining arms, $i \neq \hat{a}_t$, $\hat{f}_{i,t} = \hat{f}_{i,t-1}$.
  \item Repeat steps \ref{alg: most_promising_arm_step}-\ref{alg: update_estimates} for $t = t_0 + 2$ and so on up to time $T$.
\end{enumerate}
Note that step \ref{alg:initialization} is an initialization step, where we randomly assign the $L$ arms until time $t_0$, such that at least one reward is observed per arm by then. In step \ref{alg: estimate_f_step}, we construct regression estimators for each arm using the information gathered during the initialization phase. Step \ref{alg: estimate_f_step} is presented as a generic step as we do not describe how the regression estimator is constructed. We propose an inverse probability weighted kernel ridge regression estimator in Section \ref{sec: IPWR_estimator} and study the above algorithm for that specific estimator. In step \ref{alg: most_promising_arm_step}, at time $t = t_0 + 1$, we evaluate estimated mean reward functions at the covariate $X_{t}$ for each arm using the estimators constructed in step \ref{alg: estimate_f_step} and find the arm $A_t$ that maximizes the estimated mean reward. Note that, at this instant, we face the \textit{exploration-exploitation dilemma}. That is, we can either choose the most promising arm, $A_t$, based on the data available so far or explore the remaining arms, $a \neq A_t$. We use the $\epsilon$-greedy strategy in step \ref{alg:eps_greedy_step} to balance this trade-off. This is a randomization scheme where we choose the best promising arm $A_t$ with a larger probability $1-\epsilon_t$ and explore the other arms $a \neq A_t$ with the remaining probabilities $\epsilon_t/(L-1)$. We also assume that $\epsilon_t \leq (L-1)/L$ for $t > t_0$, so that $1 - \epsilon_t \geq \epsilon_t/(L-1)$. Note that, the exploration probabilities $\{\epsilon_t\}_t$ are chosen to be a decreasing sequence of probabilities converging to 0 as $t \rightarrow \infty$, hence exploiting more with time. This is because as we accumulate more data, we gain more confidence in our estimates for the mean rewards for each of the arms. Then, the same process is repeated sequentially until we hit the time horizon $T$. In Section \ref{sec: IPWR_estimator}, we propose an online kernel regression estimator, which we use in this algorithm. Note that we have a `hat' on the proposed arm notation, $\hat{a}_t$, to highlight that the choice of the arm is data dependent.

\subsection{Inverse probability weighted kernel ridge regression estimator}
\label{sec: IPWR_estimator}
In this section, we propose an online version of the kernel ridge regression estimator for the mean reward functions $f_i, i = 1,\hdots, L$. Recall, $\mathcal{F}_t = \sigma \langle \hat{a}_1, x_1, y_1, \hat{a}_2,$\\$x_2, y_2, \hdots, \hat{a}_t, x_t, y_t\rangle$ denotes the sigma-field generated by all information on the covariates observed, arms pulled, and rewards obtained, respectively, until time $t$.  To build an online kernel ridge regression estimator, we solve the following optimization problem with Tikhonov regularization,
\begin{align}
\hat{f}_{i,t} = \argmin_{f_i \in \mathcal{H}} \sum_{s=1}^{t} \dfrac{I\{\hat{a}_s = i\}}{P(\hat{a}_s = i| \mathcal{F}_{s-1},X_s)} (Y_s - \langle f_i, k(\cdot, X_s) \rangle_{\mathcal{H}})^2 + \lambda \normx{f_i}_\mathcal{H}^2, \label{eq: weighted_optimization}
\end{align}
where $\lambda>0$ is the regularization parameter, and  
$I\{\hat{a}_s = i\}$ denotes the indicator function which is 1 if the arm chosen by the algorithm at time $s$ is $i$ and is 0 otherwise. Note, the optimization in \eqref{eq: weighted_optimization} weighs each observation $(x_s, a, y_{a,s})$ in the history of arm $a$ by the inverse propensity score $w_{a,s} = 1/P(a|\mathcal{F}_{s-1}, X_s)$ and uses weighted regression to obtain
the estimate $\hat{f}_{i,t}$. We do this for handling biased estimation in sequential decision-making, particularly in the context of MABC. In practice, if the sequential decision-making algorithm tends to favor some arms over others due to the underlying exploration or exploitation strategy, the estimation of the reward function can become biased because the algorithm will observe some arms much more frequently than others \citep{dimakopoulou2019balanced}. This introduces bias in estimating the true reward function for the less frequently chosen arms. To address this issue, inverse propensity weighting \citep{imbens2015causal, dimakopoulou2019balanced} is used. The core idea is as follows: at each time step $s$, the observation $(X_s, a_s, Y_s)$  is weighted by the inverse probability that arm $a_s$ was selected, given the context $X_s$
and history $\mathcal{F}_{s-1}$. This weight corrects for the bias introduced by non-uniform arm selections. Essentially, it up-weights observations from arms that are less frequently selected.
 Using the same ideas as in kernel ridge regression, it is easy to verify that the solution to \eqref{eq: weighted_optimization}  is given by:
\begin{align}
	\hat{f}_{i,t} &= \left(\dfrac{1}{t} \sum_{s=1}^t \dfrac{I\{\hat{a}_s = i\}}{P(\hat{a}_s = i| \mathcal{F}_{s-1}, X_s)} k(\cdot, X_s)\otimes k(\cdot, X_s) + \lambda I  \right)^{-1} \nonumber \\
    & \quad \quad \quad \times \dfrac{1}{t} \sum_{s=1}^t \dfrac{I\{\hat{a}_s = i\}}{P(\hat{a}_s = i| \mathcal{F}_{s-1},X_s)} k(\cdot, X_s) Y_s \label{eq: Lbetahat_i},
\end{align}
for $i = 1,\hdots, L$. We will refer to this as the online Inverse Probability Weighted Kernel Ridge (IPWKR) regression estimator.  Note that, in the above algorithm, the probability weights $P(\hat{a}_s = i| \mathcal{F}_{s-1}, X_s)$ are known at time $s$. 
More specifically, since we know $A_s$ and $\epsilon_s$ at time step $s$, $P(\hat{a}_s = A_s| \mathcal{F}_{s-1}, X_s) = 1- \epsilon_s$ and $P(\hat{a}_s = a| \mathcal{F}_{s-1}, X_s) = \epsilon_s/(L-1)$ for $a \neq A_s$, therefore $\hat{f}_{i,s}$ for $i = 1, \hdots, L$ and for $s = 1, \hdots, t$ are data-determined estimates and can be used for estimation at the $(t + 1)$\ts{th} time. 
Note that the definitions of the arm pulled (see \eqref{eq: La_hat_def}) and the online IPWKR estimator in \eqref{eq: Lbetahat_i} depend on each other. Since the data are not independent, the consistency of the estimator does not follow immediately from the classical tools in kernel methods. Therefore, one of our contributions is in analyzing the estimation error associated with the online IPWKR estimator, and establishing its consistency and rate of convergence.
In the following, we rewrite the proposed IPWKR 
in an implementable form. 

\subsection{Implementation of kernel  \texorpdfstring{$\epsilon$}--greedy algorithm}
\label{sec: implementation}
In this section, we devise an implementable version of IPWKR. To this, we define the following.
\begin{itemize}
	\item Let $S_{t, X}: \mathcal{H} \rightarrow \mathbb{R}^t$ be the sampling operator, such that $$S_{t, X} f = \frac{1}{\sqrt{t}}[f(X_1), \hdots, f(X_t)]^\top.$$
	\item The reconstruction operator is given by $S_{t, X}^{*}: \mathbb{R}^t \rightarrow \mathcal{H}$, where $$S_{t, X}^* \underline{\alpha} = \frac{1}{\sqrt{t}} \sum_{s=1}^t \alpha_s k(\cdot, X_s),\,\,\,\underline{\alpha} \in \mathbb{R}^t.$$
	\item $ S_{t,X} S_{t,X}^* = \frac{K_t}{t}: \mathbb{R}^t \rightarrow \mathbb{R}^t$, where $K_t$ is the kernel/Gram matrix, i.e., $[K_t]_{ij}=k(X_i,X_j),\,\,i,j=1,\ldots,t$.
\end{itemize}  
Let $\Lambda_{i,t}$ be a diagonal matrix in $\mathbb{R}^{t\times t}$ with diagonal elements given by, $$\left\{w_{s,i} := \frac{I\{\hat{a}_s = i\}}{P(\hat{a}_s = i| \mathcal{F}_{s-1},X_s)}, s = 1,\hdots, t\right\}.$$ Then note that, $$\hat{\Sigma}_{i,t}:=\dfrac{1}{t} \sum_{s=1}^t \frac{I\{\hat{a}_s = i\}}{P(\hat{a}_s = i| \mathcal{F}_{s-1}, X_s)} k(\cdot, X_s)\otimes k(\cdot, X_s) = S_{t,X}^* \Lambda_{i,t} S_{t,X},$$ and
\begin{align*}
\hat{\Sigma}_{i,t} f = \dfrac{1}{t} \sum_{s=1}^t w_{s,i} f(X_s) k(\cdot, X_s) \ \text{for all} \ i = 1,\hdots, L.
\end{align*}
Let ${Y}_t = (y_1, \hdots, y_t)^\prime$. Then, the proposed estimator in \eqref{eq: Lbetahat_i} can be written as:
\begin{align*}
\hat{f}_{i,t} &= \dfrac{1}{\sqrt{t}} (S_{t,X}^* \Lambda_{i,t} S_{t,X} + \lambda I)^{-1} \dfrac{1}{\sqrt{t}}\sum_{s=1}^t w_{s,i} k(\cdot, X_s) y_s\\
&= \dfrac{1}{\sqrt{t}} (S_{t,X}^* \Lambda_{i,t} S_{t,X} + \lambda I)^{-1} S_{t,X}^* \Lambda_{i,t} {Y}_t \\
&= \dfrac{1}{\sqrt{t}} S_{t,X}^* (\Lambda_{i,t} S_{t,X} S_{t,X}^* + \lambda I)^{-1} \Lambda_{i,t} Y_t,
\end{align*}
where the last equality follows from the fact that,
\begin{align*}
(S_{t,X}^* \Lambda_{i,t} S_{t,X} + \lambda I)^{-1} S_{t,X}^* = S_{t,X}^* (\Lambda_{i,t} S_{t,X} S_{t,X}^* + \lambda I)^{-1}.
\end{align*}
Therefore we obtain
\begin{align*}
\hat{f}_{i,t} &= \dfrac{1}{\sqrt{t}} S_{t,X}^* \left(\Lambda_{i,t} \frac{K_t}{t} + \lambda I\right)^{-1} \Lambda_{i,t} Y_t, \ \text{for} \ i = 1,\hdots, L. 
\end{align*}
Then, using the definition of $S_{t,X}^*$, the estimated reward function value at $X_{t+1}$ for arm $i$ is given by:
\begin{align}
\hat{f}_{i,t} (X_{t+1}) &= \dfrac{1}{t} \bar{k}_{t+1}^\top \left(\Lambda_{i,t} \frac{K_t}{t} + \lambda_t I\right)^{-1} \Lambda_{i,t} Y_t \nonumber\\
&=  \bar{k}_{t+1}^\top (\Lambda_{i,t} {K_t} + t \lambda_t I)^{-1} \Lambda_{i,t} Y_t \ \text{for} \ i = 1,\hdots, L, \label{eq: fhat_implementable_estimate}
\end{align}
where, $\bar{k}_{t+1} = \left(k(X_1, X_{t+1}), k(X_2, X_{t+1}), \hdots, k(X_t, X_{t+1})\right)^\top$. Note that \eqref{eq: fhat_implementable_estimate} involves only inverting a $t \times t$ matrix and therefore this version of the estimator is implementable. In order to facilitate faster computation, we use SVD for finding the inverse in \eqref{eq: fhat_implementable_estimate}. A sketch of the kernel $\epsilon$-greedy algorithm that utilizes the implementable version of the estimator can be found in Algorithm \ref{algorithm: kernel_eps_greedy_implementable}.
\begin{algorithm}[t]
\begin{algorithmic}[1]
\State Randomly select arms $\hat{a}_1, \hat{a}_2, \hdots, \hat{a}_{t_0} \in \mathcal{A} = \{1,\hdots, L\}$ until each arm is selected at least once.
\For{$t = t_0 + 1, \hdots, T$}
\State Estimate $\hat{f}_{i,t-1}(X_t)$, for each $i = \mathcal{A}$ using \eqref{eq: fhat_implementable_estimate}.
\State Calculate the best-performing arm so far: $A_t = \argmax_{i \in \mathcal{A}} \hat{f}_{i,t-1}(X_t)$.
\State For a non-increasing exploration probability sequence $\{\epsilon_t, t  \geq 1\}$, the arm pulled is given by:
\begin{align*}
\hat{a}_t = 
\begin{cases}
A_t & \ \text{with probability} \ 1 - \epsilon_t\\
\{1, \hdots, L\} \backslash A_t & \ \text{with probability} \ \frac{\epsilon_t}{L-1}
\end{cases}.
\end{align*}
\State Observe reward $Y_t$ corresponding to $\hat{a}_t$. 
\State For $i = \hat{a}_t$, update $\hat{f}_{i,t}$ using \eqref{eq: fhat_implementable_estimate} and use $\hat{f}_{i,t} = \hat{f}_{i,t-1}$ for $i \in \mathcal{A} \backslash \hat{a}_t$.
\EndFor
\end{algorithmic}
\caption{Kernel $\epsilon$-greedy algorithm}
\label{algorithm: kernel_eps_greedy_implementable}
\end{algorithm}
\section{Estimation error: Convergence rates}
\label{sec: estimation_error}
 In this section, we present the theoretical results for the proposed algorithm, for which the proofs can be found in Section \ref{sec: proofs}.
We make the following assumptions throughout this paper whenever we are working under the assumption that the mean reward functions lie in an RKHS $\mathcal{H}$.
\begin{assumption}
\label{assump: Levaluedecay_rep}
$\eta_i(\Sigma) \le \bar{C} i^{-\alpha}, \alpha > 1$ where 
$\eta_i(\Sigma)$ denotes the $i^{th}$ eigenvalue of $\Sigma = \E(k(\cdot, X_s)\otimes k(\cdot, X_s))$ and $\bar{C}\in(0,\infty)$.
\end{assumption}
\begin{assumption}
\label{assump: Lsmoothness_cond_rep}
For all $i = 1,\hdots,L$, $f_i\in \text{Ran}(\Sigma^{\gamma_i})$, $0 < \gamma_i \leq \frac{1}{2}$, i.e., there exists $h \in \mathcal{H}$ such that $f_i = \Sigma^{\gamma_i} h$ for $i = 1,\hdots,L$.
\end{assumption}
Note that, \ref{assump: Levaluedecay_rep} implies that the effective dimension, $$N_{\Sigma, 1}(\lambda):= \tr \left( \Slinv \Sigma \right) \lesssim \lambda^{-1/\alpha},$$ which controls the complexity of $\mathcal{H}$ and \ref{assump: Lsmoothness_cond_rep} determines the smoothness of the true mean reward functions \citep{caponnetto2007optimal}.  
Next, in Theorem~\ref{thm: theorem1}, which is proved in Section \ref{proof: proof1}, we present an upper bound on the estimation error in probability.
\begin{theorem}
\label{thm: theorem1}
Suppose \ref{assump:independenterrors}--\ref{assump: Lsmoothness_cond_rep} hold and $\{\epsilon_s\}^t_{s=1}$ is such that for any $\delta>0$ and $t\ge 1$, 
\begin{align}
\lambda_{i,t} = \left[\dfrac{1}{\delta t^2} \left(\sum_{s=1}^t \dfrac{1}{\epsilon_s}\right) \right]^{\alpha/(2\gamma_i \alpha + \alpha + 1)},\,\,i=1,\ldots,L, \label{eq: Lthm_cond_lambda}
\end{align}
satisfies \begin{align}\lambda_{i,t} \ge \left[\dfrac{4 (L-1)\kappa A_1(\bar{C},\alpha)}{\delta t^2} \left(\sum_{s=1}^t \dfrac{1}{\epsilon_s}\right)\right]^{\alpha/(1+\alpha)}.\label{Eq:constraint-lambda}\end{align}
Then, the following holds with probability at least $1-2\delta$:
\begin{align}
\normx{ \hat{f}_{i,t} - f_i }_\mathcal{H} \le 
2\sqrt{2}\max\{C_0,C_i\}
\left[\dfrac{1}{\delta t^2} \left( \sum_{s=1}^t \frac{1}{\epsilon_s}\right)  \right]^{\gamma_i \alpha/ (2\gamma_i \alpha + \alpha + 1)}, \,\, i =1,\hdots,L, \label{eq: Lthm_1_result}
\end{align}
where $C_0=\sqrt{\sigma^2(L-1)A_1(\bar{C},\alpha)}$, $A_1(\bar{C},\alpha)=\bar{C}^{-1/\alpha}\int^\infty_0 (1+x^\alpha)^{-1}\,dx$, and $C_i = \normx{\Sigma^{-\gamma_i} f_i}_\mathcal{H}$.
\end{theorem}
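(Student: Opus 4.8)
The plan is to run the classical bias/variance analysis of kernel ridge regression, but to replace every appeal to independence by a martingale argument that is made available by the inverse-propensity weighting. Starting from the closed form \eqref{eq: Lbetahat_i} and writing, on the event $\{\hat a_s=i\}$, $Y_s=\langle f_i,k(\cdot,X_s)\rangle_{\mathcal H}+e_s$, the reproducing property gives $k(\cdot,X_s)Y_s = (k(\cdot,X_s)\otimes k(\cdot,X_s))f_i + k(\cdot,X_s)e_s$ on that event, so that with $w_{s,i}=I\{\hat a_s=i\}/P(\hat a_s=i\mid\mathcal F_{s-1},X_s)$ one obtains the decomposition
\[
\hat f_{i,t}-f_i = -\lambda(\hat\Sigma_{i,t}+\lambda I)^{-1}f_i + (\hat\Sigma_{i,t}+\lambda I)^{-1}\hat\xi_{i,t},\qquad \hat\xi_{i,t}=\tfrac1t\textstyle\sum_{s=1}^t w_{s,i}k(\cdot,X_s)e_s .
\]
The first term is the approximation (bias) term and the second the noise term. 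The feature I would exploit everywhere is that the weights are reciprocals of the known propensities, so $\E[w_{s,i}\mid\mathcal F_{s-1},X_s]=1$; combined with the i.i.d.\ covariates and Assumptions \ref{assump:independenterrors}--\ref{assump: errorsindependentX}, this yields $\E[w_{s,i}\,k(\cdot,X_s)\otimes k(\cdot,X_s)\mid\mathcal F_{s-1}]=\Sigma$ and $\E[w_{s,i}k(\cdot,X_s)e_s\mid\mathcal F_{s-1}]=0$. Hence both $\hat\Sigma_{i,t}-\Sigma$ and $\hat\xi_{i,t}$ are normalized sums of martingale differences, which is precisely what allows me to bypass the dependence created by the adaptive arm choices.

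For the noise term I would factor $\|(\hat\Sigma_{i,t}+\lambda I)^{-1}\hat\xi_{i,t}\|_{\mathcal H}\le\|(\hat\Sigma_{i,t}+\lambda I)^{-1}(\Sigma+\lambda I)^{1/2}\|_\infty\,\|(\Sigma+\lambda I)^{-1/2}\hat\xi_{i,t}\|_{\mathcal H}$ and control the second factor in mean square. By the martingale-difference property the cross terms vanish, so $\E\|(\Sigma+\lambda I)^{-1/2}\hat\xi_{i,t}\|_{\mathcal H}^2 = t^{-2}\sum_s\E\|(\Sigma+\lambda I)^{-1/2}w_{s,i}k(\cdot,X_s)e_s\|_{\mathcal H}^2$. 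Each summand factorizes: conditioning on the drawn arm gives $\E[w_{s,i}^2e_s^2\mid\mathcal F_{s-1},X_s]=\sigma^2/P(\hat a_s=i\mid\mathcal F_{s-1},X_s)\le\sigma^2(L-1)/\epsilon_s$, where I use the propensity lower bound $P(\hat a_s=i\mid\cdot)\ge\epsilon_s/(L-1)$ guaranteed by the $\epsilon$-greedy step \eqref{eq: La_hat_def}, and then $\E_X\|(\Sigma+\lambda I)^{-1/2}k(\cdot,X)\|_{\mathcal H}^2=N_{\Sigma,1}(\lambda)\le A_1(\bar C,\alpha)\lambda^{-1/\alpha}$ by \ref{assump: Levaluedecay_rep}. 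A Markov/Chebyshev step (this is what produces the $1/\delta$, rather than $\log(1/\delta)$, in the statement) gives, with probability at least $1-\delta$, $\|(\Sigma+\lambda I)^{-1/2}\hat\xi_{i,t}\|_{\mathcal H}^2\le C_0^2\lambda^{-1/\alpha}(\delta t^2)^{-1}\sum_s\epsilon_s^{-1}$ with $C_0^2=\sigma^2(L-1)A_1(\bar C,\alpha)$.

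Both the operator factor above and the bias term require comparing the empirical operator to its population version. I would set $\Delta=(\Sigma+\lambda I)^{-1/2}(\Sigma-\hat\Sigma_{i,t})(\Sigma+\lambda I)^{-1/2}$; on $\{\|\Delta\|_\infty\le\frac12\}$ one has $\|(\Sigma+\lambda I)^{1/2}(\hat\Sigma_{i,t}+\lambda I)^{-1}(\Sigma+\lambda I)^{1/2}\|_\infty\le2$, whence $\|(\hat\Sigma_{i,t}+\lambda I)^{-1}(\Sigma+\lambda I)^{1/2}\|_\infty\le\sqrt2\,\lambda^{-1/2}$. Inserting the source condition $f_i=\Sigma^{\gamma_i}h$ with $\|h\|_{\mathcal H}=C_i$ and the elementary spectral bound $\|(\Sigma+\lambda I)^{-1/2}\Sigma^{\gamma_i}\|_\infty\le\lambda^{\gamma_i-1/2}$ (valid since $\gamma_i\le\frac12$) then bounds the bias term by $\lambda\|(\hat\Sigma_{i,t}+\lambda I)^{-1}f_i\|_{\mathcal H}\le\sqrt2\,\lambda^{\gamma_i}C_i$. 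To control $\Delta$ I would again use the martingale structure, bounding $\E\|\Delta\|_\infty^2\le\E\|\Delta\|_{HS}^2=t^{-2}\sum_s\E\|M_s\|_{HS}^2$ for $M_s=(\Sigma+\lambda I)^{-1/2}(w_{s,i}k(\cdot,X_s)\otimes k(\cdot,X_s)-\Sigma)(\Sigma+\lambda I)^{-1/2}$; using $\|(\Sigma+\lambda I)^{-1/2}k(\cdot,X)\|_{\mathcal H}^2\le\kappa/\lambda$, the propensity bound and \ref{assump: Levaluedecay_rep} yields $\E\|\Delta\|_{HS}^2\le\kappa(L-1)A_1(\bar C,\alpha)\lambda^{-1-1/\alpha}(t^{-2}\sum_s\epsilon_s^{-1})$. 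Applying Markov and forcing this to be at most $\delta$ produces exactly the lower bound \eqref{Eq:constraint-lambda} on $\lambda_{i,t}$, so $\|\Delta\|_\infty\le\frac12$ holds with probability at least $1-\delta$.

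Finally I would intersect the two events (each of probability at least $1-\delta$, hence at least $1-2\delta$ jointly), add the bias and noise estimates to get $\|\hat f_{i,t}-f_i\|_{\mathcal H}\le\sqrt2\,\lambda^{\gamma_i}C_i+\sqrt2\,C_0\,\lambda^{-(\alpha+1)/(2\alpha)}(\delta t^2)^{-1/2}(\sum_s\epsilon_s^{-1})^{1/2}$, and substitute the balancing choice \eqref{eq: Lthm_cond_lambda} of $\lambda_{i,t}$, which equalizes the two exponents: a short computation shows both terms collapse to $[(\delta t^2)^{-1}\sum_s\epsilon_s^{-1}]^{\gamma_i\alpha/(2\gamma_i\alpha+\alpha+1)}$, and $C_i+C_0\le2\max\{C_0,C_i\}$ gives \eqref{eq: Lthm_1_result}. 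I expect the main obstacle to be not any individual estimate but the careful verification of the two martingale-difference structures under adaptively collected data: one must condition in the correct order (on $\mathcal F_{s-1}$, then on $X_s$ and the drawn arm) and combine the i.i.d.\ covariate assumption with \ref{assump:independenterrors}--\ref{assump: errorsindependentX} to confirm that inverse-propensity weighting exactly debiases both the weighted covariance operator and the weighted noise sum. Once this is established, the remaining steps are the standard spectral bias/variance estimates together with second-moment concentration.
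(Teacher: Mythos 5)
Your proposal is correct and follows essentially the same route as the paper's proof: the same bias/noise decomposition, the same comparison operator $\Delta$ (the paper's $B_t$) controlled via Chebyshev on its Hilbert--Schmidt norm to get $\mathcal{S}_1 \le \sqrt{2}$ on an event of probability $1-\delta$, the same second-moment bound on the weighted noise using $\E[w_{s,i}^2 e_s^2 \mid \mathcal{F}_{s-1}, X_s] \le \sigma^2(L-1)/\epsilon_s$ and $N_{\Sigma,1}(\lambda) \lesssim \lambda^{-1/\alpha}$, and the same balancing choice of $\lambda_{i,t}$ yielding the constant $2\sqrt{2}\max\{C_0, C_i\}$. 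The only differences are presentational: you split the bias and noise terms before applying the operator comparison (the paper keeps them together in its $\mathcal{S}_2$), and you invoke martingale-difference orthogonality where the paper explicitly expands the double sums and cancels the off-diagonal terms via iterated conditioning under \ref{assump:independenterrors}--\ref{assump: errorsindependentX}---the same underlying computation.
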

Note that the upper bound on the estimation error depends on the exploration probability sequence $\{\epsilon_s\}_{s = 1}^t$, and for consistent estimation, we require $\sum_{s=1}^t \epsilon^{-1}_s = o(t^2)$ as $t\rightarrow\infty$. Under this requirement of $\sum_{s=1}^t \epsilon^{-1}_s = o(t^2)$ as $t\rightarrow\infty$, it is easy to verify that \eqref{Eq:constraint-lambda} holds. 

In order to compare Theorem~\ref{thm: theorem1} with Proposition 4.1 in \cite{chen2021statistical}, we assume that the underlying RKHS, $\mathcal{H}$, is finite-dimensional, and make the following assumption instead of \ref{assump: Lsmoothness_cond_rep}:
\begin{assumption}
\label{assump: min_evalue}
The minimum eigenvalue of $\Sigma$, denoted as $\eta_{\emph{min}}(\Sigma)$ satisfies 
$\eta_{\emph{min}}(\Sigma) > \eta$ for some $\eta > 0$. 
\end{assumption}
Since \cite{chen2021statistical} study linear MABC which is equivalent to our approach when the kernel is linear, i.e., the corresponding RKHS is finite-dimensional, we would like to specialize Theorem~\ref{thm: theorem1} to finite-dimensional RKHS. This assumption of finite-dimensional RKHS is imposed through \ref{assump: min_evalue}, which is also assumed in \cite{chen2021statistical}.
\begin{theorem}
\label{thm: theorem_finiteRKHS}
Suppose \ref{assump:independenterrors}, \ref{assump: errorsindependentX} and \ref{assump: min_evalue} hold. For any $\delta>0$ and $t\ge 1$, suppose $\{\epsilon_s\}^t_{s=1}$ satisfies \begin{align}\frac{1}{t^2}\sum^t_{s=1}\frac{1}{\epsilon_s}\le \frac{\delta\eta}{4(L-1)d\kappa},\label{constraint-thm2}\end{align} where $d:=\emph{dim}(\mathcal{H})$. Then, for  $t\ge 1$,  
with the choice of
\begin{align*}
\lambda_{i,t} = \left[\dfrac{1}{ t^2} \left(\sum_{s=1}^t \dfrac{1}{\epsilon_s}\right) \right]^{1/2},\,\,i=1,\ldots,L, 
\end{align*}
the following holds with probability at least $1-2\delta$:
 \begin{align*}
\normx{\hat{f}_{i,t} - f_i }_\mathcal{H} \leq  \frac{4}{\sqrt{\delta}}\max\{\tilde{C}_0,\tilde{C}_i\} \left[\dfrac{1}{ t^2} \left( \sum_{s=1}^t \frac{1}{\epsilon_s}\right)  \right]^{1/2},\ i = 1, \hdots, L. 
\end{align*}
Moreover, 
 for $0\le \zeta < 1$,
 \begin{align}
    \E[\normx{\hat{f}_{i,t} - f_i}_{\mathcal{H}}^{1+\zeta}] & \leq B(\tilde{C}_0,\tilde{C}_i,\zeta,\eta)
    \left[\dfrac{1}{t^2} \sum_{s=1}^t \dfrac{1}{\epsilon_s} \right]^{\frac{1+\zeta}{2}},\,\,i=1,\ldots,L, \label{eq: estimation_bound_zeta_inexpectation} 
 \end{align}
 where $\tilde{C}_0 := \sqrt{\frac{(L-1) d\sigma^2}{\eta }}$ and $\tilde{C}_i:= \frac{\normx{f_i}_\mathcal{H}}{\eta}$. Here $B(\tilde{C}_0,\tilde{C}_i,\zeta,\eta)$ is a constant that depends only on its arguments and not on $t$ and $\{\epsilon_s\}_s$.
\end{theorem}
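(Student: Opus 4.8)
\section*{Proof proposal}

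The plan is to run the standard bias--variance decomposition for (kernel) ridge regression, but carried out entirely in the fixed operator geometry afforded by \ref{assump: min_evalue}; this lets me bypass the source-condition/eigenvalue-decay machinery behind Theorem~\ref{thm: theorem1} (indeed the rate here, exponent $1/2$ in the bracket, is faster than what that theorem would give with $\gamma_i=\tfrac12$, so this is a standalone argument rather than a corollary). Writing $\hat{\Sigma}_{i,t}=S_{t,X}^*\Lambda_{i,t}S_{t,X}$ and $\hat{g}_{i,t}=\frac1t\sum_s w_{s,i}k(\cdot,X_s)Y_s$, the first observation is that inverse-propensity weighting is unbiased: since $\E[w_{s,i}\mid\mathcal{F}_{s-1},X_s]=1$ and the $X_s$ are i.i.d.\ $\sim\mathcal{P}_X$, one gets $\E[\hat{\Sigma}_{i,t}]=\Sigma$. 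Substituting $Y_s=f_i(X_s)+e_s$ on $\{\hat a_s=i\}$ gives $\hat g_{i,t}=\hat{\Sigma}_{i,t}f_i+\hat{\xi}_{i,t}$ with noise $\hat{\xi}_{i,t}=\frac1t\sum_s w_{s,i}e_s k(\cdot,X_s)$, hence the decomposition
$$\hat f_{i,t}-f_i=-\lambda_{i,t}(\hat{\Sigma}_{i,t}+\lambda_{i,t}I)^{-1}f_i+(\hat{\Sigma}_{i,t}+\lambda_{i,t}I)^{-1}\hat{\xi}_{i,t},$$
a regularization (bias) term plus a noise (variance) term.

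Second, I would control the empirical operator through the event $G:=\{\Vert\hat{\Sigma}_{i,t}-\Sigma\Vert_\infty\le\eta/2\}$. A second-moment bound on $\hat{\Sigma}_{i,t}-\Sigma$ --- exploiting the martingale-difference structure, the deterministic weight bound $1/P(\hat a_s=i\mid\mathcal{F}_{s-1},X_s)\le(L-1)/\epsilon_s$, and $\sup_x k(x,x)\le\kappa$ --- bounds $\E\Vert\hat{\Sigma}_{i,t}-\Sigma\Vert^2_{HS}$ by $\frac1{t^2}\sum_s\epsilon_s^{-1}$ times a constant in $L,\kappa,\eta,d$, and a Chebyshev estimate shows that constraint \eqref{constraint-thm2} is calibrated precisely to force $P(G)\ge1-\delta$. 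On $G$, Weyl's inequality and \ref{assump: min_evalue} give $\hat{\Sigma}_{i,t}+\lambda_{i,t}I\succeq\frac{\eta}{2}I$, hence $\Vert(\hat{\Sigma}_{i,t}+\lambda_{i,t}I)^{-1}\Vert_\infty\le2/\eta$, and, writing $(\hat{\Sigma}_{i,t}+\lambda_{i,t}I)^{-1}(\Sigma+\lambda_{i,t}I)=I+(\hat{\Sigma}_{i,t}+\lambda_{i,t}I)^{-1}(\Sigma-\hat{\Sigma}_{i,t})$, the relative bound $\Vert(\hat{\Sigma}_{i,t}+\lambda_{i,t}I)^{-1}(\Sigma+\lambda_{i,t}I)\Vert_\infty\le2$. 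The first inequality controls the bias term deterministically on $G$ by $\frac{2\lambda_{i,t}}{\eta}\Vert f_i\Vert_\mathcal{H}=2\tilde C_i\lambda_{i,t}$.

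Third, I would transfer the variance term to the population preconditioner: on $G$, $\Vert(\hat{\Sigma}_{i,t}+\lambda_{i,t}I)^{-1}\hat{\xi}_{i,t}\Vert_\mathcal{H}\le2\Vert(\Sigma+\lambda_{i,t}I)^{-1}\hat{\xi}_{i,t}\Vert_\mathcal{H}$ by the relative bound. Expanding the second moment, the cross terms vanish by Assumptions \ref{assump:independenterrors}--\ref{assump: errorsindependentX} (errors are conditionally mean zero and independent of the covariate given the arm), leaving $\E\Vert(\Sigma+\lambda_{i,t}I)^{-1}\hat{\xi}_{i,t}\Vert^2_\mathcal{H}\le\frac{(L-1)\sigma^2}{t^2}\sum_s\epsilon_s^{-1}\,\tr\!\big((\Sigma+\lambda_{i,t}I)^{-2}\Sigma\big)$; since every eigenvalue of $\Sigma$ exceeds $\eta$, $\tr((\Sigma+\lambda_{i,t}I)^{-2}\Sigma)\le d/\eta$, producing exactly $\tilde C_0^2\lambda_{i,t}^2$. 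Chebyshev against a second $\delta$ then gives $\Vert(\Sigma+\lambda_{i,t}I)^{-1}\hat{\xi}_{i,t}\Vert_\mathcal{H}\le\tilde C_0\lambda_{i,t}/\sqrt{\delta}$ with probability $1-\delta$. Intersecting with $G$ (union bound, probability $1-2\delta$) and summing the bias and variance contributions yields $\frac{4}{\sqrt{\delta}}\max\{\tilde C_0,\tilde C_i\}\lambda_{i,t}$, since $\sqrt{\delta}\le1$.

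Finally, for the moment bound \eqref{eq: estimation_bound_zeta_inexpectation} I would split $\E\Vert\hat f_{i,t}-f_i\Vert_\mathcal{H}^{1+\zeta}$ over $G$ and $G^c$. On $G$ the bias is deterministic and, because $(1+\zeta)/2<1$, Jensen applied to the two second-moment estimates above gives a contribution of order $\lambda_{i,t}^{1+\zeta}$. On $G^c$ I would use the always-valid control $\Vert(\hat{\Sigma}_{i,t}+\lambda_{i,t}I)^{-1}\Vert_\infty\le\lambda_{i,t}^{-1}$, so $\Vert\hat f_{i,t}-f_i\Vert_\mathcal{H}\le\Vert f_i\Vert_\mathcal{H}+\Vert\hat{\xi}_{i,t}\Vert_\mathcal{H}/\lambda_{i,t}$ whose second moment is $O(1)$, and combine it with a tail estimate for $P(G^c)$; here the deterministic boundedness of the weights $w_{s,i}\le(L-1)/\epsilon_s$ is what lets me drive the $G^c$ contribution to lower order. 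The restriction $0\le\zeta<1$ is precisely the integrability threshold: the errors are assumed only to have a finite second moment (Assumption \ref{assump:independenterrors}), so $\hat f_{i,t}-f_i$ has just under two finite moments. I expect this last step --- getting the $G^c$ contribution down to the claimed $\lambda_{i,t}^{1+\zeta}$ order, rather than the $\lambda_{i,t}^{1-\zeta}$ that a naive H\"older split produces --- to be the main technical obstacle, requiring a concentration bound for $\hat{\Sigma}_{i,t}-\Sigma$ sharper than the second-moment one used for the high-probability statement.
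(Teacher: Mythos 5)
Your high-probability bound follows the paper's own proof in all essentials: the same bias/variance decomposition $\hat f_{i,t}-f_i=-\lambda_{i,t}(\hat{\Sigma}_{i,t}+\lambda_{i,t} I)^{-1}f_i+(\hat{\Sigma}_{i,t}+\lambda_{i,t} I)^{-1}\hat{\xi}_{i,t}$, Chebyshev on a Hilbert--Schmidt norm to control the weighted empirical covariance, transfer of the noise term to the population preconditioner, cross terms killed by the martingale structure together with \ref{assump:independenterrors} and \ref{assump: errorsindependentX}, and a union bound over the two Chebyshev events. The one concrete defect is your calibration claim for $G=\{\Vert\hat{\Sigma}_{i,t}-\Sigma\Vert_\infty\le \eta/2\}$: Chebyshev on the unnormalized difference gives $\E\Vert\hat{\Sigma}_{i,t}-\Sigma\Vert_{HS}^2\le (L-1)\kappa^2\lambda_{i,t}^2$, hence $P(G^c)\le \frac{4(L-1)\kappa^2}{\eta^2}\lambda_{i,t}^2\le\frac{\kappa}{d\eta}\,\delta$ under \eqref{constraint-thm2}; since $d\eta\le\tr\Sigma\le\kappa$, the factor $\kappa/(d\eta)$ is at least $1$, so \eqref{constraint-thm2} does \emph{not} force $P(G)\ge 1-\delta$ for your event. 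The paper instead applies Chebyshev to the normalized operator $B_t=(\Sigma+\lambda I)^{-1/2}(\Sigma-\hat{\Sigma}_{i,t})(\Sigma+\lambda I)^{-1/2}$, whose second moment is at most $\frac{(L-1)d\kappa}{\eta}\lambda_{i,t}^2$ (this is exactly where $\tr((\Sigma+\lambda I)^{-1}\Sigma)\le d$ and $\Vert(\Sigma+\lambda I)^{-1}\Vert_\infty\le 1/\eta$ enter), which matches \eqref{constraint-thm2}; on $\{\Vert B_t\Vert_{HS}<1/2\}$ one has $\hat{\Sigma}_{i,t}+\lambda I\succeq\tfrac12(\Sigma+\lambda I)\succeq\tfrac{\eta}{2}I$, so both of your operator estimates survive (in the paper's square-root form), and the repair is only to swap the event.

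The genuine gap is in the moment bound \eqref{eq: estimation_bound_zeta_inexpectation}, exactly where you flag it, but the missing ingredient is not a sharper concentration inequality for $\hat{\Sigma}_{i,t}-\Sigma$: it is that one should not fix a single good event at all. The paper reads the just-established statement as a family of bounds indexed by $\delta$, inverts it into the pointwise tail estimate
\begin{align*}
P\left(\Vert\hat f_{i,t}-f_i\Vert_{\mathcal{H}}>\theta\right)\le \min\left\{1,\ \frac{4\bigl(\max\{\tilde C_0,\tilde C_i\}\bigr)^2}{\theta^2}\,\frac{1}{t^2}\sum_{s=1}^t\frac{1}{\epsilon_s}\right\},
\end{align*}
and then integrates the layer-cake identity $\E\Vert\hat f_{i,t}-f_i\Vert_{\mathcal{H}}^{1+\zeta}=(1+\zeta)\int_0^\infty\theta^{\zeta}P(\Vert\hat f_{i,t}-f_i\Vert_{\mathcal{H}}>\theta)\,d\theta$, splitting at the crossover $\theta\asymp\lambda_{i,t}$. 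Both pieces come out of order $\lambda_{i,t}^{1+\zeta}$ with a prefactor proportional to $1/(1-\zeta)$; the restriction $\zeta<1$ is exactly convergence of $\int^\infty\theta^{\zeta-2}\,d\theta$, consistent with your ``just under two moments'' reading. Because each level $\theta$ carries its own Chebyshev event, the bound at level $\theta$ automatically has the $\lambda_{i,t}^2/\theta^2$ scaling, and the $\lambda_{i,t}^{1-\zeta}$ loss from a H\"older split over a fixed $G^c$ never appears; indeed, as you correctly sensed, no argument that uses only $P(G^c)\lesssim\lambda_{i,t}^2$ and second moments on a \emph{fixed} bad event can beat $\lambda_{i,t}^{1-\zeta}$. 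One honest caveat: the inversion above is licensed by \eqref{constraint-thm2} only for $\theta$ below a fixed constant, since the implied $\delta(\theta)$ eventually violates the constraint for large $\theta$ --- a point the paper passes over silently --- but the tail-integration device is the intended proof and is what you should reproduce, rather than the refined matrix concentration you anticipated needing.
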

Note that in the above result, the choice of $\lambda_{i,t}$ is independent of $i$ and $\delta$, unlike in the infinite-dimensional case of Theorem~\ref{thm: theorem1}. Moreover, unlike in Theorem~\ref{thm: theorem1}, the choice of $\lambda_{i,t}$ in Theorem~\ref{thm: theorem_finiteRKHS} has an exponent of $\frac{1}{2}$ instead of the term depending on $\gamma_i$ and $\alpha$ in \eqref{eq: Lthm_1_result}. Clearly, the estimators are consistent if $\sum_{s=1}^t \epsilon^{-1}_s = o(t^2)$ as $t\rightarrow\infty$. Under this requirement on $\{\epsilon_s\}_s$, it is easy to verify that \eqref{constraint-thm2} holds.
The proof of Theorem \ref{thm: theorem_finiteRKHS} is provided in Section \ref{proof: proof2}. 

\begin{remark}\label{rem:0}
\cite{chen2021statistical} 
proposed a weighted online least squares estimator similar to the IPWKR estimator for the linear MABC problem with 2 arms ($L = 2$) and a finite-dimensional context space. However, unlike our estimator, they study an unregularized online weighted least squares (WLS) estimator. Under the assumptions of (i) bounded covariates, (ii) reliability of linear approximation, and (iii) minimum eigenvalue is bounded from below, they provide a bound (see Proposition 4.1) on the estimation error, which can be summarized to behave as 
\begin{align}
\tilde{O} \left(\left[\dfrac{d^2 \log(4d/\delta)}{t \epsilon_t^4} \right]^{1/2}\right) \label{eq: rate_Chenetal_estimation}
\end{align}
with probability at least $1-\delta$. On the other hand, our method provides a bound of \begin{align}
  \tilde{O} \left(\left[\dfrac{d}{\delta} \dfrac{1}{t^2} \Big(\sum_{s=1}^t \dfrac{1}{\epsilon_s}\Big) \right]^{1/2} \right),\label{eq: rate_finiteRKHS_effecDim}
 \end{align}
 which also holds with probability $1-\delta$. By comparing \eqref{eq: rate_Chenetal_estimation} and \eqref{eq: rate_finiteRKHS_effecDim}, it can be observed that the bound in \cite{chen2021statistical} holds with sub-Gaussian concentration while ours holds with polynomial concentration. Another key distinction is the requirement of $\frac{1}{\epsilon_t} = o(t^{1/4})$ in \eqref{eq: rate_Chenetal_estimation} vs. $\sum_{s=1}^t \epsilon^{-1}_s = o(t^2)$ in \eqref{eq: rate_finiteRKHS_effecDim} for the estimators to be consistent, as $t\rightarrow\infty$. Clearly, a sufficient condition for our estimator to be consistent is $\frac{1}{\epsilon_t} = o(t)$. 
Therefore, our result is stronger than that of \cite{chen2021statistical}, as the exploration probability can decay at a faster rate while still guaranteeing the consistency of our estimator. This is promising as for situations where the reward functions are relatively easier to learn, more exploitative strategies can significantly reduce the regret. Another key highlight of our work is that these improved results are obtained for any $f_i$ belonging to a finite-dimensional RKHS, that is not necessarily linear.
\end{remark}

\section{Regret analysis}
\label{sec: regret_analysis}
In this section, we construct upper bounds for the regret defined in Definition \ref{def: Regret} for the algorithm proposed in Section \ref{sec: algorithm} that involves using the IPWKR estimator studied in Section \ref{sec: IPWR_estimator}. 
Recall, $a^*_t = \argmax_{a \in \mathcal{A}} f_a(x_t),$ where $f_a(x_t) = \langle f_a, k(\cdot, x_t) \rangle_\mathcal{H}$. Let $A_t = \argmax_{a \in \mathcal{A}} \fh_a(x_t)$ and note that by definition of $A_t$, $\fh_{a^*_t}(X_t) \leq \fh_{A_t}(X_t)$. 
Then, the cumulative regret for the proposed algorithm up to some time horizon $T$ is given by,
\begin{align}
R_T &= \sum_{t=1}^{T} f_{a^*_t}(X_t) - f_{\hat{a}_t}(X_t)  = \sum_{t=1}^{t_0} f_{a^*_t}(X_t) - f_{\hat{a}_t}(X_t) + \sum_{t = t_0 + 1}^{T}f_{a^*_t}(X_t) - f_{\hat{a}_t}(X_t) \nonumber \\
& = \sum_{t=1}^{t_0} \langle  f_{a^*_t} - f_{\hat{a}_t}, k(\cdot, X_t) \rangle_\mathcal{H} + \sum_{t = t_0 + 1}^{T }f_{a^*_t}(X_t) - f_{\hat{a}_t}(X_t) \nonumber\\
& \leq  \sum_{t=1}^{t_0} \kappa \normx{f_{a^*_t} - f_{\hat{a}_t}}_{\mathcal{H}} +  \sum_{t = t_0 + 1}^{T}f_{a^*_t}(X_t) - f_{\hat{a}_t}(X_t)  \leq \Lambda t_0 +  \sum_{t = t_0 + 1}^{T }f_{a^*_t}(X_t) - f_{\hat{a}_t}(X_t) \nonumber \\
&= \Lambda t_0 +  \sum_{t = t_0 + 1}^{T } \left[ f_{a^*_t}(X_t) - \fh_{a^*_t}(X_t) + \fh_{a^*_t}(X_t) - f_{A_t}(X_t) + f_{A_t}(X_t) - f_{\ahat_t}(X_t) \right] \nonumber\\
& \leq \Lambda t_0 +  \sum_{t = t_0 + 1}^{T} \left[ f_{a^*_t}(X_t) - \fh_{a^*_t}(X_t) + \fh_{A_t}(X_t) - f_{A_t}(X_t) + f_{A_t}(X_t) - f_{\ahat_t}(X_t) \right]\nonumber\\
&\leq  \Lambda t_0 + 2 \underbrace{ \sum_{t = t_0 + 1}^{T }  \sup_{a \in \mathcal{A}} |(f_a(X_t) - \fh_{a,t}(X_t)) |}_{\text{Cumulative estimation error}} + \underbrace{ \sum_{t = t_0 + 1}^{T} |f_{A_t}(X_t) - f_{\ahat_t}(X_t)| }_{\text{Randomization error}} ,\label{eq: regret_breakup}
\end{align}
where $\Lambda:=\sup\{\normx{f_{a} - f_{a^\prime}}_\mathcal{H}:a,a'\in\mathcal{A},a\ne a'\}$, and for simplicity, we did not put the algorithm within parenthesis for $R_T$ (as in Definition~\ref{def: Regret}) though all the presented results are for the proposed kernel $\epsilon$-greedy algorithm. Note that the first term in \eqref{eq: regret_breakup} is the regret incurred due to the random initialization up to time $t_0$. 
We call the second term on the right-hand side in \eqref{eq: regret_breakup} as \textit{cumulative estimation error}, as it measures the error in estimating the function accumulated over time, and the third term as \textit{randomization error} since it measures the error incurred due to the randomization scheme ($\epsilon$-greedy in step \ref{alg:eps_greedy_step} of the proposed algorithm in Section \ref{sec: algorithm}). 
Note that, the initialization phase regret can be trivially bounded by $O(t_0)$, but the regret incurred during the post-initialization phase dominates over the regret incurred over the initialization phase. Hence, without the loss of generality, we set $t_0 = 0$ in the following results and in the corresponding proofs.

\begin{theorem}
\label{thm: regret_infinite_case_pre}
 Suppose \ref{assump:independenterrors}--\ref{assump: Lsmoothness_cond_rep} hold, $\sup_{\substack{a, a^\prime \in \mathcal{A} \\ a \neq a^\prime}}\normx{f_{a} - f_{a^\prime}}_\mathcal{H}<\infty$, and $\{\epsilon_s\}^T_{s=1}$ is such that for any $\delta>0$, and $T\ge 1$, 
\begin{align}
\lambda_{i,t} = \left[\dfrac{L}{\delta t^2} \left(\sum_{s=1}^t \dfrac{1}{\epsilon_s} \right)  \right]^{\alpha/(2\gamma_i \alpha + \alpha + 1)},\,\, i \in \{1,\hdots,L\},\,\,0<t\le T,\label{eq:choice_of_lam_regret}
\end{align}
satisfies $$\lambda_{i,t} \ge \left[\dfrac{4L (L-1)\kappa A_1(\bar{C},\alpha)}{\delta t^2} \left(\sum_{s=1}^t \dfrac{1}{\epsilon_s}\right)\right]^{\alpha/(1+\alpha)}\,\,i \in \{1,\hdots,L\},\,\,0<t\le T.$$ Define   $$  \Delta_t = \frac{L}{\delta t^2} \sum_{s=1}^t \frac{1}{\epsilon_s}.$$
 Then, for $\delta>0$, $1\le p\le\infty$ and $T\ge 1$, the following holds with probability 
 at least $1-2\delta$:
\begin{align}
R_T &\leq \kappa \Theta\sum_{t=1}^T \left(I\{\Delta_t < 1\}  \Delta_t^{\frac{(\min_{i \in \mathcal{A}} \gamma_i) \alpha}{2(\min_{i \in \mathcal{A}} \gamma_i) \alpha + \alpha + 1}} + I\{\Delta_t \geq 1\}  \Delta_t^{\frac{(\max_{i \in \mathcal{A}} \gamma_i) \alpha}{2(\max_{i \in \mathcal{A}} \gamma_i) \alpha + \alpha + 1}}\right)  \nonumber \\
& \quad + \kappa T^{1-\frac{1}{p}}\left[\sum_{t=1}^T \dfrac{\epsilon_t}{L-1} + \left\{\dfrac{1}{\delta} \sum_{t=1}^T \dfrac{\epsilon_t}{L-1} \right\}^{1/2} \right]^{1/p} \sup_{\substack{a, a^\prime \in \mathcal{A} \\ a \neq a^\prime}}\normx{f_{a} - f_{a^\prime}}_\mathcal{H}, \label{eq: final_regret_bound_thm3}
\end{align} where
$\Theta=4\sqrt{2}\max\{C_0,\max_{i\in\mathcal{A}}C_i\}$ with $C_0$, $C_i$ and $A_1(\bar{C},\alpha)$ being defined in 
Theorem~\ref{thm: theorem1}.
\end{theorem}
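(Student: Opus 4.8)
The plan is to control the two quantities isolated in the regret decomposition \eqref{eq: regret_breakup}, the cumulative estimation error and the randomization error, on a single high-probability event, having set $t_0=0$ as justified in the text. Throughout I would convert pointwise errors into RKHS-norm errors via the reproducing property and boundedness of the kernel: for any $g\in\mathcal{H}$, $|g(X_t)|=|\langle g,k(\cdot,X_t)\rangle_\mathcal{H}|\le\kappa\normx{g}_\mathcal{H}$, exactly as already used in \eqref{eq: regret_breakup}.

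For the cumulative estimation error $\sum_t\sup_a|f_a(X_t)-\fh_{a,t}(X_t)|$, I would bound each summand by $\kappa\sup_a\normx{\fh_{a,t}-f_a}_\mathcal{H}$ and then invoke Theorem~\ref{thm: theorem1} arm by arm. To make the per-arm bound hold simultaneously over all $L$ arms on one event, I would apply Theorem~\ref{thm: theorem1} with confidence $\delta/L$ and take a union bound; this is precisely what turns the bracket $\frac{1}{\delta t^2}\sum_s\epsilon_s^{-1}$ of Theorem~\ref{thm: theorem1} into $\Delta_t=\frac{L}{\delta t^2}\sum_s\epsilon_s^{-1}$, and the lower bound imposed on $\lambda_{i,t}$ is exactly hypothesis \eqref{Eq:constraint-lambda} (with the $L$ factor) required to apply the theorem for each arm. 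The one genuinely new step is handling the arm-dependent exponent. Writing $\beta(\gamma)=\frac{\gamma\alpha}{2\gamma\alpha+\alpha+1}$, a short monotonicity check shows $\beta$ is increasing in $\gamma$, so for $0<\Delta_t<1$ the map $\gamma\mapsto\Delta_t^{\beta(\gamma)}$ is maximized at $\gamma_i=\min_i\gamma_i$, while for $\Delta_t\ge1$ it is maximized at $\gamma_i=\max_i\gamma_i$. This dichotomy yields exactly the indicator-weighted sum in the first line of \eqref{eq: final_regret_bound_thm3}; summing over $t$ and carrying the factor $2\kappa\cdot2\sqrt{2}\max\{C_0,\max_iC_i\}$ from the decomposition and from Theorem~\ref{thm: theorem1} reproduces the prefactor $\kappa\Theta$.

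For the randomization error $\sum_t|f_{A_t}(X_t)-f_{\ahat_t}(X_t)|$, I would observe that each summand vanishes unless the $\epsilon$-greedy step explores, i.e.\ unless $\ahat_t\ne A_t$, and is otherwise at most $\kappa\Lambda$ with $\Lambda=\sup_{a\ne a'}\normx{f_a-f_{a'}}_\mathcal{H}$. Applying H\"older's inequality with exponent $p$ across the $T$ summands introduces the factor $T^{1-1/p}$ and reduces the task to the $\tfrac1p$-power of a count of exploration rounds, since the summand raised to the $p$-th power is bounded by $(\kappa\Lambda)^p$ times that count. That count is a sum of bounded indicators whose conditional probabilities given $\mathcal{F}_{t-1}$ and $X_t$ are the \emph{known} exploration probabilities ($\epsilon_t/(L-1)$ per non-greedy arm); after centering, these form a martingale-difference sequence, so a second-moment (Chebyshev-type) bound---chosen to keep the concentration polynomial, consistent with Theorem~\ref{thm: theorem1} and avoiding any sub-Gaussian assumption on the count---controls the sum by its mean plus a $\delta^{-1/2}$-scaled standard-deviation term, yielding the bracket in the second line of \eqref{eq: final_regret_bound_thm3}.

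Finally I would intersect the estimation event and the randomization event, allocating the confidence budget so that the union bound delivers the stated probability $1-2\delta$. I expect the principal obstacle to be the bookkeeping forced by the adaptive data collection: the pulled arm $\ahat_t$ depends on the estimators $\fh_{\cdot,t-1}$, which in turn depend on all past pulls, so the estimation and randomization terms are not independent and must be bounded on a common event. Managing this dependence while simultaneously tracking the distinct smoothness exponents $\gamma_i$ through the $\Delta_t<1$ versus $\Delta_t\ge1$ split, and keeping every tail bound at the polynomial (Chebyshev) level rather than the sub-Gaussian level, is where the care is concentrated; the remaining steps are routine applications of the reproducing property, H\"older's inequality, and Theorem~\ref{thm: theorem1}.
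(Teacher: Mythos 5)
Your proposal is correct and follows essentially the same route as the paper's own proof: the same decomposition \eqref{eq: regret_breakup}, Theorem~\ref{thm: theorem1} applied arm-by-arm with confidence $\delta/L$ plus a union bound (which produces $\Delta_t$ and, via monotonicity of $\gamma \mapsto \gamma\alpha/(2\gamma\alpha+\alpha+1)$, the $\Delta_t<1$ versus $\Delta_t\ge 1$ dichotomy), and H\"older's inequality combined with a Chebyshev bound on the exploration count $\sum_t I\{\hat{a}_t\neq A_t\}$, whose centered cross-terms vanish exactly as in the paper's martingale-style computation. The only differences are cosmetic (you invoke the stated Theorem~\ref{thm: theorem1} with rescaled $\delta$ where the paper inverts the underlying tail bound), so there is nothing further to add.
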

The first term in the r.h.s.~of \eqref{eq: final_regret_bound_thm3} corresponds to the estimation error accumulated by time $T$ and the second term corresponds to the randomization error. The cumulative estimation error bound follows from the estimation error analysis in Theorem~\ref{thm: theorem1}, after applying a union bound over the arms. Note that for a given sequence of exploration probability, the cumulative estimation error and the randomization error behave inversely to each other. 
Further, note that if $\sum^t_{s=1}\epsilon^{-1}_s = o(t^2)$ as $t\rightarrow\infty$, then clearly $\Delta_t < 1$ for some large enough $t$. Let $t_1 = \min_{t}I\{\Delta_t < 1\}$. Then, we can choose the initialization phase endpoint to be  $\tilde{t}_0 = \max\{t_0, t_1\}$, which means only the first term in the accumulated estimation error would contribute to the regret bound. This reflects that the estimation error incurred due to the arm with the lowest smoothness parameter for the respective reward function dominates. 
\begin{remark}\label{rem:1}
Suppose $\epsilon_t = t^{-\beta}$ for some $0 < \beta < 1$ and $t \in \mathbb{N}$, which satisfies the requirement that $\sum^t_{s=1}\epsilon^{-1}_s \lesssim t^{\beta+1}=o(t^2)$ as $t\rightarrow\infty$. Then under the assumptions of Theorem~\ref{thm: regret_infinite_case_pre}, for large enough $T$, \eqref{eq: final_regret_bound_thm3} reduces to 
\begin{equation}
R_T \lesssim 
T^{(\beta - 1)w + 1} + 
T^{1-\frac{\beta}{p}},\nonumber
\end{equation}
for the choice of $\lambda_{i,t}$ as in \eqref{eq:choice_of_lam_regret}  and $$w = {\frac{(\min_{i \in \mathcal{A}} \gamma_i) \alpha}{2(\min_{i \in \mathcal{A}} \gamma_i) \alpha + \alpha + 1}}.$$ 
Since $1\le p\le \infty$ is arbitrary, the best regret is achieved at $p=1$, yielding 
\begin{equation}
R_T \lesssim T^{(\beta - 1)w + 1} + T^{1-\beta}.\label{Eq:regret-rem}
\end{equation}
In \eqref{Eq:regret-rem}, the first term corresponds to an upper bound on the cumulative estimation error, which is increasing in $\beta$ and the second term corresponds to an upper bound on the randomization error, which is decreasing in $\beta$. 
By balancing these two terms, the optimal choice of $\beta$ is given by
\begin{align*}
	 \beta = \frac{w}{w+1}=\dfrac{(\min_{i\in\mathcal{A}}\gamma_i) \alpha}{3 (\min_{i \in \mathcal{A}} \gamma_i) \alpha + \alpha + 1}=:\beta^*. 
	 \end{align*}
  This means the optimal exploration probability sequence depends on the smoothness of the targets and the intrinsic dimensionality of $\mathcal{H}$, which is controlled by $\alpha$. Clearly, if $\beta>\beta^*$, i.e., exploitation is favored over exploration, the estimation error dominates the randomization error and the converse happens when $\beta<\beta^*$, i.e., exploration is favored over exploitation. Note that for any choice of $\gamma_i$ and $\alpha$, we have $\beta^*\le \frac{1}{5}$, which implies the growth rate of regret is at least of the order  $T^{4/5}$. 
\end{remark}
Next, in order to compare with the regret rate in Section 7.1 of \cite{chen2021statistical} for the $\epsilon$-greedy strategy, we specialize Theorem~\ref{thm: regret_infinite_case_pre} to a finite-dimensional setting. While the randomization error bound can be obtained similarly as the second term in \eqref{eq: final_regret_bound_thm3}, we modify the bounding for the cumulative estimation error (see the proof in Section \ref{proof: proof_thm4} for details).


\begin{theorem}
\label{thm: regret_finite_case_pre}
Suppose \ref{assump:independenterrors}, \ref{assump: errorsindependentX} and \ref{assump: min_evalue} hold, and $\sup_{\substack{a, a^\prime \in \mathcal{A} \\ a \neq a^\prime}}\normx{f_{a} - f_{a^\prime}}_\mathcal{H}<\infty$. For any $\delta>0$, suppose $\{\epsilon_s\}^T_{s=1}$ satisfies $$\frac{1}{t^2}\sum^t_{s=1}\frac{1}{\epsilon_s}\le \frac{\delta\eta}{4L(L-1)d\kappa}$$ for all $0<t\le T$, where $d:=\emph{dim}(\mathcal{H})$. Then, for 
\begin{align*}
\lambda_{i,t} =\left[ \dfrac{L}{ t^2}  \left(\sum_{s=1}^t \dfrac{1}{\epsilon_s}\right) \right]^{1/2},\,\,i \in \{1,\hdots,L\},\,\, 0<t\le T, 
\end{align*}
and for $\delta>0$, $1\le p\le\infty$, $T\ge 1$, the following holds with probability at least $1-2\delta$:
\begin{align}
R_T &\leq  8 \frac{\kappa}{\sqrt{\delta}} \max\{\tilde{C}_0, \tilde{C}_* \}\sum_{t=1}^T  \left[\dfrac{L}{t^2} \Big(\sum_{s=1}^t \dfrac{1}{\epsilon_s} \Big) \right]^{1/2}  \nonumber \\
& \quad + \kappa T^{1-\frac{1}{p}}\left[\sum_{t=1}^T \dfrac{\epsilon_t}{L-1} + \left\{\dfrac{1}{\delta} \sum_{t=1}^T \dfrac{\epsilon_t}{L-1}  \right\}^{1/2} \right]^{1/p} \sup_{\substack{a, a^\prime \in \mathcal{A} \\ a \neq a^\prime}}\normx{f_{a} - f_{a^\prime}}_\mathcal{H}, \label{eq: finiteRKHS_regret_bound}
\end{align}
where $\tilde{C}_0 = \sqrt{\frac{(L-1)d \sigma^2}{\eta}}$ and $\tilde{C}_* = \max_{1\leq i \leq L} \frac{\normx{f_i}_\mathcal{H}}{\eta}$.
\end{theorem}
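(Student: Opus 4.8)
The plan is to start from the regret decomposition in \eqref{eq: regret_breakup} with $t_0=0$, which already bounds $R_T$ by twice the \emph{cumulative estimation error} $\sum_{t=1}^{T}\sup_{a\in\mathcal{A}}|f_a(X_t)-\fh_{a,t}(X_t)|$ plus the \emph{randomization error} $\sum_{t=1}^{T}|f_{A_t}(X_t)-f_{\ahat_t}(X_t)|$. I would bound these two pieces separately on a common event and intersect them, so that the stated probability $1-2\delta$ is retained. Since \ref{assump: min_evalue} replaces \ref{assump: Lsmoothness_cond_rep} here, the estimation piece must be driven by Theorem~\ref{thm: theorem_finiteRKHS} rather than Theorem~\ref{thm: theorem1}; the randomization piece is structurally identical to the one in Theorem~\ref{thm: regret_infinite_case_pre}, so no new idea is needed there.

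For the cumulative estimation error I would first pass from the pointwise error to the RKHS-norm error: by the reproducing property and Cauchy--Schwarz, $|f_a(X_t)-\fh_{a,t}(X_t)|=|\langle f_a-\fh_{a,t},k(\cdot,X_t)\rangle_{\mathcal{H}}|\le\kappa\normx{f_a-\fh_{a,t}}_{\mathcal{H}}$, so that $\sup_{a\in\mathcal{A}}|f_a(X_t)-\fh_{a,t}(X_t)|\le\kappa\max_{1\le i\le L}\normx{\fh_{i,t}-f_i}_{\mathcal{H}}$. I would then invoke the finite-dimensional bound of Theorem~\ref{thm: theorem_finiteRKHS} for each arm and combine the $L$ per-arm statements by a union bound. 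Exactly as in the passage from Theorem~\ref{thm: theorem1} to Theorem~\ref{thm: regret_infinite_case_pre}, propagating the confidence level through this union bound replaces the single-arm quantities of Theorem~\ref{thm: theorem_finiteRKHS} by their $L$-inflated counterparts: the admissibility constraint on $\{\epsilon_s\}_s$ becomes $\frac{1}{t^2}\sum_s\epsilon_s^{-1}\le\frac{\delta\eta}{4L(L-1)d\kappa}$, the regularization becomes $\lambda_{i,t}=[\frac{L}{t^2}\sum_s\epsilon_s^{-1}]^{1/2}$, and each $\normx{\fh_{i,t}-f_i}_{\mathcal{H}}$ is bounded by $\frac{4}{\sqrt\delta}\max\{\tilde{C}_0,\tilde{C}_i\}[\frac{L}{t^2}\sum_s\epsilon_s^{-1}]^{1/2}$ simultaneously for all arms. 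Summing over $t=1,\dots,T$, using $\max_i\max\{\tilde{C}_0,\tilde{C}_i\}=\max\{\tilde{C}_0,\tilde{C}_*\}$, and carrying the leading $2\kappa$ from the decomposition yields precisely the first term of \eqref{eq: finiteRKHS_regret_bound}.

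For the randomization error I would reuse the Theorem~\ref{thm: regret_infinite_case_pre} argument verbatim. Each summand is controlled by $|f_{A_t}(X_t)-f_{\ahat_t}(X_t)|\le\kappa\,\sup_{a\ne a'}\normx{f_a-f_{a'}}_{\mathcal{H}}\,I\{\ahat_t\ne A_t\}$ through the reproducing property. Hölder's inequality with conjugate exponents $p$ and $p/(p-1)$, together with the identity $I\{\ahat_t\ne A_t\}^p=I\{\ahat_t\ne A_t\}$, then produces the factor $T^{1-1/p}$ and reduces the task to bounding $(\sum_{t}I\{\ahat_t\ne A_t\})^{1/p}$. Since each non-greedy arm is played with the known conditional probability $\epsilon_t/(L-1)$, this count is a sum of bounded, conditionally-Bernoulli increments whose predictable mean and variance are governed by $\sum_t\epsilon_t/(L-1)$; a Freedman/Bernstein-type martingale inequality then controls it by $\sum_t\frac{\epsilon_t}{L-1}+\{\frac{1}{\delta}\sum_t\frac{\epsilon_t}{L-1}\}^{1/2}$, giving the second term of \eqref{eq: finiteRKHS_regret_bound} after reinstating the supremum constant.

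The main obstacle is the adaptivity of the data: both $\ahat_t$ and the estimators $\fh_{i,t}$ depend on the entire past, so neither piece follows from i.i.d.\ concentration. The hard analytic work has, however, already been absorbed into Theorem~\ref{thm: theorem_finiteRKHS} via the inverse-probability weighting and the operator concentration underlying it; consequently the delicate steps that remain are bookkeeping ones---allocating the failure probability across the $L$ arms (and across $t\le T$) so the total stays at most $2\delta$ while reproducing the exact constants, verifying that the $L$-inflated constraint on $\{\epsilon_s\}_s$ holds for every $t\le T$ whenever $\sum_s\epsilon_s^{-1}=o(t^2)$, and selecting $p$ (optimally $p=1$) to balance the cumulative estimation and randomization contributions.
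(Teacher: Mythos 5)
Your proposal matches the paper's proof essentially step for step: the same regret decomposition with $t_0=0$, the same passage from pointwise to RKHS-norm error via the reproducing property (giving the factor $\kappa$), the same union bound over the $L$ arms applied to the tail bound behind Theorem~\ref{thm: theorem_finiteRKHS} (inflating $\delta$ to $\delta/L$, which is exactly where the constraint on $\{\epsilon_s\}_s$, the choice $\lambda_{i,t}=[\frac{L}{t^2}\sum_s\epsilon_s^{-1}]^{1/2}$, and the constant $8=2\times 4$ come from), and the same H\"older argument for the randomization term. The one inaccuracy is your appeal to a Freedman/Bernstein martingale inequality for the count $\sum_t I\{\ahat_t\ne A_t\}$: the paper instead computes the second moment directly (showing the cross terms vanish by iterated expectations) and applies Chebyshev, which is what yields the exact term $\{\frac{1}{\delta}\sum_t\frac{\epsilon_t}{L-1}\}^{1/2}$, whereas Freedman would produce $\sqrt{\log(1/\delta)}$ scaling plus an additive $\log(1/\delta)$ term and hence not the inequality as stated---though the martingale structure you identify is precisely what makes the paper's variance computation go through.
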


\begin{remark}
\label{rem: Regret_upper_bound_finite_polyeps}
As in Remark~\ref{rem:1}, the choice of $\epsilon_t=t^{-\beta}$ for some $0<\beta<1$ reduces \eqref{eq: finiteRKHS_regret_bound} to
  \begin{align*}
  R_T \lesssim  T^{\frac{\beta + 1}{2}} + T^{1-\beta},  
\end{align*}
Balancing these terms yields that $R_T$ has a growth order of at least $T^{2/3}$ with the choice of $\beta=\frac{1}{3}$. Note that the estimation error (resp. randomization error) dominates the randomization error (resp. estimation error) if $\beta>\frac{1}{3}$ (resp. $\beta<\frac{1}{3}$). We would like to highlight that this rate of $T^{2/3}$ is optimal for contextual bandits using an $\epsilon$-greedy strategy, i.e., this strategy cannot achieve regret rates slower than $T^{2/3}$ for contextual bandits \citep[Theorem 3]{dann2022guarantees}.
\end{remark}

\subsection{Regret analysis with the margin condition for finite dimensional RKHS}
\label{sec: margincondition}
In this section, we make an additional assumption known as the `margin condition' on the underlying reward functions as is commonly assumed in the MABC/contextual bandit's literature \citep{chen2021statistical,goldenshluger2013linear}. Under this assumption, we can achieve significant improvement in the expected regret rate as compared to the previous results. Note that, here we construct regret upper bounds in expectation unlike the previous results in Section \ref{sec: regret_analysis} where we constructed upper bounds on the regret in probability. As in the literature, we focus our analysis on $L = 2$ (two arms) though it can be extended to $L$ arms.  
\begin{assumption}
\label{assump: AssumptionMargin}
\textbf{Margin Condition:} There exists $C > 0$ such that
$P_{X \sim \mathcal P_X}\left(0 < |\langle f_1 - f_0, k(\cdot, X) \rangle_\mathcal{H}| \leq l \right) \leq C l,\,\, \forall l > 0.$
\end{assumption}
The assumption is related to the behavior of the distribution of the covariates near the decision boundary $\{x: f_1(x) = f_0(x) \}$. As it is difficult to distinguish between the arms near the boundary, imposing such an assumption helps to control the contribution of incorrect decisions being made near the decision boundary. In the following theorem, we present an upper bound on the expected regret for the kernel $\epsilon$-greedy algorithm when the true mean reward functions lie in a finite-dimensional RKHS $\mathcal{H}$ and show it is almost minimax optimal.

\begin{theorem}
\label{theorem: finite_dim_with_Margin_condition}
Let $L=2$. Suppose 
\ref{assump:independenterrors}, \ref{assump: errorsindependentX}, \ref{assump: min_evalue}, and \ref{assump: AssumptionMargin} hold, $\normx{f_1 - f_0}_\mathcal{H} < \infty$, and for any $\delta>0$, $T\ge 1$, $\{\epsilon_s\}^T_{s=1}$ satisfies $$\frac{1}{t^2}\sum^t_{s=1}\frac{1}{\epsilon_s}\le \frac{\delta\eta}{4(L-1)d\kappa},\,\,0<t\le T,$$ where $d:=\emph{dim}(\mathcal{H})$. Then for $\delta>0$, $T\ge 1$, $\theta>0$, $0\le \zeta<1$, and 
\begin{align}
\lambda_{t} = \left[\dfrac{1}{ t^2} \left(\sum_{s=1}^t \dfrac{1}{\epsilon_s}\right) \right]^{1/2},\,\, 0 < t \leq T,\label{eq: lamt_Thm6}
\end{align}
the following holds:
\begin{align}
   \E R_T & \leq \kappa \normx{f_1 - f_0}_\mathcal{H} \sum_{t=1}^T \dfrac{\epsilon_t}{2} \nonumber\\
   &\qquad+ \tilde{A}(\zeta, \kappa, \tilde{C}_0, C,  \tilde{C}_*) \left[T^{-\theta} \sum_{t=1}^T \left[\dfrac{1}{t^2} \sum_{s=1}^t \dfrac{1}{\epsilon_s}\right]^{1/2} +  T^{\theta \zeta}  \sum_{t=1}^T \left(\dfrac{1}{t^2} \sum_{s=1}^t \dfrac{1}{\epsilon_s}\right)^{(1+\zeta)/2}\right],\label{eq:finite_dim_regret_margin_con}
\end{align}
where $\tilde{A}(\zeta, \kappa, \tilde{C}_0, C, \tilde{C}_*)$ is a constant that depends only on its arguments and not on $T$ and $\{\epsilon_s\}_s$, $\tilde{C}_0$, and $\tilde{C}_*$ are defined in Theorem~\ref{thm: regret_finite_case_pre}, and $C$ is defined in \ref{assump: AssumptionMargin}. 


\end{theorem}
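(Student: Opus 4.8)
The plan is to bound $\E R_T$ by reducing the cumulative regret to the per-round probability of pulling a suboptimal arm, and then to control that probability through the margin condition (Assumption \ref{assump: AssumptionMargin}) combined with the moment bounds for the estimation error from Theorem~\ref{thm: theorem_finiteRKHS}. Throughout I set $t_0=0$, write $g:=f_1-f_0$ and $\hat{g}_t := \fh_{1,t-1}-\fh_{0,t-1}$, and abbreviate $\rho_t := \left[\frac{1}{t^2}\sum_{s=1}^t\frac{1}{\epsilon_s}\right]^{1/2}$, which is exactly the chosen regularization $\lambda_t$ in \eqref{eq: lamt_Thm6} and the estimation rate in Theorem~\ref{thm: theorem_finiteRKHS}. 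For $L=2$ the instantaneous regret is $r_t = |g(X_t)|\,I\{\hat{a}_t\neq a_t^*\}$, so conditioning on $\mathcal{F}_{t-1}$ and $X_t$ and averaging over the $\epsilon$-greedy randomization gives
\begin{align*}
\E[r_t\mid \mathcal{F}_{t-1},X_t] = |g(X_t)|\big[(1-\epsilon_t)\,I\{A_t\neq a_t^*\}+\epsilon_t\,I\{A_t= a_t^*\}\big].
\end{align*}
The summand carrying $I\{A_t=a_t^*\}$ is the exploration (randomization) cost; bounding $|g(X_t)|\le\kappa\normx{f_1-f_0}_\mathcal{H}$ via the reproducing property and Cauchy--Schwarz, it yields the first, randomization, term of \eqref{eq:finite_dim_regret_margin_con}. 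It then remains to handle the exploitation/misclassification cost $\sum_t\E[|g(X_t)|\,I\{A_t\neq a_t^*\}]$.

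The key reduction is that a wrong greedy choice forces the estimation error to exceed the gap. Since $A_t$ and $a_t^*$ are the maximizers of $\hat{g}_t(X_t)$ and $g(X_t)$ respectively, $A_t\neq a_t^*$ forces $\hat{g}_t(X_t)$ and $g(X_t)$ to have opposite signs, whence $|g(X_t)|\le|g(X_t)-\hat{g}_t(X_t)|$. By the reproducing property, $|g(X_t)-\hat{g}_t(X_t)|\le \kappa\, D_t$ with $D_t := \normx{\fh_{1,t-1}-f_1}_\mathcal{H}+\normx{\fh_{0,t-1}-f_0}_\mathcal{H}$, which is $\mathcal{F}_{t-1}$-measurable. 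Since the covariates are i.i.d.\ and $X_t\perp\mathcal{F}_{t-1}$, conditioning on $\mathcal{F}_{t-1}$ freezes $D_t$ while leaving $X_t\sim\mathcal{P}_X$ fresh, so that
\begin{align*}
\E\big[|g(X_t)|\,I\{A_t\neq a_t^*\}\mid \mathcal{F}_{t-1}\big] \le \E_{X_t}\big[|g(X_t)|\,I\{0<|g(X_t)|\le \kappa D_t\}\big],
\end{align*}
which is exactly the form to which the margin condition applies.

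To turn this into the stated bound I introduce the deterministic threshold $u=T^{-\theta}$ and split on $\{\kappa D_t\le u\}$ versus $\{\kappa D_t> u\}$. On the first event, Assumption \ref{assump: AssumptionMargin} gives $\E_{X_t}[|g(X_t)|\,I\{0<|g(X_t)|\le\kappa D_t\}]\le C\kappa^2 D_t^2$, and since $\kappa D_t\le u$ one factor of $\kappa D_t$ is at most $u$, so this is $\le C\kappa u D_t$; taking full expectations and using the first-moment bound $\E[D_t]\lesssim \rho_t$ (the $\zeta=0$ instance of \eqref{eq: estimation_bound_zeta_inexpectation}) yields a contribution $\lesssim u\rho_t = T^{-\theta}\rho_t$. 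On the second event I instead bound $|g(X_t)|\le\kappa\normx{f_1-f_0}_\mathcal{H}$ and apply the margin condition once, obtaining $\le C\kappa^2\normx{f_1-f_0}_\mathcal{H}\, D_t\,I\{\kappa D_t>u\}$; since $(\kappa D_t/u)^{\zeta}>1$ there, $D_t\,I\{\kappa D_t>u\}\le \kappa^{\zeta}u^{-\zeta} D_t^{1+\zeta}$, and the $(1+\zeta)$-moment bound \eqref{eq: estimation_bound_zeta_inexpectation}, $\E[D_t^{1+\zeta}]\lesssim \rho_t^{1+\zeta}$, gives a contribution $\lesssim u^{-\zeta}\rho_t^{1+\zeta}=T^{\theta\zeta}\rho_t^{1+\zeta}$. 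Summing over $t$ and collecting $\kappa,C,\tilde{C}_0,\tilde{C}_*,\zeta$ into a single constant $\tilde{A}$ produces the bracketed term of \eqref{eq:finite_dim_regret_margin_con}; a minor reindexing absorbs the $t$ versus $t-1$ gap between $\fh_{\cdot,t-1}$ and $\rho_t$.

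The main obstacle is that the estimators $\fh_{i,t-1}$ are built from adaptively collected, dependent data, so Theorem~\ref{thm: theorem_finiteRKHS} supplies only a moment of order $1+\zeta<2$ for $D_t$. A naive use of the margin condition produces the factor $D_t^2$, whose expectation is not available; the threshold split at $u=T^{-\theta}$ is precisely the device that interpolates between the first and $(1+\zeta)$-th moments, and it is the source of the two competing powers $T^{-\theta}$ and $T^{\theta\zeta}$. The second delicate point is the decoupling: one must condition on $\mathcal{F}_{t-1}$ so that $D_t$ is frozen and the fresh covariate genuinely follows $\mathcal{P}_X$, which is what legitimizes invoking the margin condition (a statement about $X\sim\mathcal{P}_X$) inside the adaptive recursion. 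Verifying that the containment $\{A_t\neq a_t^*\}\subseteq\{|g(X_t)|\le\kappa D_t\}$ holds up to the sign/tie ambiguity, and that the two moment constants combine into a single $T$-independent $\tilde{A}(\zeta,\kappa,\tilde{C}_0,C,\tilde{C}_*)$, are the remaining routine checks.
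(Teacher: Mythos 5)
Your proof is correct, and at the top level it follows the same strategy as the paper: decompose the regret into a randomization part and a misclassification part, reduce the misclassification regret to the event that the estimation error exceeds the gap, and use the threshold $T^{-\theta}$ to interpolate between the first- and $(1+\zeta)$-th-moment bounds of Theorem~\ref{thm: theorem_finiteRKHS}. Where you genuinely diverge is in how that key step is implemented. The paper splits on the realized gap: on $\{0<|f_-(X_s)|\le T^{-\theta}\}$ it applies the margin condition at the \emph{deterministic} level $l=T^{-\theta}$ together with the first moment of the estimation error, while on $\{|f_-(X_s)|>T^{-\theta}\}$ it does not use the margin condition at all, bounding the sign-mismatch indicator by $\bigl(|(\hat{f}_{s-1}-f_-)(X_s)|/|f_-(X_s)|\bigr)^{\zeta}$ and using $|f_-(X_s)|^{-\zeta}\le T^{\theta\zeta}$ to reach the $(1+\zeta)$-th moment (the terms $S_1$ and $S_2$ in Section~\ref{proof: thm6}). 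You instead split on the $\mathcal{F}_{t-1}$-measurable error $\kappa D_t\lessgtr T^{-\theta}$ and apply the margin condition in \emph{both} branches at the random level $l=\kappa D_t$, which is legitimate precisely because, as you argue, conditioning on $\mathcal{F}_{t-1}$ freezes $D_t$ while $X_t\sim\mathcal{P}_X$ is fresh. Your route buys a cleaner argument: the containment $\{A_t\ne a_t^*\}\cap\{g(X_t)\ne 0\}\subseteq\{0<|g(X_t)|\le\kappa D_t\}$ (your $g$ is the paper's $f_-$) replaces the paper's indicator-difference manipulations in \eqref{eq: Margin_cond_exp_term1}--\eqref{eq: R2_margin_simplified}, and the predictability of the split variable makes the conditioning step transparent. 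The costs are cosmetic: your large-error branch carries an extra factor $\normx{f_1-f_0}_\mathcal{H}$, absorbed into $\tilde{A}$ since $\normx{f_1-f_0}_\mathcal{H}\le 2\eta\tilde{C}_*$; and your randomization term comes out as $\kappa\normx{f_1-f_0}_\mathcal{H}\sum_t\epsilon_t$ rather than the stated $\sum_t\epsilon_t/2$. That factor of two is not a flaw in your argument: the paper's own claim $P(\hat{a}_s\ne A_s\mid \mathcal{F}_{s-1},X_s)=\epsilon_s/2$ in \eqref{eq: RT1_margin} is inconsistent with the exploration scheme \eqref{eq: La_hat_def}, under which the non-greedy arm is selected with probability $\epsilon_s/(L-1)=\epsilon_s$ when $L=2$, so your constant is the one consistent with the algorithm as defined, and either way the discrepancy is a constant factor that does not affect the theorem's content.
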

 \begin{remark}  
 The choice of $\epsilon_t = t^{-\beta}$ for some $0 < \beta < 1$ and $0 \le \zeta < 1$, reduces \eqref{eq:finite_dim_regret_margin_con} to
 \begin{align}
 \E R_T \lesssim  T^{1-\beta} +  T^{-\theta + \frac{\beta + 1}{2}} + T^{\frac{(\beta -1)(1+\zeta)}{2} + \theta \zeta + 1}. \label{eq: regret_finite_time_rem5}
 \end{align}
We first balance the second and the third term in \eqref{eq: regret_finite_time_rem5}, and then balance the resulting rate with the first term in \eqref{eq: regret_finite_time_rem5}. As a result we obtain $\E R_T = {O}(T^{1-\beta})$ where $\frac{3}{7} \le \beta < \frac{1}{2}$. Therefore, the best regret we can achieve in the setting of Theorem~\ref{theorem: finite_dim_with_Margin_condition} is $T^{\frac{1}{2}+\varepsilon}$, $\varepsilon>0$, which is almost minimax optimal with $T^{1/2}$ being the minimax optimal rate for linear MABC/contextual bandits \citep{chen2021statistical}. 
 \end{remark}


\section{Numerical experiments}
\label{sec: simulations}
In this section, we compare the performance of the proposed kernel $\epsilon$-greedy strategy with other MABC algorithms through numerical experiments. We use a Gaussian kernel parameterized by $\gamma>0$, i.e., 
$$K(x, x^\prime) = \exp\left(-\gamma^2 \normx{x - x^\prime}^2\right),$$
for the kernel bandit algorithms. We compare the following four strategies (with parameters defined the parentheses to be tuned and selected) based on the cumulative regret incurred until time horizon $T = 1000$: 
\begin{enumerate}
    \item Kernel $\epsilon$-greedy algorithm with Gaussian kernel (regularization parameter $\lambda_t$, length-scale parameter $\gamma$)
    \item \begin{enumerate}
        \item Kernel $\epsilon$-greedy algorithm with linear kernel,
        \item Weighted linear $\epsilon$-greedy algorithm of \cite{chen2021statistical} with ridge regression estimator,
   \end{enumerate}
     \item Weighted linear $\epsilon$-greedy algorithm of \cite{chen2021statistical} (i.e., without regularization), 
    \item Kernel Upper Confidence Bound (Kernel UCB) algorithm of \cite{valko2013finite} modified for the MABC framework with Gaussian kernel (exploration parameter $\tau$, regularization parameter $\lambda_t$, length-scale parameter $\gamma$).
\end{enumerate}
Note that 2(a) and 2(b) are essentially the same algorithm since our estimator \eqref{sec: IPWR_estimator} with a linear kernel is just a dual representation of the ridge regression version of the weighted linear $\epsilon$-greedy estimator of \cite{chen2021statistical}. This is also reflected in the regret curves in Figures \ref{fig:mean_reward_funcs1D}(b),(d) and Figure \ref{fig: reward_func_values_setting4}(a),(b). For all the $\epsilon$-greedy based algorithms (1-3), we choose the exploration probability sequence $\epsilon_t = \max\{\frac{t^{-1/2}\log{(t)}}{10}, 0.02\}$, which is the same choice as used by \cite{chen2021statistical} in their simulation setup. For algorithms 1 and 4, i.e., the kernel $\epsilon$-greedy and kernel UCB with Gaussian kernel, we do cross-validation as described in Section \ref{subsec: choice of kernel} to determine the right choice of the parameters in the parentheses. 

We consider four simulated data experiments for $d$-dimensional context space for $d \in \{1,2,3\}$ and for $L =2$ arms. All algorithms are run until the time horizon $T = 1000$ with initial random exploration time, $t_0 = 50$. For the initialization phase until $t_0 = 50$, we randomly assign both arms 25 times each. In  Figures \ref{fig:mean_reward_funcs1D}(b), \ref{fig:mean_reward_funcs1D}(d) and Figures \ref{fig: reward_func_values_setting4}(a), \ref{fig: reward_func_values_setting4}(b), we plot the  average cumulative regret (averaged over 25 runs) over time for the four strategies. It can be seen that in all four settings, the kernel $\epsilon$-greedy with Gaussian kernel outperforms the linear strategies by significantly reducing the regret incurred.
 
 \textbf{Setting 1}: We let $d = 1$ and sample $X_t \overset{\text{i.i.d.}}{\sim} \text{Unif}(-1,1)$ for $t = 1,\hdots, T$.  The mean reward functions considered are:
 $f_1(x) = \sin(\pi x)$ and $f_2(x) = \cos(\pi x)$ for $-1 < x < 1$ as in  Figure \ref{fig:mean_reward_funcs1D}(a). In Figure \ref{fig:mean_reward_funcs1D}(b), note that kernel UCB performs the poorest amongst all four strategies while kernel $\epsilon$-greedy performs the best.

\textbf{Setting 2}: We consider a `chessboard' like setup (see Figure \ref{fig:mean_reward_funcs1D}(c)) similar to the experimental setup of \cite{zenati2022efficient}, where $d = 2$, and the mean reward functions are $f_1(x) = 1$ and $f_2(x) = 1$ in the green and red regions, respectively, and $0$ elsewhere. Here, we sample each component of the covariates $X_t \in \mathbb{R}^2, t = 1,\hdots, T$ independently from Unif(-1,1) distribution. Note that, in Figure \ref{fig:mean_reward_funcs1D}(d), kernel UCB and kernel $\epsilon$-greedy perform better than the weighted $\epsilon$-greedy linear algorithms. Both these algorithms give comparable performance with the former being slightly better. 

\textbf{Settings 3 and 4:} We consider two arms, $L = 2$, and covariate dimension, $d = 3$. For both these settings, we follow the data generation process of \cite{chen2021statistical}, wherein the covariates $X_t$ are sampled i.i.d.~from a truncated normal distribution supported on $[-10, 10]$ with mean zero and scale parameter one. In Setting 3, we use a discretized version of the `Bump' synthetic environment as in the experimental setup of \cite{zenati2022efficient}.  The rewards are generated using the functions,
$f_a(x) = \max(0, 1 - \normx{a - a^*}_1 - \langle w^*, x - x^* \rangle_2), a = 1,2$ for some fixed $a^*, x^*$ and $w^*$. We fix $a^* = 2$ and randomly generate $d$-dimensional vectors $x^*$ and $w^*$.
In Setting 4, we consider the following function:
$f_a(x) = I\{\normx{x - a + 0.5}_1 < 4 \} + 0.5I\{\normx{x - (a - 1)}_1 < 4\}$  for $a = 1, 2$. For setting 3, as can be seen in Figure \ref{fig: reward_func_values_setting4}(a), both kernel UCB and kernel $\epsilon$-greedy perform at par with each other and result in significantly lower regret than the linear algorithms. For setting 4 as can be seen in Figure \ref{fig: reward_func_values_setting4}(b), kernel $\epsilon$-greedy performs better than the kernel UCB algorithm and significantly outperforms the linear algorithms. 

\begin{figure}[t]
    \centering 
\begin{tabular}{c c}
    \includegraphics[scale = 0.36]{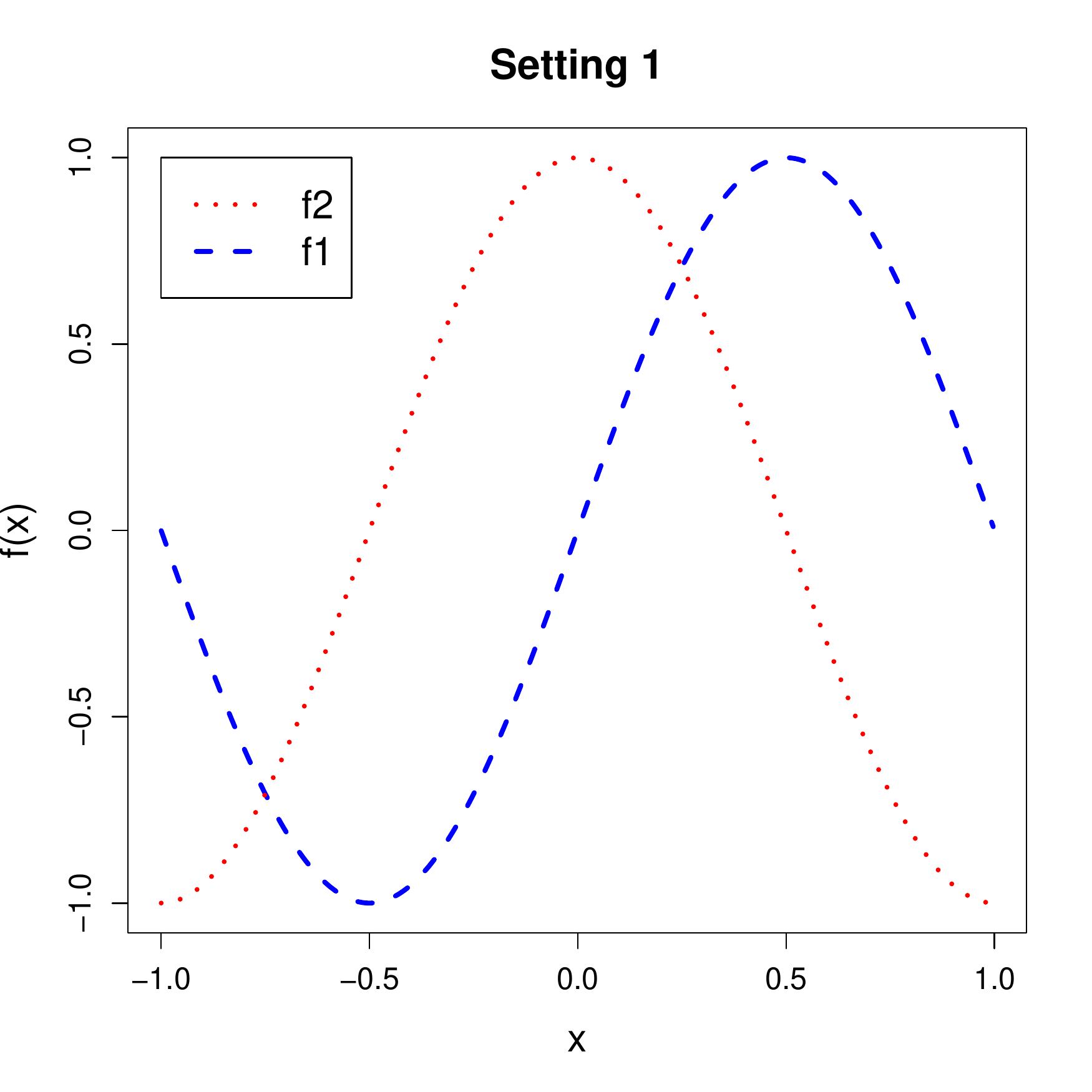}     &     \includegraphics[scale = 0.27]{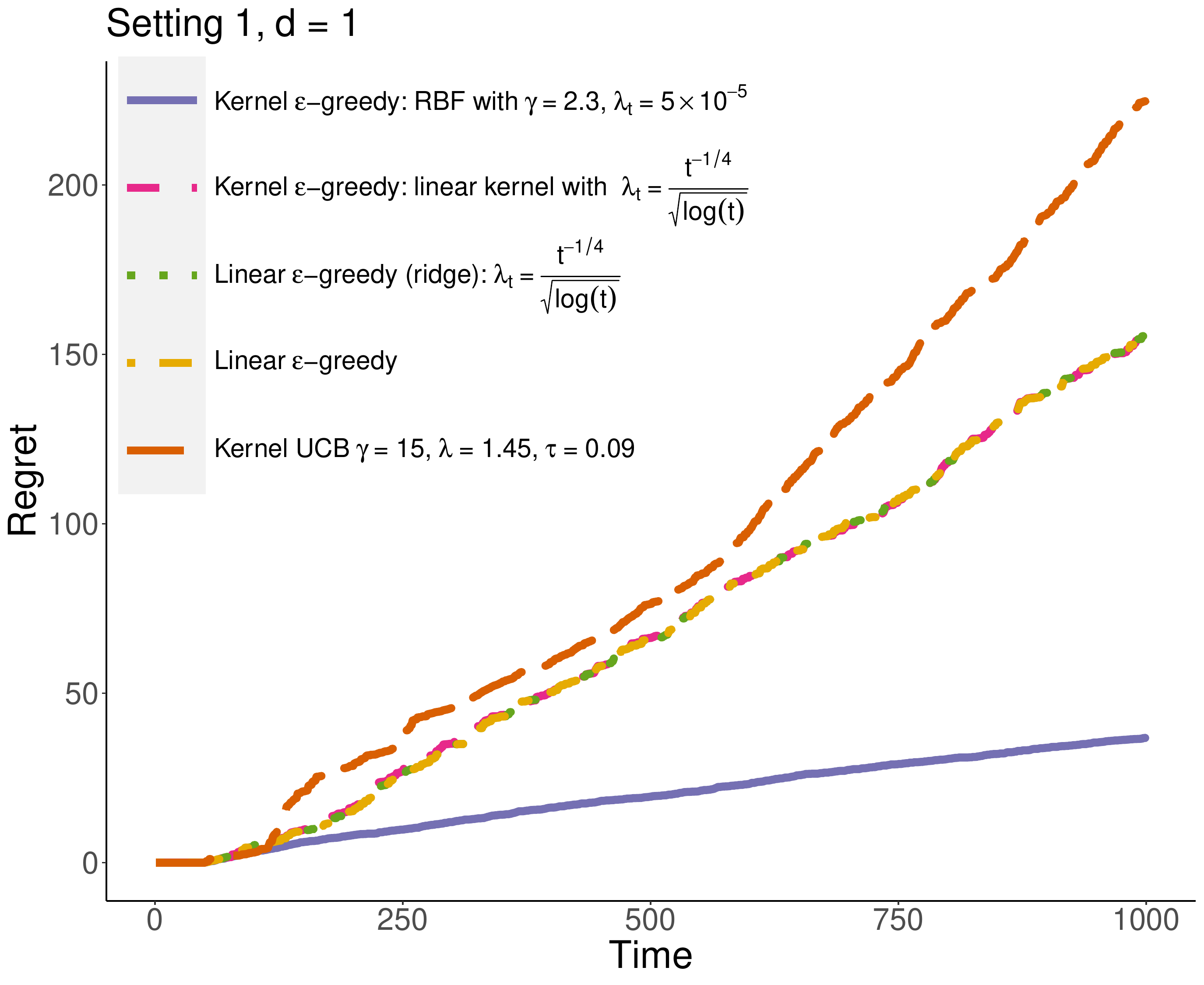} \\
    (a) Setting 1: Mean Reward functions & (b) Average cumulative regret over time\\
    \vspace{4mm}\\
 \includegraphics[scale = 0.36]{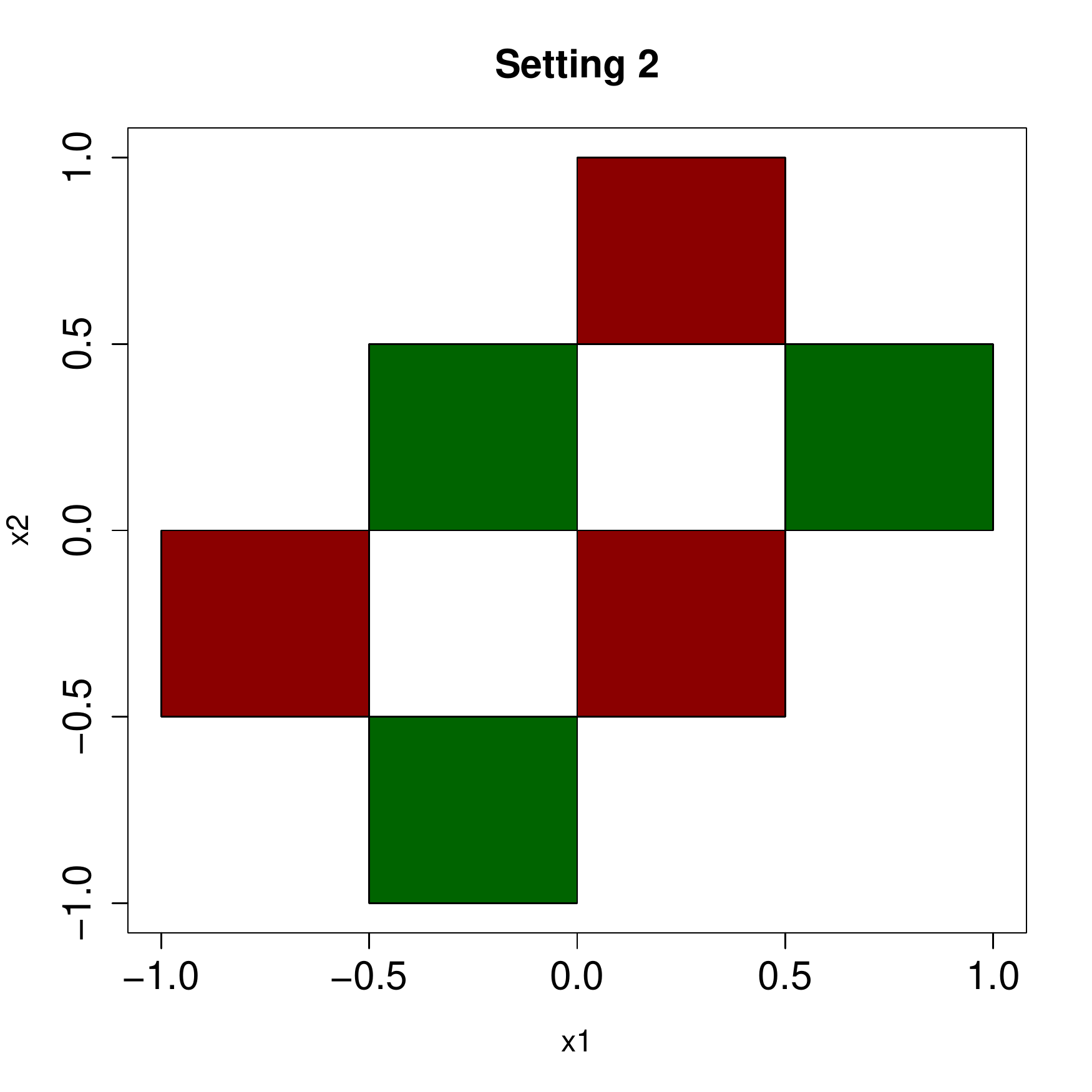} &
    \includegraphics[scale = 0.27]{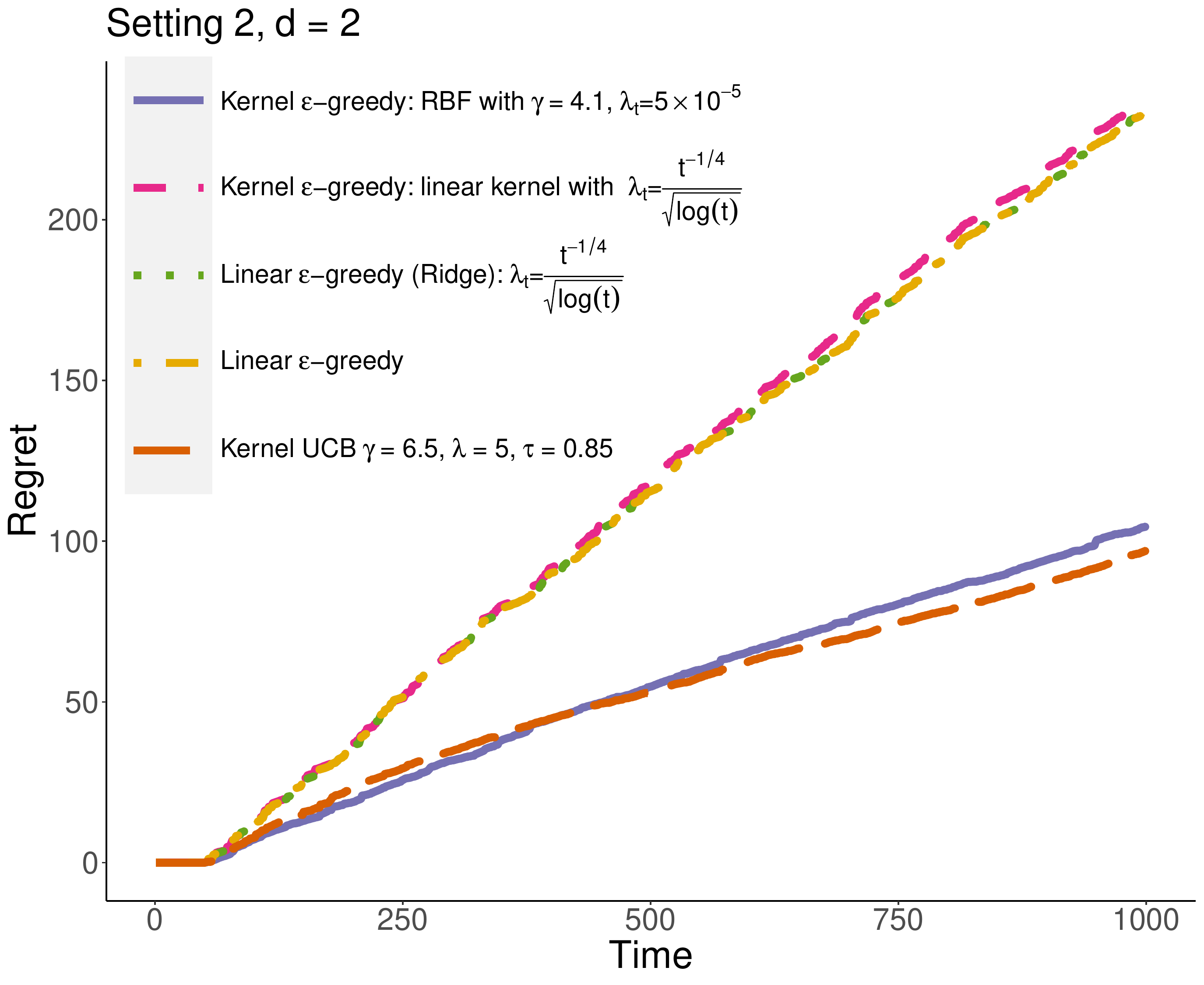}\\
    (c) Setting 2: Mean reward functions & (d) Average cumulative regret over time.\\
\end{tabular}
     \caption{Left: Mean reward functions for settings 1 and 2. Right: Average cumulative regret over 25 runs for the five strategies over time for $T = 1000$.}
    \label{fig:mean_reward_funcs1D}
    \end{figure}



\begin{figure}[t]
    \centering
    \begin{tabular}{c c}
           \includegraphics[scale = 0.27]{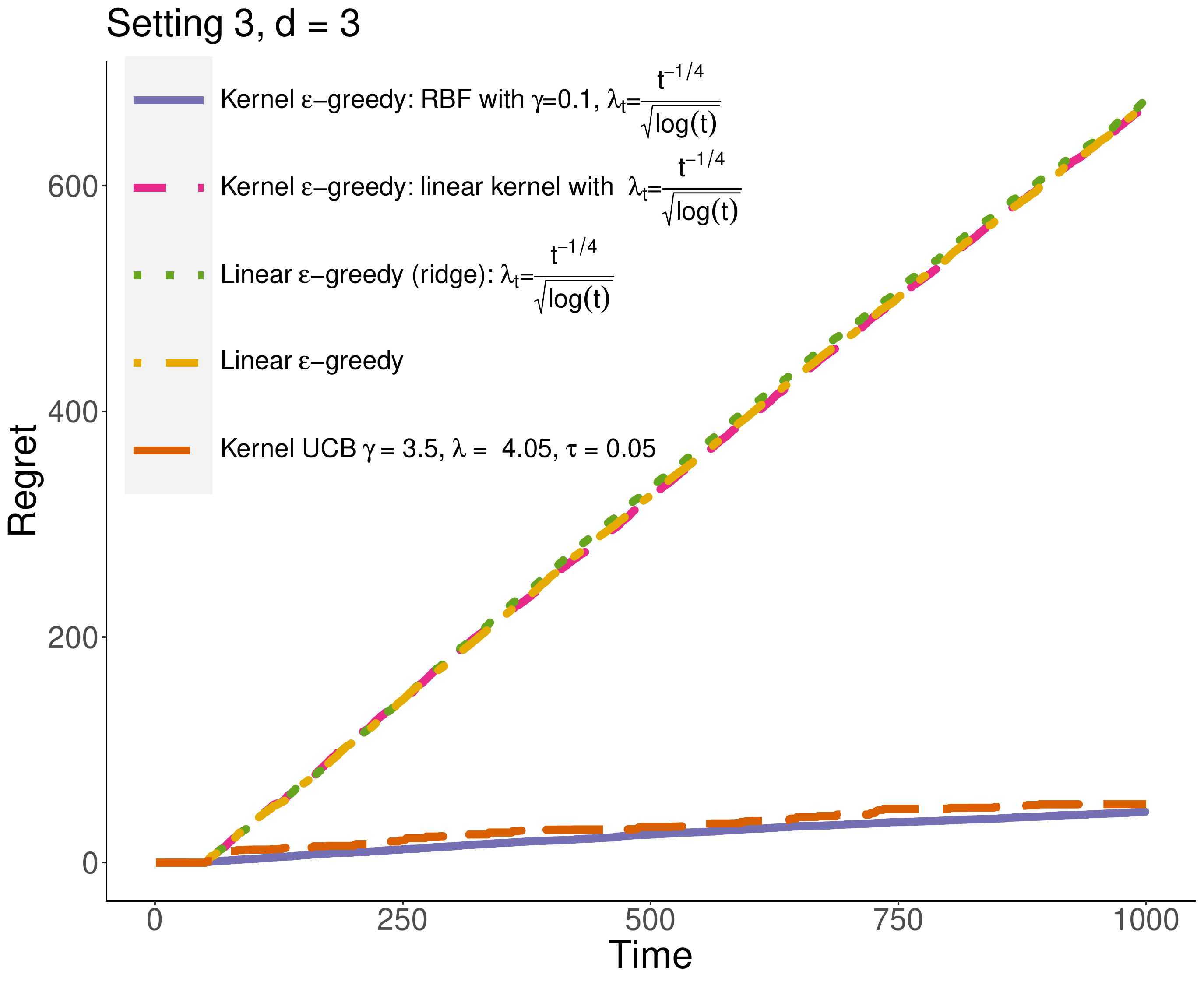} &
           \includegraphics[scale = 0.27]{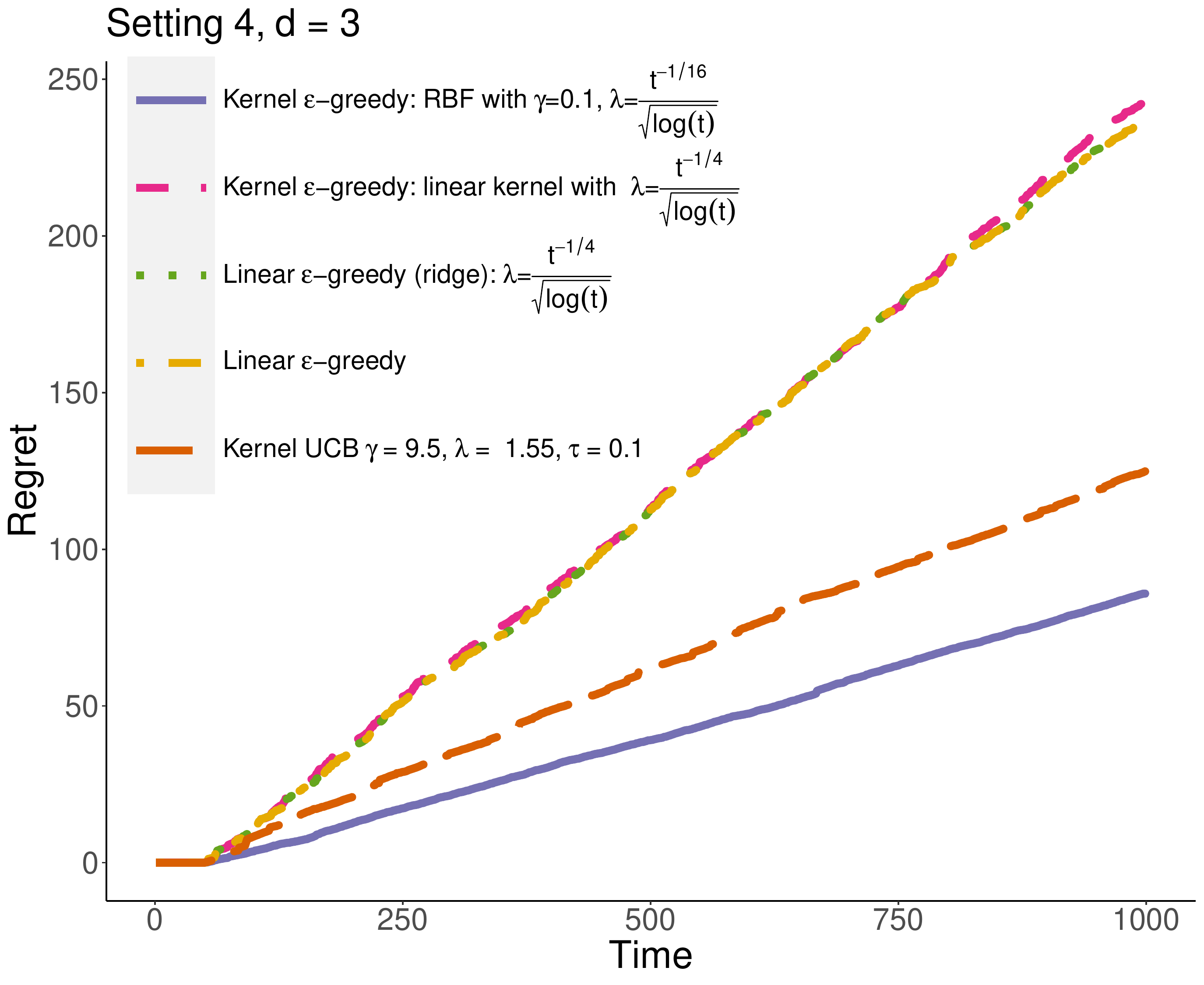}\\
           (a) Setting 3: Average cumulative regret  & (b) Setting 4: Average cumulative regret
      \end{tabular}     
    \caption{Average cumulative regret over time for kernel $\epsilon$-greedy algorithm with Gaussian kernel, linear kernel (with and without regularization), and kernel UCB with Gaussian kernel.}
    \label{fig: reward_func_values_setting4}
\end{figure}

\subsection{Choice of kernel parameters} \label{subsec: choice of kernel}
In this section, we describe the methodology we use to tune and select the parameters in the proposed kernel $\epsilon$-greedy algorithm and the kernel UCB algorithm of \cite{valko2013finite}. Note that for the kernel $\epsilon$-greedy algorithm with Gaussian kernel, we need to tune two parameters, $\lambda_t$ and $\gamma$, while for the kernel UCB algorithm, we need to tune three parameters, $\lambda$, $\gamma$, and  the exploration parameter, $\tau$. Below, we describe the steps for tuning the two parameters in the former, while the same methodology is used to tune the three parameters in the latter.

For selecting the two parameters $\gamma$ and $\lambda$ in implementing the kernel $\epsilon$-greedy algorithm (corresponding to the purple line in Figures \ref{fig:mean_reward_funcs1D}(b),(d) and Figure \ref{fig: reward_func_values_setting4}), we use the following cross-validation approach.  We consider the following choices for $\lambda_t$ and $\gamma$, respectively: 
\begin{align*}
 \lambda_t & \in \left\{\frac{t^{-1/2}}{\sqrt{\log t}}, \frac{t^{-1/4}}{\sqrt{\log t}}, \frac{t^{-1/6}}{\sqrt{\log t}}, \frac{t^{-1/8}}{\sqrt{\log t}}, \frac{t^{-1/16}}{\sqrt(\log t)}, 5\times10^{-5}, 0.005, 0.5\right\},\,\,\text{and}\\
\gamma & \in \{0.1,0.3, 0.5, \hdots, 5\}.   
\end{align*}
Then, we use the following $k$-fold cross-validation approach with $k = 10$:
\begin{enumerate}
    \item Sample $X \in \mathbb{R}^{T(k+1) \times d}$.
    \item Split this data into $(k+1)$ subsets, each of size $T$. First $k$ subsets are used as training datasets and the $(k+1)$th dataset is used as the test data.
    \item For each pair of $(\lambda_t, \gamma)$, run the algorithms independently on each of the training sets (first $k$ subsets) and note the regret incurred. Take the average cumulative regret across all the $k$ subsets.
    \item Choose $(\lambda_t^*, \gamma^*)$ that minimized the average regret at time $T$. 
    \item Run the kernel $\epsilon$-greedy algorithm using the Gaussian kernel with length-scale $\gamma^*$ and regularization parameter $\lambda_t^*$ on the test dataset and repeat the experiment on this dataset 25 times. The results reported in Figures \ref{fig:mean_reward_funcs1D}(b),(d) and \ref{fig: reward_func_values_setting4}(a),(b) are the averages of the cumulative regret over these 25 runs.
\end{enumerate}
For Kernel UCB, we follow the same cross-validation approach but with the following grid choices for $\lambda, \gamma$ and $\tau$:
\begin{align*}
    \lambda & \in \{0.05, 0.15, 0.25, \hdots, 5\},\,\,
    \gamma  \in \{0.5, 1.5, 2.5, \hdots, 15\}, \text{and}\,\,
    \tau \in \{0.05, 0.1, 0.15, \hdots, 0.9\},
\end{align*}
where it has to be noted that Kernel UCB uses $\lambda$ that does not vary with time (as suggested in \citealp{valko2013finite}) in contrast to ours which is time dependent. 

For cases where a linear approximation would lead to a good classification of arms, our algorithm still performs on par with the linear algorithms. However if one knows that the true model is linear, it might be computationally efficient to opt for the linear $\epsilon$-greedy algorithm. If that is not the case, we can say that using the kernel $\epsilon$- greedy can be beneficial in most settings with the right choice of kernel.


\section{Discussion}
In this work, we propose the kernel $\epsilon$-greedy algorithm for the multi-armed bandit problem with covariates with finitely many arms. To our knowledge, this is the first work considering kernel methods in a nonparametric MABC framework, which is different from the kernel bandits framework of \citep{valko2013finite, srinivas2010gaussian}, in terms of the underlying modeling framework.  We provided upper bounds on the estimation error for the proposed online regression estimator and provided sub-linear regret rates for the proposed algorithm. While the kernelized versions of UCB and Thompson sampling have been well-studied, to our knowledge, this is the first attempt at studying the kernelized $\epsilon$-greedy algorithm. The theoretical analysis presented is novel, as it utilizes the intrinsic properties of the RKHS and exploits the simplicity of the $\epsilon$-greedy algorithm, resulting in upper bounds that do not depend on quantities like the maximum information gain, like previous works. An advantage of the analysis is that we achieve sub-linear regret bounds for wide choices of kernels, along with achieving state-of-the-art regret bounds in a finite-dimensional setting, even when the regressors are not linear. Simple strategies like $\epsilon$-greedy are easy to implement and deeper theoretical understanding helps in supporting their application in real-life sequential decision-making problems. From a practical point of view, addressing computational challenges in implementing the $\epsilon$-greedy algorithm needs further research. One way to address time and computational complexity would be by using incremental Nyst\"rom approximations as done by \cite{zenati2022efficient}. While we employ cross-validation to tune the kernel parameters and regularization parameters, it can be time-consuming to do an exhaustive search. Therefore, new computational techniques need to be devised to help with parameter tuning. Another possible future direction is to study the effect of delayed feedback on the kernel $\epsilon$-greedy strategy, similar to \cite{vakili2023delayed}.

\section{Proofs}
\label{sec: proofs}
In this section, we present the proofs of the main results of the paper. Before we present the proofs, we present a result that is used in many of these proofs.
\begin{lemma}
\label{lem: Lamma_unbiased_covariance}
$\E(\hat{\Sigma}_{i,t}) = \Sigma$, where $\hat{\Sigma}_{i,t}$ is defined in \eqref{eq: LSigma_hat_it}.
\end{lemma}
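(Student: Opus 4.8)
The plan is to show that each summand in the definition of $\hat{\Sigma}_{i,t}$ is itself unbiased for $\Sigma$, and then conclude by linearity. Write the $s$-th summand as
$$T_s := \frac{I\{\hat{a}_s = i\}}{P(\hat{a}_s = i \mid \mathcal{F}_{s-1}, X_s)}\, k(\cdot, X_s) \otimes k(\cdot, X_s),$$
so that $\hat{\Sigma}_{i,t} = \frac{1}{t}\sum_{s=1}^t T_s$. It then suffices to prove $\E(T_s) = \Sigma$ for every $s$ and sum.

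First I would condition on the $\sigma$-field $\mathcal{G}_s := \sigma(\mathcal{F}_{s-1}, X_s)$ and use the tower property, $\E(T_s) = \E[\E(T_s \mid \mathcal{G}_s)]$. The crucial observation is that both the rank-one operator $k(\cdot, X_s) \otimes k(\cdot, X_s)$ (a function of $X_s$ alone) and the reciprocal weight $1/P(\hat{a}_s = i \mid \mathcal{F}_{s-1}, X_s)$ are $\mathcal{G}_s$-measurable. Indeed, as noted after \eqref{eq: Lbetahat_i}, the propensity equals $1-\epsilon_s$ or $\epsilon_s/(L-1)$ according to whether $i = A_s$ or $i\ne A_s$, and both $A_s$ and $\epsilon_s$ are determined by $\mathcal{G}_s$. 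Pulling these $\mathcal{G}_s$-measurable factors out of the conditional expectation leaves only $\E(I\{\hat{a}_s = i\} \mid \mathcal{G}_s) = P(\hat{a}_s = i \mid \mathcal{F}_{s-1}, X_s)$, which cancels the weight exactly, yielding
$$\E(T_s \mid \mathcal{G}_s) = k(\cdot, X_s) \otimes k(\cdot, X_s).$$
Taking the outer expectation and using that the covariates are i.i.d.\ (so $X_s$ has law $\mathcal{P}_X$) gives $\E(T_s) = \E[k(\cdot, X_s) \otimes k(\cdot, X_s)] = \Sigma$ by the very definition of $\Sigma$. Averaging over $s = 1,\ldots,t$ then gives $\E(\hat{\Sigma}_{i,t}) = \Sigma$.

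The only technical bookkeeping is to justify that these operator-valued (Bochner) expectations are well defined, which reduces to an integrability check. This is immediate from the boundedness assumptions: $\Vert k(\cdot, X_s) \otimes k(\cdot, X_s)\Vert_\infty = k(X_s,X_s) \le \kappa$, while the weight is bounded by $(L-1)/\epsilon_s < \infty$ since $\epsilon_s > 0$, so $\E\Vert T_s\Vert_\infty < \infty$ and the tower-property manipulation is legitimate. I expect this integrability/measurability step to be the only mild obstacle; the substantive content of the lemma is simply the exact cancellation produced by the inverse-probability weighting, which is precisely what these weights are designed to achieve.
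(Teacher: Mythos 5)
Your proposal is correct and follows essentially the same route as the paper's proof: linearity of expectation, the tower property conditioned on $\sigma(\mathcal{F}_{s-1}, X_s)$, exact cancellation of the inverse-probability weight against $\E[I\{\hat{a}_s = i\}\mid \mathcal{F}_{s-1}, X_s]$, and then $\E[k(\cdot,X_s)\otimes k(\cdot,X_s)] = \Sigma$ from the i.i.d.\ covariate assumption. Your added remark on Bochner integrability via $k(X_s,X_s)\le\kappa$ and the weight bound $(L-1)/\epsilon_s$ is a harmless refinement the paper leaves implicit.
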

\begin{proof}
Consider,
\begin{align*}
	\E(\hat{\Sigma}_{i,t}) &= \E \left[ \dfrac{1}{t} \sum_{s=1}^t \dfrac{I\{\hat{a}_s = i\}}{P(\hat{a}_s = i| \mathcal{F}_{s-1},X_s)} k(\cdot, X_s)\otimes k(\cdot, X_s) \right]\\
	&= \dfrac{1}{t} \sum_{s=1}^t \E \left[\dfrac{I\{\hat{a}_s = i\}}{P(\hat{a}_s = i| \mathcal{F}_{s-1},X_s)} k(\cdot, X_s)\otimes k(\cdot, X_s) \right]\\
	&= \dfrac{1}{t} \sum_{s=1}^t \E \left[ \E \left( \dfrac{I\{\hat{a}_s = i\}}{P(\hat{a}_s = i| \mathcal{F}_{s-1},X_s)} k(\cdot, X_s)\otimes k(\cdot, X_s) \middle| \mathcal{F}_{s-1},X_s\right) \right]\\
	&= \dfrac{1}{t} \sum_{s=1}^t \E \left[  \dfrac{P(\hat{a}_s = i| \mathcal{F}_{s-1},X_s)}{P(\hat{a}_s = i| \mathcal{F}_{s-1},X_s)} k(\cdot, X_s)\otimes k(\cdot, X_s)  \right]\\
 	&= \dfrac{1}{t} \sum_{s=1}^t \E (k(\cdot, X_s)\otimes k(\cdot, X_s))= \Sigma, \ \text{for} \ i = 1, \hdots, L,
\end{align*}
where the third equality follows from the law of iterated expectations.
\end{proof}

\subsection{Proof of Theorem~\ref{thm: theorem1}}
\label{proof: proof1}
Without loss of generality, we will assume $i = 1$. Let $\hat{\pi}_s := P(\hat{a}_s = 1| \mathcal{F}_{s-1}, X_s)$. Then, 
\begin{align*}
\fh_{1,t} - f_1 & = \left( \hat{\Sigma}_{1,t} + \lambda I \right)^{-1} \dfrac{1}{t} \sum_{s=1}^t \dfrac{I\{\hat{a}_s = 1\}}{\hat{\pi}_s} k(\cdot, X_s) y_s - f_1 \nonumber\\
& = \left( \hat{\Sigma}_{1,t} + \lambda I \right)^{-1} \dfrac{1}{t} \sum_{s=1}^t \dfrac{I\{\hat{a}_s = 1\}}{\hat{\pi}_s} k(\cdot, X_s) [\langle k(\cdot, X_s), f_1 \rangle_\mathcal{H} + e_s] - f_1 \nonumber\\
& = \left( \hat{\Sigma}_{1,t} + \lambda I \right)^{-1} \left[ -\lambda f_1 + \dfrac{1}{t} \sum_{s=1}^t \dfrac{I\{\hat{a}_s = 1\}}{\hat{\pi}_s} k(\cdot, X_s) e_s \right]\nonumber \\
& = \left(\hat{\Sigma}_{1,t} + \lambda I \right)^{-1/2} \left(\hat{\Sigma}_{1,t} + \lambda I \right)^{-1/2} \left({\Sigma} + \lambda I \right)^{1/2} \left({\Sigma} + \lambda I \right)^{-1/2} \nonumber\\
&\qquad\qquad\times\left[ -\lambda f_1 + \dfrac{1}{t} \sum_{s=1}^t \dfrac{I\{\hat{a}_s = 1\}}{\hat{\pi}_s} k(\cdot, X_s) e_s \right]. 
\end{align*}
This implies that, 
\begin{align}
\normx{\fh_{1,t} - f_1}_{\mathcal{H}} & \leq \norm{\left(\hat{\Sigma}_{1,t} + \lambda I \right)^{-1/2} \left(\hat{\Sigma}_{1,t} + \lambda I \right)^{-1/2} \left({\Sigma} + \lambda I \right)^{1/2}}_\infty \nonumber\\
& \quad \quad \times \left\| \left({\Sigma} + \lambda I \right)^{-1/2}  \left[ -\lambda f_1 + \dfrac{1}{t} \sum_{s=1}^t \dfrac{I\{\hat{a}_s = 1\}}{\hat{\pi}_s} k(\cdot, X_s) e_s \right]\right\|_{\mathcal{H}} \nonumber\\
& \leq \normx{(\Sigmah_{1,t} + \lambda I)^{-1/2}}_{\infty} \normx{ (\hat{\Sigma}_{1,t} + \lambda I)^{-1/2} (\Sigma + \lambda I)^{1/2}}_{\infty} \nonumber\\
& \quad \quad \times \left\| \left({\Sigma} + \lambda I \right)^{-1/2}  \left[ -\lambda f_1 + \dfrac{1}{t} \sum_{s=1}^t \dfrac{I\{\hat{a}_s = 1\}}{\hat{\pi}_s} k(\cdot, X_s) e_s \right]\right\|_{\mathcal{H}}\le\frac{\mathcal{S}_1\mathcal{S}_2}{\sqrt{\lambda}},\label{eq: norm_ineq_estimation_error}
\end{align}
where
\begin{align*}
	\mathcal{S}_1 &:= \left\Vert\left(\hat{\Sigma}_{1,t} + \lambda I \right)^{-1/2} \left({\Sigma} + \lambda I \right)^{1/2}\right\Vert_\infty,\,\,\text{and}\\
	\mathcal{S}_2 &:= \left\Vert\left({\Sigma} + \lambda I \right)^{-1/2} \left[ -\lambda f_1 + \dfrac{1}{t} \sum_{s=1}^t \dfrac{I\{\hat{a}_s = 1\}}{\hat{\pi}_s} k(\cdot, X_s) e_s \right]\right\Vert_\mathcal{H}.
\end{align*}
We now bound $\mathcal{S}_1$ and $\mathcal{S}_2$. By defining $B_t := (\Sigma + \lambda I)^{-1/2} (\Sigma - \hat{\Sigma}_{1,t}) (\Sigma + \lambda I)^{-1/2}$, we have
\begin{align*}
\mathcal{S}_1&=\normx{(\Sigmah_{1,t} + \lambda I)^{-1/2} (\Sigma + \lambda I)^{1/2}}_\infty = \normx{(\Sigma + \lambda I)^{1/2} (\Sigmah_{1,t} + \lambda I)^{-1} (\Sigma + \lambda I)^{1/2}}_\infty^{1/2} \nonumber \\
&= \normx{(I - B_t)^{-1}}_\infty^{1/2} 
 \leq (1 - \normx{B_t}_\infty)^{-1/2}, 
\end{align*}
{where the last inequality follows from the fact that $(I - B_t)^{-1} \preceq (1-\normx{B_t}_\infty)^{-1}I$ whenever $\normx{B_t}_\infty < 1$}.
Note, $\normx{B_t}_\infty \leq \normx{B_t}_{HS}$, where $\normx{\cdot}_{HS}$ denotes the Hilbert-Schmidt norm. Using Chebyshev's inequality we obtain,
\begin{align*}
P(\normx{B_t}_{HS} \geq \epsilon) & \leq \dfrac{\mathbb{E}\normx{B_t}_{HS}^2}{\epsilon^2}.
\end{align*}
It follows from Lemma~\ref{lem: Lamma_unbiased_covariance} that
\begin{align}
\E\normx{B_t}_{HS}^2 & = \E\normx{(\Sigma + \lambda I)^{-1/2} (\hat{\Sigma}_{1,t} - \Sigma) (\Sigma + \lambda I)^{-1/2}}_{HS}^2 \nonumber \\
&= \E \normx{(\Sigma + \lambda I)^{-1/2} \hat{\Sigma}_{1,t} (\Sigma + \lambda I)^{-1/2}}_{HS}^2 - \normx{(\Sigma + \lambda I)^{-1/2} \Sigma (\Sigma + \lambda I)^{-1/2}}_{HS}^2.  \label{eq: B_t_eq1}
\end{align}
Define $N_{\Sigma, 2} (\lambda):= \normx{(\Sigma + \lambda I)^{-1/2} \Sigma (\Sigma + \lambda I)^{-1/2}}_{HS}^2$ and consider,
\begin{align*}
\E&\normx{(\Sigma + \lambda I)^{-1/2} \hat{\Sigma}_{1,t} (\Sigma + \lambda I)^{-1/2}}_{HS}^2  \\
&= \E \left\langle (\Sigma + \lambda I)^{-1/2} \Sigmah_{1,t} (\Sigma + \lambda I)^{-1/2}, (\Sigma + \lambda I)^{-1/2} \hat{\Sigma}_{1,t} (\Sigma + \lambda I)^{-1/2}\right\rangle_{HS}\\
&= \E \tr \left[ (\Sigma + \lambda I)^{-1/2} \Sigmah_{1,t} (\Sigma + \lambda I)^{-1} \Sigmah_{1,t} (\Sigma + \lambda I)^{-1/2} \right]\\
&= \E \tr \left[ (\Sigma + \lambda I)^{-1} \Sigmah_{1,t} (\Sigma + \lambda I)^{-1} \Sigmah_{1,t}  \right].
\end{align*}
Now, plugging in the definition of $\Sigmah_{1,t}$ from \eqref{eq: LSigma_hat_it} and defining $\tauh_s = I\{\ahat_s = 1\}/\pih_s$, we obtain
\begin{align*}
\E&\normx{(\Sigma + \lambda I)^{-1/2} \hat{\Sigma}_{1,t} (\Sigma + \lambda I)^{-1/2}}_{HS}^2  \\
&= \E  \left[\dfrac{1}{t^2} \sum_{s=1}^t \sum_{\ell=1}^t \tauh_s \tauh_\ell \tr \left( (\Sigma + \lambda I)^{-1}  k(\cdot, X_s)\otimes k(\cdot, X_s) (\Sigma + \lambda I)^{-1}   k(\cdot, X_\ell)\otimes k(\cdot, X_\ell) \right)  \right].
\end{align*}
By breaking the double sum in the above expression into the cases when, (1) $s=\ell$, (2) $s > \ell$, and (3) $s < \ell$, yields
\begin{align}
\E&\normx{(\Sigma + \lambda I)^{-1/2} \hat{\Sigma}_{1,t} (\Sigma + \lambda I)^{-1/2}}_{HS}^2  \nonumber\\
&=\E  \left[\dfrac{1}{t^2} \sum_{s=1}^t \tauh_s^2 \tr \left( (\Sigma + \lambda I)^{-1}  k(\cdot, X_s)\otimes k(\cdot, X_s) (\Sigma + \lambda I)^{-1}   k(\cdot, X_s)\otimes k(\cdot, X_s) \right)  \right] \nonumber\\
& \quad  + \E  \left[\dfrac{1}{t^2} \sum_{\ell = 1}^{t-1} \sum_{s = \ell + 1}^t \tauh_s \tauh_\ell \tr \left( (\Sigma + \lambda I)^{-1}  k(\cdot, X_s)\otimes k(\cdot, X_s) (\Sigma + \lambda I)^{-1}   k(\cdot, X_\ell)\otimes k(\cdot, X_\ell) \right)  \right] \nonumber\\
&\quad \quad + \E  \left[\dfrac{1}{t^2} \sum_{\ell = 2}^t \sum_{s = 1}^{\ell - 1} \tauh_s \tauh_\ell \tr \left( (\Sigma + \lambda I)^{-1}  k(\cdot, X_s)\otimes k(\cdot, X_s) (\Sigma + \lambda I)^{-1}   k(\cdot, X_\ell)\otimes k(\cdot, X_\ell) \right)  \right]\nonumber\\
&=\circled{\footnotesize{1}}+\circled{\footnotesize{2}}+\circled{\footnotesize{3}},
\label{eq: part1_three_cases}
\end{align}
where we bound $\circled{\footnotesize{1}}$--$\circled{\footnotesize{3}}$ as follows.
\begin{align}
\circled{\footnotesize{1}}&= \dfrac{1}{t^2} \sum_{s=1}^t \E [\tauh_s^2 \tr ((\Sigma + \lambda I)^{-1} k(\cdot, X_s)\otimes k(\cdot, X_s) (\Sigma + \lambda I)^{-1} k(\cdot, X_s) \otimes k(\cdot, X_s)) ] \nonumber \\
&= \dfrac{1}{t^2} \sum_{s=1}^t \E \left[ \E[\tauh_s^2 \tr ((\Sigma + \lambda I)^{-1} k(\cdot, X_s)\otimes k(\cdot, X_s) (\Sigma + \lambda I)^{-1} k(\cdot, X_s)\otimes k(\cdot, X_s)) \middle| \mathcal{F}_{s-1}, X_s] \right] \nonumber \\
&= \dfrac{1}{t^2} \sum_{s=1}^t \E \left[ \E (\tauh_s^2| \mathcal{F}_{s-1}, X_s) \tr((\Sigma + \lambda I)^{-1} k(\cdot, X_s)\otimes k(\cdot, X_s) (\Sigma + \lambda I)^{-1} k(\cdot, X_s)\otimes k(\cdot, X_s)) \right],\label{eq: sum1_part1}
\end{align}
where
\begin{align*}
\E(\tauh_s^2| \mathcal{F}_{s-1}, X_s) &= \E \left[ \dfrac{I\{\ahat_s = 1\}}{\hat{\pi}_s^2} \middle| \mathcal{F}_{s-1}, X_s \right]
= \dfrac{\pih_s}{\pih_s^2} 
= \dfrac{1}{\pih_s}
 \leq \dfrac{L-1}{\epsilon_s},
\end{align*}
since $1-\epsilon_s \geq \frac{\epsilon_s}{L-1}$, 
resulting in
\begin{align}
\eqref{eq: sum1_part1} 
&\leq \dfrac{1}{t^2} \sum_{s=1}^t \E \left[ \dfrac{L-1}{\epsilon_s} \tr \left( (\Sigma + \lambda I)^{-1} k(\cdot, X_s)\otimes k(\cdot, X_s) \Slinv k(\cdot, X_s)\otimes k(\cdot, X_s) \right) \right] \nonumber \\
&\leq \dfrac{1}{t^2} \sum_{s=1}^t \dfrac{L-1}{\epsilon_s} \E \tr \left( \Slinv k(\cdot, X_s)\otimes k(\cdot, X_s)\right) \sup_{X_s} \normx{\Slinv k(\cdot, X_s)\otimes k(\cdot, X_s) }_{\infty} \nonumber \\
&\leq  \dfrac{1}{t^2} \sum_{s=1}^t \dfrac{L-1}{\epsilon_s}  \tr \left( \Slinv \Sigma \right) \normx{\Slinv}_{\infty} \sup_{X_s}\normx{k(\cdot, X_s)\otimes k(\cdot, X_s) }_{\infty} \label{Eq:tempoo} \\
& \leq \dfrac{1}{t^2} \left( \sum_{s=1}^t \dfrac{L-1}{\epsilon_s} \right) N_{\Sigma,1}(\lambda) \dfrac{\kappa}{\lambda}, \label{eq: part1_sum1_bound1}
\end{align}
where $N_{\Sigma,1}(\lambda) := \tr \left( \Slinv \Sigma \right)$.
\begin{align}
 \circled{\footnotesize{2}}
 & = \dfrac{1}{t^2} \sum_{\ell = 1}^{t-1} \sum_{s = \ell + 1}^t \E \left[ \tauh_s \tauh_\ell \tr \left( \Slinv k(\cdot, X_s)\otimes k(\cdot, X_s) \Slinv k(\cdot, X_\ell)\otimes k(\cdot, X_\ell) \right) \right]\nonumber\\
 &  = \dfrac{1}{t^2} \sum_{\ell = 1}^{t-1} \sum_{s = \ell + 1}^t  \E \left[ \E[ \tauh_s \tauh_\ell \tr \left( \Slinv k(\cdot, X_s)\otimes k(\cdot, X_s) \Slinv \right.\right.\nonumber\\
 &\qquad\qquad\qquad\left.\left.\times k(\cdot, X_\ell)\otimes k(\cdot, X_\ell) \right) \middle| \mathcal{F}_{s-1}, X_s] \right]\nonumber\\
 &\stackrel{(\dagger)}{=} \dfrac{1}{t^2} \sum_{\ell = 1}^{t-1} \sum_{s = \ell + 1}^t  \E \left[ \tauh_\ell \E(\tauh_s| \mathcal{F}_{s-1},X_s) \tr \left((\Sigma + \lambda I)^{-1} k(\cdot, X_s)\otimes k(\cdot, X_s) \Slinv\right.\right.\nonumber\\
 &\qquad\qquad\qquad\left.\left.\times k(\cdot, X_\ell)\otimes k(\cdot, X_\ell) \right)  \right]\nonumber\\
  &= \dfrac{1}{t^2} \sum_{\ell = 1}^{t-1} \sum_{s = \ell + 1}^t  \E \left[ \tauh_\ell \tr \left((\Sigma + \lambda I)^{-1} k(\cdot, X_s)\otimes k(\cdot, X_s) \Slinv k(\cdot, X_\ell)\otimes k(\cdot, X_\ell) \right)  \right],\label{Eq:11}
 \end{align} 
where we used 
 \begin{align}
 \E(\tauh_s| \mathcal{F}_{s-1}, X_s) &= \E \left[ \dfrac{I\{\ahat_s = 1\}}{\pih_s}\Big\vert \mathcal{F}_{s-1}, X_s \right] = 1 \label{eq: Etauh_cond_1}
 \end{align}
 in $(\dagger)$.
 Now using the law of iterated expectation,  
 we obtain
 \begin{align}
\eqref{Eq:11} &= \dfrac{1}{t^2} \sum_{\ell = 1}^{t-1}  \E \left[ \sum_{s = \ell + 1}^t  \E[ \tauh_\ell \tr \left((\Sigma + \lambda I)^{-1} k(\cdot, X_s)\otimes k(\cdot, X_s) \Slinv k(\cdot, X_\ell)\otimes k(\cdot, X_\ell) \right)\vert \mathcal{F}_\ell] \right]\nonumber\\
 &= \dfrac{1}{t^2} \sum_{\ell = 1}^{t-1} \E \left[\tauh_\ell \tr \Big( \Slinv \Big(\sum_{s = \ell + 1}^t \E(k(\cdot, X_s)\otimes k(\cdot, X_s) | \mathcal{F}_\ell)\Big) \Slinv \right.\nonumber\\
 &\qquad\qquad\qquad\left.\times k(\cdot, X_\ell)\otimes k(\cdot, X_\ell)  \Big)  \right]\nonumber\\
 &= \dfrac{1}{t^2} \sum_{\ell = 1}^{t-1} \E \left[\tauh_\ell \tr \Big( \Slinv (t- \ell)\Sigma \Slinv k(\cdot, X_\ell)\otimes k(\cdot, X_\ell)  \Big)  \right]\nonumber\\
 &= \dfrac{1}{t^2} \sum_{\ell = 1}^{t-1} (t- \ell) \E \left[\tauh_\ell \tr \Big( \Slinv \Sigma \Slinv k(\cdot, X_\ell)\otimes k(\cdot, X_\ell)  \Big)  \right]\nonumber\\
 &= \dfrac{1}{t^2} \sum_{\ell = 1}^{t-1} (t- \ell) \E \left[ \E\Big[ \tauh_\ell \tr \Big( \Slinv \Sigma \Slinv k(\cdot, X_\ell)\otimes k(\cdot, X_\ell)  \Big) \middle| \mathcal{F}_{\ell -1}, X_\ell \Big]\right]\nonumber\\
 &= \dfrac{1}{t^2} \sum_{\ell = 1}^{t-1} (t- \ell) \E \left[\E(\tauh_\ell| \mathcal{F}_{\ell-1}, X_\ell) \tr \left(\Slinv \Sigma \Slinv k(\cdot, X_\ell)\otimes k(\cdot, X_\ell)  \right) \right].\label{Eq:12}
 \end{align}
 Again, using \eqref{eq: Etauh_cond_1}, we get
 \begin{align}
\eqref{Eq:12} &= \dfrac{1}{t^2} \sum_{\ell = 1}^{t-1} (t- \ell) \E \left[\tr \left(\Slinv \Sigma \Slinv k(\cdot, X_\ell)\otimes k(\cdot, X_\ell)  \right) \right] \nonumber \\
 &=  \dfrac{1}{t^2} \sum_{\ell = 1}^{t-1} (t- \ell) \left[\tr \left(\Slinv \Sigma \Slinv \Sigma \right) \right] \nonumber \\
 &= \dfrac{1}{t^2} \sum_{\ell = 1}^{t-1} (t - \ell) N_{\Sigma, 2}(\lambda)
 = \dfrac{1}{t^2} \left[t(t-1) - \dfrac{(t-1)t}{2}\right] N_{\Sigma, 2}(\lambda) \nonumber \\
 &= \dfrac{N_{\Sigma, 2}(\lambda)}{2} - \dfrac{N_{\Sigma,2}(\lambda)}{2t}.   \label{eq: part1_sum2_bound2}
 \end{align}
 \begin{align}
\circled{\footnotesize{3}} &= \dfrac{1}{t^2} \sum_{\ell = 2}^t \sum_{s=1}^{\ell -1} \E \left[\tauh_s \tauh_\ell \tr \left(\Slinv k(\cdot, X_s)\otimes k(\cdot, X_s) \Slinv k(\cdot, X_\ell)\otimes k(\cdot, X_\ell) \right) \right]\nonumber\\
 &= \dfrac{1}{t^2} \sum_{\ell = 2}^t \sum_{s=1}^{\ell -1} \E \left[ \E[\tauh_s \tauh_\ell \tr \left(\Slinv k(\cdot, X_s)\otimes k(\cdot, X_s) \Slinv \right.\right.\nonumber\\
 &\qquad\qquad\left.\left. \times k(\cdot, X_\ell)\otimes k(\cdot, X_\ell) \right)\middle| \mathcal{F}_{\ell-1},X_\ell]\right]\nonumber\\
 &= \dfrac{1}{t^2} \sum_{\ell = 2}^t \sum_{s=1}^{\ell -1} \E \left[ \tauh_s \E(\tauh_\ell| \mathcal{F}_{\ell - 1}, X_\ell) \tr \left(\Slinv k(\cdot, X_s)\otimes k(\cdot, X_s) \Slinv \right.\right.\nonumber\\
 &\qquad\qquad\left.\left.\times k(\cdot, X_\ell)\otimes k(\cdot, X_\ell) \right)\right]\nonumber\\
 &= \dfrac{1}{t^2} \sum_{\ell = 2}^t \sum_{s=1}^{\ell -1} \E \left[ \tauh_s \tr \left(\Slinv k(\cdot, X_s)\otimes k(\cdot, X_s) \Slinv k(\cdot, X_\ell)\otimes k(\cdot, X_\ell) \right)\right].\label{Eq:13}
 \end{align}
 Now, using law of iterated expectations, we have
 \begin{align}
 \eqref{Eq:13}&= \dfrac{1}{t^2} \sum_{\ell = 2}^t \sum_{s=1}^{\ell -1} \E \left[ \E[\tauh_s \tr \left(\Slinv k(\cdot, X_s)\otimes k(\cdot, X_s) \Slinv k(\cdot, X_\ell)\otimes k(\cdot, X_\ell) \right) \middle| \mathcal{F}_{s}]\right] \nonumber \\
  &= \dfrac{1}{t^2} \sum_{\ell = 2}^t \sum_{s=1}^{\ell -1} \E \left[ \tauh_s \tr \left(\Slinv k(\cdot, X_s)\otimes k(\cdot, X_s) \Slinv \E[k(\cdot, X_\ell)\otimes k(\cdot, X_\ell)  \middle| \mathcal{F}_{s}]\right)\right] \nonumber\\ 
 &= \dfrac{1}{t^2} \sum_{\ell = 2}^t \sum_{s=1}^{\ell -1} \E \left[ \E\left(\tauh_s \tr \left(\Slinv k(\cdot, X_s)\otimes k(\cdot, X_s) \Slinv \Sigma \right)| \mathcal{F}_{s-1}, X_s\right)\right] \nonumber \\
 &= \dfrac{1}{t^2} \sum_{\ell = 2}^t \sum_{s=1}^{\ell -1} \E \left[ \tr \left(\E(\tauh_s| \mathcal{F}_{s-1}, X_s)\Slinv k(\cdot, X_s)\otimes k(\cdot, X_s) \Slinv \Sigma \right)\right] \nonumber \\
  &= \dfrac{1}{t^2} \sum_{\ell = 2}^t \sum_{s=1}^{\ell -1} \E \left[ \tr \left(\E(\tauh_s| \mathcal{F}_{s-1}, X_s)\Slinv k(\cdot, X_s)\otimes k(\cdot, X_s) \Slinv \Sigma \right)\right] \nonumber \\
   &= \dfrac{1}{t^2} \sum_{\ell = 2}^t \sum_{s=1}^{\ell -1} \E \left[ \tr \left(\Slinv k(\cdot, X_s)\otimes k(\cdot, X_s) \Slinv \Sigma \right)\right] \nonumber \\
 &= \dfrac{1}{t^2} \sum_{\ell = 2}^t (\ell - 1) N_{\Sigma, 2}(\lambda)= \dfrac{1}{t^2} \left[\dfrac{t(t-1)}{2} \right] N_{\Sigma, 2}(\lambda)\nonumber \\
 &= \dfrac{N_{\Sigma, 2}(\lambda)}{2} - \dfrac{N_{\Sigma,2}(\lambda)}{2t}. \label{eq: part1_sum3_bound3}
 \end{align}
Putting  together \eqref{eq: part1_sum1_bound1}, \eqref{eq: part1_sum2_bound2} and \eqref{eq: part1_sum3_bound3} in \eqref{eq: part1_three_cases}, and the result in \eqref{eq: B_t_eq1}, we get,
\begin{align}
\E\normx{B_t}_{HS}^2 & \leq \dfrac{1}{t^2} \left(\sum_{s=1}^t \dfrac{L-1}{\epsilon_s}\right) N_{\Sigma, 1}(\lambda) \dfrac{\kappa}{\lambda}  + 2 \left(\dfrac{N_{\Sigma,2}(\lambda)}{2} - \dfrac{1}{2t} N_{\Sigma,2}(\lambda)\right) - N_{\Sigma,2}(\lambda)\nonumber\\
& \leq \dfrac{1}{t^2} \left(\sum_{s=1}^t \dfrac{L-1}{\epsilon_s} \right) N_{\Sigma,1}(\lambda) \dfrac{\kappa}{\lambda}. \label{eq: EBt_HS_sq}
\end{align}
Using Assumption \ref{assump: Levaluedecay_rep}, we obtain,
\begin{align*}
\E\normx{B_t}_{HS}^2 & \le A_1(\bar{C},\alpha) \dfrac{\kappa}{t^2} \left(\sum_{s=1}^t \dfrac{L-1}{\epsilon_s} \right) \lambda^{-(1 + 1/\alpha)}, \ \alpha > 1,
\end{align*}
where $A_1(\bar{C},\alpha)$ is a constant depending only on $\bar{C}$ and $\alpha$. 
Therefore,
\begin{align*}
P\left(\normx{B_t}_{HS} \geq \frac{1}{2}\right) & \leq 4 \E\normx{B_t}_{HS}^2
\le \dfrac{4 \kappa}{t^2} A_1(\bar{C},\alpha)\lambda^{-(1 + 1/\alpha)} \left(\sum_{s=1}^t \dfrac{L-1}{\epsilon_s} \right).
\end{align*}
Thus for $\delta > 0$, choosing:
\begin{align*}
\lambda \ge \left[\dfrac{4 \kappa A_1(\bar{C},\alpha)}{t^2 \delta} \left(\sum_{s=1}^t \dfrac{L-1}{\epsilon_s}\right)\right]^{\alpha/(1+\alpha)}, \ \alpha > 1, 
\end{align*}
yields
$$P\left(\normx{B_t}_{HS} \geq \frac{1}{2}\right) \leq \delta,$$
implying that with probability at least $1-\delta$, \begin{equation}\mathcal{S}_1\le \sqrt{2}.\label{Eq:s1}\end{equation}

We now bound $\mathcal{S}_2$ as
\begin{align}
\mathcal{S}_2 &= \left\Vert\left({\Sigma} + \lambda I \right)^{-1/2} \left[ -\lambda f_1 + \dfrac{1}{t} \sum_{s=1}^t \dfrac{I\{\hat{a}_s = 1\}}{\hat{\pi}_s} k(\cdot, X_s) e_s \right]\right\Vert_\mathcal{H}\nonumber\\
&\le \left\| (\Sigma + \lambda I)^{-1/2} \dfrac{1}{t} \sum_{s=1}^t \dfrac{I\{\ahat_s = 1\}}{\pih_s} k(\cdot, X_s) e_s \right\|_{\mathcal{H}} + \lambda\normx{(\Sigma + \lambda I)^{-1/2} f_1}_{\mathcal{H}}. \label{eq: esterror_decomp2}
\end{align}
For the second term in \eqref{eq: esterror_decomp2},  using Assumption \ref{assump: Lsmoothness_cond_rep}, we have
\begin{align}
\left\| (\Sigma + \lambda I)^{-1/2} f_1 \right\|_\mathcal{H} &\leq \left\|(\Sigma + \lambda I)^{-1/2} \Sigma^{\gamma_1} h \right\|_\mathcal{H}
 \leq \left\| (\Sigma + \lambda I)^{-1/2} \Sigma^{\gamma_1}\right\|_{\infty} \normx{\Sigma^{-\gamma_1} f_1}_{\mathcal{H}}\nonumber\\
&\leq \sup_i \dfrac{\eta_i^{\gamma_1}}{(\eta_i + \lambda)^{1/2}}  \normx{\Sigma^{-\gamma_1} f_1}_{\mathcal{H}}
 \leq \sup_{x \geq 0} \left[\dfrac{x^{2\gamma_1}}{x + \lambda}  \right]^{1/2} \normx{\Sigma^{-\gamma_1} f_1}_{\mathcal{H}}\nonumber\\
 &\le \lambda^{\gamma_1-\frac{1}{2}} \normx{\Sigma^{-\gamma_1} f_1}_{\mathcal{H}},\label{Eq:s22} 
\end{align}
where the last inequality follows by noting that
for $0 < \gamma_1 \leq 1/2$,
\begin{align*}
\left(\sup_{x \geq 0} \dfrac{x^{2\gamma_1}}{x + \lambda} \right)^{1/2} &= \left(\sup_{x \geq 0} \left(\dfrac{x}{x + \lambda} \right)^{2 \gamma_1} \dfrac{1}{(x + \lambda)^{1-2\gamma_1}}  \right)^{1/2} \leq  \lambda^{\gamma_1 - \frac{1}{2}}.
\end{align*}
For any $\xi>0$, applying Chebyshev's inequality to the first term in \eqref{eq: esterror_decomp2} yields
\begin{align}
P &\left( \left\| (\Sigma + \lambda I)^{-1/2} \dfrac{1}{t} \sum_{s=1}^t \dfrac{I\{\ahat_s = 1\}}{\pih_s} k(\cdot, X_s) e_s  \right\|_\mathcal{H} \geq \xi \right) \nonumber\\
&\leq \dfrac{1}{\xi^2}\E \left\| (\Sigma + \lambda I)^{-1/2} \dfrac{1}{t} \sum_{s=1}^t \dfrac{I\{\ahat_s = 1\}}{\pih_s} k(\cdot, X_s) e_s  \right\|_\mathcal{H}^2 \nonumber\\
&= \dfrac{1}{\xi^2}\E \left\langle 	(\Sigma + \lambda I)^{-1/2} \dfrac{1}{t} \sum_{s=1}^t \dfrac{I\{\ahat_s = 1\}}{\pih_s} k(\cdot, X_s) e_s, (\Sigma + \lambda I)^{-1/2} \dfrac{1}{t} \sum_{\ell=1}^t \dfrac{I\{\ahat_\ell = 1\}}{\pih_\ell} k(\cdot, X_\ell) e_\ell\right\rangle_\mathcal{H} \nonumber\\
&= \dfrac{1}{\xi^2}\E  \left\langle 	(\Sigma + \lambda I)^{-1}, \dfrac{1}{t^2} \sum_{s=1}^t \sum_{\ell = 1}^t \dfrac{I\{\ahat_s = 1\}}{\pih_s}  \dfrac{I\{\ahat_\ell = 1\}}{\pih_\ell}e_s e_\ell k(\cdot, X_s) \otimes k(\cdot, X_\ell) \right\rangle_{HS}\nonumber\\
&=   \dfrac{1}{\xi^2}\left\langle 	(\Sigma + \lambda I)^{-1}, \E \left(  \dfrac{1}{t^2} \sum_{s=1}^t \sum_{\ell = 1}^t \dfrac{I\{\ahat_s = 1\}}{\pih_s}  \dfrac{I\{\ahat_\ell = 1\}}{\pih_\ell}e_s e_\ell k(\cdot, X_s) \otimes k(\cdot, X_\ell) \right)\right\rangle_{HS}. \label{eq: part2_term1_bound}
\end{align}
In the following, we simplify the expectation term in \eqref{eq: part2_term1_bound} by considering three cases for the double sum: (1) $\ell = s$, (2) $\ell > s$ and (3) $\ell < s$. Recall $\tauh_s = I\{\ahat_s = 1\}/\pih_s$, where $\pih_s = P(\ahat_s = 1| \mathcal{F}_{s-1}, X_s)$. Consider
\begin{align*}
\E &\left(  \dfrac{1}{t^2} \sum_{s=1}^t \sum_{\ell = 1}^t \dfrac{I\{\ahat_s = 1\}}{\pih_s}  \dfrac{I\{\ahat_\ell = 1\}}{\pih_\ell}e_s e_\ell k(\cdot, X_s) \otimes k(\cdot, X_\ell) \right) \\
&= \E \left(  \dfrac{1}{t^2} \sum_{s=1}^t  \dfrac{I\{\ahat_s = 1\}}{\pih_s^2}  e_s^2 k(\cdot, X_s) \otimes k(\cdot, X_s) \right)
 + \E \left(  \dfrac{1}{t^2} \sum_{\ell=1}^{t-1} \sum_{s = \ell + 1}^t \tauh_s \tauh_\ell e_s e_\ell k(\cdot, X_s) \otimes k(\cdot, X_\ell) \right)\\
& \quad \quad \quad + \E \left(\dfrac{1}{t^2} \sum_{\ell=2}^{t} \sum_{s = 1}^{\ell - 1} \tauh_s \tauh_\ell e_s e_\ell k(\cdot, X_s) \otimes k(\cdot, X_\ell) \right)\\
&=\circled{\footnotesize{4}}+\circled{\footnotesize{5}}+\circled{\footnotesize{6}},
\end{align*}
where 
\begin{align}
\circled{\footnotesize{4}}& = \E \left(  \dfrac{1}{t^2} \sum_{s=1}^t  \dfrac{I\{\ahat_s = 1\}}{\pih_s^2}  e_s^2 k(\cdot, X_s) \otimes k(\cdot, X_s) \right)\nonumber\\
&=\dfrac{1}{t^2} \sum_{s=1}^t \E \left[ \E \left(\dfrac{I\{\ahat_s = 1\}}{\pih_s^2}  e_s^2 k(\cdot, X_s) \otimes k(\cdot, X_s) \middle| \mathcal{F}_{s-1}, X_s, \hat{a}_s\right) \right]\nonumber
  \end{align}
 \begin{align}
 & = \dfrac{1}{t^2} \sum_{s=1}^t \E \left[ \dfrac{I\{\ahat_s = 1\}}{\pih_s^2}  k(\cdot, X_s) \otimes k(\cdot, X_s) \E \left( e_s^2  \middle| \mathcal{F}_{s-1}, X_s, \hat{a}_s \right) \right]\nonumber\\
 & \preceq \dfrac{\sigma^2}{t^2} \sum_{s=1}^t \E \left[ \dfrac{I\{\ahat_s = 1\}}{\pih_s^2}  k(\cdot, X_s) \otimes k(\cdot, X_s) \right] \nonumber \\
&= \dfrac{\sigma^2}{t^2} \sum_{s=1}^t \E \left[ \E \Bigg[ \dfrac{I\{\ahat_s = 1\}}{\pih_s^2}  k(\cdot, X_s) \otimes k(\cdot, X_s)\middle| \mathcal{F}_{s-1}, X_s \Bigg]\right] \nonumber\\
&= \dfrac{\sigma^2}{t^2} \sum_{s=1}^t \E\left[ \dfrac{1}{\pih_s} k(\cdot, X_s) \otimes k(\cdot, X_s) \right]\preceq \dfrac{\sigma^2}{t^2} \left(\sum_{s=1}^t \dfrac{L-1}{\epsilon_s} \right) \Sigma, \label{eq: part2_term1_case1_bound}
\end{align}
where the above inequality follows from  \ref{assump:independenterrors}, and the fact that $\pih_s \geq \epsilon_s/(L-1)$.
$D_{s\ell} := \{\hat{a}_s = \hat{a}_\ell = 1\}$. Then 
the complement of this event consists of situations when the two arms are not the same or either (or both) are not arm $1$, i.e., one or both of $\hat{\tau_s}$ and $\hat{\tau_\ell}$ will be zero. Therefore,
\begin{align}
\circled{\footnotesize{5}}&=\E \left(  \dfrac{1}{t^2} \sum_{\ell=1}^{t-1} \sum_{s = \ell + 1}^t \tauh_s \tauh_\ell e_s e_\ell k(\cdot, X_s) \otimes k(\cdot, X_\ell) \right)\nonumber\\
&= \dfrac{1}{t^2} \sum_{\ell = 1}^{t-1} \sum_{s = \ell + 1}^t \E \left[ E\left( \tauh_s \tauh_\ell e_s e_\ell k(\cdot, X_s) \otimes k(\cdot, X_\ell)\middle|\mathcal{F}_{s-1}, X_s, D_{s\ell} \right) \right]\nonumber\\
& = \dfrac{1}{t^2} \sum_{\ell = 1}^{t-1} \sum_{s = \ell + 1}^t \E \left[ \dfrac{1}{\pih_s \hat{\pi}_{\ell}} \E (e_s e_\ell|\mathcal{F}_{s-1}, X_s,  D_{s\ell}) k(\cdot, X_s) \otimes k(\cdot, X_\ell)  \right] \nonumber\\
&= \dfrac{1}{t^2} \sum_{\ell = 1}^{t-1} \sum_{s = \ell + 1}^t \E \left[\dfrac{1}{\pih_s \hat{\pi}_{\ell}}\E (e_s |\mathcal{F}_{s-1}, X_s,  \hat{a}_s = 1) \E (e_\ell |\mathcal{F}_{s-1}, X_s,  \hat{a}_\ell = 1) k(\cdot, X_s) \otimes k(\cdot, X_\ell)  \right] \label{eq: part2_A2_etindxt}\\
&= 0, \label{eq: part2_term1_case2_bound}
\end{align}
where \eqref{eq: part2_A2_etindxt} follows from \ref{assump: errorsindependentX}, i.e., errors and covariates at time $t$ are independent for a given arm. 
Similar to $\circled{\footnotesize{5}}$, we obtain
\begin{align}
\circled{\footnotesize{6}} &=\E \left(\dfrac{1}{t^2} \sum_{\ell=2}^{t} \sum_{s = 1}^{\ell - 1} \tauh_s \tauh_\ell e_s e_\ell k(\cdot, X_s) \otimes k(\cdot, X_\ell) \right)\nonumber\\
&= \dfrac{1}{t^2} \sum_{\ell=2}^{t} \sum_{s = 1}^{\ell - 1} \E \left[ E\left( \tauh_s \tauh_\ell e_s e_\ell k(\cdot, X_s) \otimes k(\cdot, X_\ell)\middle|\mathcal{F}_{\ell-1}, X_\ell, D_{s\ell} \right) \right]\nonumber\\
& = \dfrac{1}{t^2} \sum_{\ell=2}^{t} \sum_{s = 1}^{\ell - 1} \E \left[ \dfrac{1}{\pih_s \hat{\pi}_{\ell}} \E (e_s e_\ell|\mathcal{F}_{\ell-1}, X_\ell,  D_{s\ell}) k(\cdot, X_s) \otimes k(\cdot, X_\ell)  \right] \nonumber\\
&=  \dfrac{1}{t^2}\sum_{\ell=2}^{t} \sum_{s = 1}^{\ell - 1} \E \left[\dfrac{1}{\pih_s \hat{\pi}_{\ell}}\E (e_s |\mathcal{F}_{\ell-1}, X_\ell,  \hat{a}_s = 1) \E (e_\ell |\mathcal{F}_{\ell-1}, X_\ell,  \hat{a}_\ell = 1) k(\cdot, X_s) \otimes k(\cdot, X_\ell)  \right]\nonumber\\
&= 0. \label{eq: part2_term1_case3_bound}
\end{align}
Combining \eqref{eq: part2_term1_case1_bound}, \eqref{eq: part2_term1_case2_bound} and \eqref{eq: part2_term1_case3_bound} in \eqref{eq: part2_term1_bound}, we obtain
\begin{align}
P \left( \left\| (\Sigma + \lambda I)^{-1/2} \dfrac{1}{t} \sum_{s=1}^t \dfrac{I\{\ahat_s = 1\}}{\pih_s} k(\cdot, X_s) e_s  \right\|_\mathcal{H} \geq \xi \right) 
 \leq \dfrac{1}{\xi^2}\left\langle 	(\Sigma + \lambda I)^{-1}, \dfrac{\sigma^2}{t^2} \left(\sum_{s=1}^t \dfrac{L-1}{\epsilon_s} \right) \Sigma \right\rangle_{HS} \nonumber
   \end{align}
 \begin{align}
& = \dfrac{\sigma^2}{t^2 \xi^2} \left(\sum_{s=1}^t \dfrac{L-1}{\epsilon_s}\right) \tr \left[(\Sigma + \lambda I)^{-1} \Sigma\right]
= \dfrac{\sigma^2}{t^2 \xi^2} \left(\sum_{s=1}^t \dfrac{L-1}{\epsilon_s}\right) N_{\Sigma,1}(\lambda)\label{Eq:tempu}\\
&\le \dfrac{\sigma^2 A_1(\bar{C},\alpha)}{t^2 \xi^2} \left( \sum_{s=1}^t \dfrac{L-1}{\epsilon_s} \right) \lambda^{-1/\alpha}, \label{eq: term1_final_bound_expectation}
\end{align}
where we used Assumption \ref{assump: Levaluedecay_rep} in the last inequality. Combining \eqref{Eq:s22} and \eqref{eq: term1_final_bound_expectation} in \eqref{eq: esterror_decomp2}, and choosing \begin{align*}
\xi = \left[  \dfrac{\sigma^2 A_1(\bar{C},\alpha)}{\delta t^2} \left( \sum_{s=1}^t \dfrac{L-1}{\epsilon_s} \right) \lambda^{- 1/\alpha}\right]^{1/2},
\end{align*}
yields that with probability at least $1-\delta$,
\begin{align}\mathcal{S}_2\le \left[  \dfrac{\sigma^2 A_1(\bar{C},\alpha)}{\delta t^2} \left( \sum_{s=1}^t \dfrac{L-1}{\epsilon_s} \right) \lambda^{- 1/\alpha}\right]^{1/2}+\lambda^{\gamma_1+\frac{1}{2}}\Vert \Sigma^{-\gamma_1}f_1\Vert_\mathcal{H}.\label{Eq:s2}\end{align}
Using \eqref{Eq:s1} and \eqref{Eq:s2} in \eqref{eq: norm_ineq_estimation_error} yields that with probability at least $1-2\delta$, we have
\begin{align}
 \Vert \hat{f}_{1,t}-f_1\Vert_\mathcal{H}&\le \left[  \dfrac{2\sigma^2 A_1(\bar{C},\alpha)}{\delta t^2} \left( \sum_{s=1}^t \dfrac{L-1}{\epsilon_s} \right) \lambda^{- (1+1/\alpha)}\right]^{1/2}+\sqrt{2}\lambda^{\gamma_1}\Vert \Sigma^{-\gamma_1}f_1\Vert_\mathcal{H}\nonumber\\
 &\le \sqrt{2}\max\{C_0, C_1\} \left[ \left(\dfrac{1}{{\delta} t^2} \left(\sum_{s=1}^t \dfrac{1}{\epsilon_s}\right) \lambda^{-(1+1/\alpha)} \right)^{1/2} + \lambda^{\gamma_1}\right],   \label{eq: balancing_term_thm1}
\end{align}
where
 $C_0 = \sqrt{\sigma^2 (L-1) A_1(\bar{C},\alpha)}$ and $C_1 = \normx{\Sigma^{-\gamma_1} f_1}_\mathcal{H}$. The result follows by choosing 
\begin{align}
{\lambda=\lambda_{1,t} := \left[\dfrac{1}{\delta t^2} \left(\sum_{s=1}^t \dfrac{1}{\epsilon_s}\right) \right]^{\alpha/(2\gamma_1 \alpha + \alpha + 1)}} \nonumber
\end{align}
in \eqref{eq: balancing_term_thm1}.
Also, the same proof works for all arms $i = 1,\hdots,L$, by defining $\hat{\pi}_s = P(\hat{a}_s = i| \mathcal{F}_{s-1}, X_s)$.

\subsection{Proof of Theorem~\ref{thm: theorem_finiteRKHS}}
\label{proof: proof2}
Since most steps in the proof of Theorem \ref{thm: theorem_finiteRKHS} follow that of the proof of Theorem~\ref{thm: theorem1}, we only highlight the differences when $\mathcal{H}$ is finite-dimensional. Again, without loss of generality, we assume that $i = 1$ and the proof for other arms follows similarly.  Recall, $\hat{\pi}_s := P(\hat{a}_s = 1| \mathcal{F}_{s-1}, X_s)$. Note that \ref{assump: min_evalue} implies $\mathcal{H}$ is finite-dimensional. Define $d:=\text{dim}(\mathcal{H})$. Then
\begin{align}
N_{\Sigma, 1}(\lambda) &= \text{Tr}((\Sigma + \lambda I)^{-1} \Sigma) = \sup_i \dfrac{\eta_i(\Sigma) d}{(\eta_i(\Sigma) + \lambda)} \leq   d,\,\,\,\text{and}\,\,\, \normx{(\Sigma + \lambda I)^{-1}}_{\infty} \leq  \dfrac{1}{\eta}.\label{eq: effectivedim_boundFinite}
\end{align}
Therefore \eqref{eq: norm_ineq_estimation_error} modifies to
\begin{align}
\normx{\fh_{1,t} - f_1}_{\mathcal{H}} 
& \leq \normx{(\Sigmah_{1,t} + \lambda I)^{-1/2}}_{\infty} \normx{ (\hat{\Sigma}_{1,t} + \lambda I)^{-1/2} (\Sigma + \lambda I)^{1/2}}_{\infty} \nonumber\\
& \quad \quad \times \norm{ \left({\Sigma} + \lambda I \right)^{-1/2}  \left[ -\lambda f_1 + \dfrac{1}{t} \sum_{s=1}^t \dfrac{I\{\hat{a}_s = 1\}}{\hat{\pi}_s} k(\cdot, X_s) \epsilon_s \right]}_{\mathcal{H}}\nonumber\\
& \leq \normx{(\Sigma + \lambda I)^{-1/2}}_{\infty} \normx{ (\hat{\Sigma}_{1,t} + \lambda I)^{-1/2} (\Sigma + \lambda I)^{1/2}}_{\infty}^2 \nonumber\\
& \quad \quad \times \norm{ \left({\Sigma} + \lambda I \right)^{-1/2}  \left[ -\lambda f_1 + \dfrac{1}{t} \sum_{s=1}^t \dfrac{I\{\hat{a}_s = 1\}}{\hat{\pi}_s} k(\cdot, X_s) \epsilon_s \right]}_{\mathcal{H}} \nonumber \\
& \leq \frac{\mathcal{S}^2_1\mathcal{S}_2}{\sqrt{\eta}}. 
\label{eqF: estimation_error_decomp}
\end{align}
We now bound $\mathcal{S}_1$ and $\mathcal{S}_2$. Bounding $\mathcal{S}_1$ proceeds exactly as in the proof of Theorem~\ref{thm: theorem1} until \eqref{Eq:tempoo} yielding
\begin{align}
\circled{\footnotesize{1}}&\le  \dfrac{1}{t^2} \sum_{s=1}^t \dfrac{L-1}{\epsilon_s}  \tr \left( \Slinv \Sigma \right) \normx{\Slinv}_{\infty} \sup_{X_s}\normx{k(\cdot, X_s)\otimes k(\cdot, X_s) }_{\infty}\nonumber\\
&\le \dfrac{1}{t^2} \left(\sum_{s=1}^t \dfrac{L-1}{\epsilon_s}  \right) \frac{d \kappa}{\eta},\label{eqF: part1_sum1_bound11}
\end{align}
where we use \eqref{eq: effectivedim_boundFinite} in the last line of the above inequality. Putting together \eqref{eqF: part1_sum1_bound11}, \eqref{eq: part1_sum2_bound2} and \eqref{eq: part1_sum3_bound3} in \eqref{eq: part1_three_cases}, and the result in \eqref{eq: B_t_eq1}, we get the following analog of \eqref{eq: EBt_HS_sq}:
\begin{align*}
\E\normx{B_t}_{HS}^2 \leq \dfrac{(L-1)d \kappa}{\eta} \left(\frac{1}{t^2}\sum_{s=1}^t \dfrac{1}{\epsilon_s}\right). 
\end{align*}
Therefore, using Chebyshev's inequality,
\begin{align*}
P\left( \normx{B_t}_{HS} \geq \frac{1}{2} \right) &\leq 4\E\normx{B_t}_{HS}^2
\leq \dfrac{4(L-1)d\kappa}{\eta}\left(\frac{1}{t^2}\sum_{s=1}^t \dfrac{1}{\epsilon_s}\right)\le\delta.
\end{align*}
Thus for $\delta > 0$, 
with probability at least $1- \delta$, we obtain
\begin{align*}
\mathcal{S}^2_1 \leq 2. 
\end{align*}
To bound $\mathcal{S}_2$, we bound the second term in \eqref{eq: esterror_decomp2} as 
\begin{align}
\lambda \normx{(\Sigma + \lambda I)^{-1/2} f_1}_{\mathcal{H}} \le \lambda\normx{(\Sigma + \lambda I)^{-1/2} }_{\infty} \normx{ f_1}_{\mathcal{H}}
 \leq \dfrac{\lambda}{\sqrt{\eta}} \normx{f_1}_\mathcal{H}.\label{eq:s2-2}
\end{align}
For bounding the first term of \eqref{eq: esterror_decomp2}, we follow the same steps as in the proof of Theorem~\ref{thm: theorem1} until \eqref{Eq:tempu}. By using \eqref{eq: effectivedim_boundFinite} in \eqref{Eq:tempu}, we obtain
\begin{align*}
P\left( \left\| (\Sigma + \lambda I)^{-1/2} \dfrac{1}{t} \sum_{s=1}^t \dfrac{I\{\ahat_s = 1\}}{\pih_s} k(\cdot, X_s) e_s  \right\|_\mathcal{H} \geq \xi \right) \leq \dfrac{(L-1) \sigma^2 d}{\xi^2} \left(\frac{1}{t^2}\sum_{s=1}^t \dfrac{1}{\epsilon_s}\right),
\end{align*}
i.e., with probability at least $1-\delta$,
\begin{align}
\left\| (\Sigma + \lambda I)^{-1/2} \dfrac{1}{t} \sum_{s=1}^t \dfrac{I\{\ahat_s = 1\}}{\pih_s} k(\cdot, X_s) e_s  \right\|_\mathcal{H} \le \sqrt{\dfrac{(L-1) \sigma^2 d}{\delta}} \left(\frac{1}{t^2}\sum_{s=1}^t \dfrac{1}{\epsilon_s}\right)^{\frac{1}{2}}.\label{Eq:s2-1}
\end{align}
Combining \eqref{eq:s2-2} and \eqref{Eq:s2-1} in \eqref{eq: esterror_decomp2}, we obtain that with probability at least $1-\delta$,
$$\mathcal{S}_2\le \sqrt{\dfrac{(L-1) \sigma^2 d}{\delta}} \left(\frac{1}{t^2}\sum_{s=1}^t \dfrac{1}{\epsilon_s}\right)^{\frac{1}{2}}+\frac{\lambda}{\sqrt{\eta}}\Vert f_1\Vert_\mathcal{H}.$$
Using these bounds on $\mathcal{S}_1$ and $\mathcal{S}_2$ in \eqref{eqF: estimation_error_decomp}, we obtain that with probability at least $1-2\delta$,
\begin{align}
\| \hat{f}_{1,t} - f_1 \|_\mathcal{H} & \leq \left[ \dfrac{4(L-1) \sigma^2d}{\delta \eta} \left( \frac{1}{t^2}\sum_{s=1}^t \dfrac{1}{\epsilon_s} \right) \right]^{1/2} + \dfrac{2\lambda}{\eta}\normx{f_1}_\mathcal{H} \nonumber\\
&\le \frac{2}{{\sqrt{\delta}}}\max\{\tilde{C}_0, \tilde{C}_1\} \left[\left(\dfrac{1}{ t^2} \left( \sum_{s=1}^t \dfrac{1}{\epsilon_s} \right) \right)^{1/2} + \lambda\right],\nonumber
\end{align}
where
$\tilde{C}_0 := \sqrt{(L-1)\sigma^2 d/\eta }$ and $\tilde{C}_1:= \normx{f_1}_\mathcal{H}/\eta$. The result, therefore, follows by choosing $$\lambda=\lambda_t := \left[\dfrac{1}{t^2} \left(\sum_{s=1}^t \dfrac{1}{\epsilon_s}\right) \right]^{1/2}.$$

Next, we derive the bound for the estimation error in expectation. From the above, note that, for $i = 1,\hdots, L$ and any $\delta > 0$,

\begin{align*}
    P(\normx{\hat{f}_{i,t} - f_i}_\mathcal{H} \le\theta_t) \geq 1 - 2\delta,
\end{align*}
where,
\begin{align*}
    \theta_t = \frac{2}{\sqrt{\delta}}\max\{\tilde{C}_0, \tilde{C}_i\} \left[\dfrac{1}{t^2} \left(\sum_{s=1}^t \dfrac{1}{\epsilon_s} \right)\right]^{1/2}.
\end{align*}
\begin{align*}
    P(\normx{\hat{f}_{i,t} - f_i}_\mathcal{H} > \theta)  
   &\le \min\left\{1,\left(\dfrac{4 (\max\{\tilde{C}_0, \tilde{C}_i\})^2}{\theta^2} \right) \left[\dfrac{1}{t^2} \left(\sum_{s=1}^t \dfrac{1}{\epsilon_s} \right) \right]\right\}.
\end{align*}
The expectation result therefore follows by using  $\E(\normx{\hat{f}_{i,t} - f_i}^{1+\zeta}_\mathcal{H}) = \int_0^\infty P(\normx{\hat{f}_{i,t} - f_i}_\mathcal{H} > \theta)\theta^\zeta d\theta$.
Note, that same proof works for all arms $i = 1,\hdots,L$, by defining $\hat{\pi}_s = P(\hat{a}_s = i| \mathcal{F}_{s-1}, X_s)$. 

\subsection{Proof of Theorem~\ref{thm: regret_infinite_case_pre}}
\label{proof: proof_regret_infinite}
The randomization error in the regret decomposition in \eqref{eq: regret_breakup} can be bounded as
\begin{align}
&\sum_{t=1}^T \left|f_{A_t}(X_t) - f_{\ahat_t}(X_t)\right| = \sum_{t=1}^T \left|\left\langle f_{A_t} - f_{\ahat_t}, k(\cdot, X_t)	\right\rangle_\mathcal{H}\right| \nonumber\\
&= \sum_{t=1}^T I\{\ahat_t \neq A_t \} |\left\langle  f_{A_t} - f_{\ahat_t} , k(\cdot, X_t)	\right\rangle_\mathcal{H}|
 \leq \kappa \sum_{t=1}^T I\{\ahat_t \neq A_t \} \normx{ f_{A_t} - f_{\ahat_t}}_{\mathcal{H}}\nonumber\\
& \leq \kappa \left(\sum_{t=1}^T I\{\ahat_t \neq A_t\} \right)^{1/p} \left(\sum_{t=1}^T \normx{f_{A_t} - f_{\ahat_t}}_\mathcal{H}^q\right)^{1/q}, \label{eq: holders_ineq}\\
 &\leq \kappa \left(\sum_{t=1}^T I\{\ahat_t \neq A_t\} \right)^{1/p} \left(T \sup_{\substack{a, a^\prime \in \mathcal{A} \\ a \neq a^\prime}}\normx{f_{a} - f_{a^\prime}}_\mathcal{H}^q\right)^{1/q} \nonumber\\
 & = \kappa T^{1/q}\left(\sum_{t=1}^T I\{\ahat_t \neq A_t\} \right)^{1/p} \sup_{\substack{a, a^\prime \in \mathcal{A} \\ a \neq a^\prime}}\normx{f_{a} - f_{a^\prime}}_\mathcal{H}, \label{eq: holders_randomization}
 \end{align} 
 where we obtain \eqref{eq: holders_ineq} by H\"older's inequality with $p$ and $q$ such that $\frac{1}{p} + \frac{1}{q} = 1$, for $p, q \in [1,\infty]$. 
 Next, we bound the first term in \eqref{eq: holders_randomization} below. By the law of iterated expectations, we obtain
\begin{align}
 \E[I\{\ahat_t \neq A_t\}] & = \E \left[ \E(I\{\ahat_t \neq A_t\} | \mathcal{F}_{t-1}, X_t) \right] 
 = \E[P(\ahat_t \neq A_t| \mathcal{F}_{t-1}, X_t)] = \dfrac{\epsilon_t}{L-1}. \label{eq: expectation_indicator}
 \end{align}
 Therefore, for any $\xi > 0$, Chebyshev's inequality yields
\begin{align}
& P\left(\left|\sum_{t=1}^T I\{\ahat_t  \neq A_t\}  - \sum_{t=1}^T \dfrac{\epsilon_t}{L-1}\right| \geq \xi \right) 
\leq \dfrac{1}{\xi^2} \E \left[ \sum_{t=1}^T \left(I\{\ahat_t  \neq A_t\} - \dfrac{\epsilon_t}{L-1} \right)\right]^2\nonumber\\ 
& = \dfrac{1}{\xi^2} \E \left[\sum_{t=1}^T \sum_{s=1}^T \left(I\{\ahat_t \neq A_t\} I\{\ahat_s \neq A_s\} - \dfrac{\epsilon_t}{L-1} I\{\ahat_s \neq A_s\} - \dfrac{\epsilon_s}{L-1} I\{\ahat_t \neq A_t\} + \dfrac{\epsilon_t \epsilon_s}{(L-1)^2} \right) \right]. \nonumber
\end{align}
Now, we simplify the expectation in the r.h.s. of 
the above inequality by considering cases: (i) $s = t$, (ii) $s < t$ and (iii) $s > t$ as 
\begin{align*}
P \left(\left|\sum_{t=1}^T I\{\ahat_t  \neq A_t\}  - \sum_{t=1}^T \dfrac{\epsilon_t}{L-1}\right| \geq \xi \right)\le \circled{\footnotesize{7}}+\circled{\footnotesize{8}}+\circled{\footnotesize{9}},
\end{align*}
where
\begin{align}
\circled{\footnotesize{7}}& := \dfrac{1}{\xi^2} \E \left[\sum_{t=1}^T  \left(1 - \dfrac{2 \epsilon_t}{L-1} \right) I\{\ahat_t \neq A_t\}  + \dfrac{\epsilon_t^2}{(L-1)^2}  \right] \nonumber\\
& \stackrel{\eqref{eq: expectation_indicator}}{=} \dfrac{1}{\xi^2}\sum_{t=1}^T \left[  \left(1 - \dfrac{2 \epsilon_t}{L-1} \right) \left(\dfrac{\epsilon_t}{L-1}\right) + \dfrac{\epsilon_t^2}{(L-1)^2}  \right] \nonumber\\
& = \dfrac{1}{\xi^2} \sum_{t=1}^T \dfrac{\epsilon_t}{L-1} - \dfrac{\epsilon_t^2}{(L-1)^2} = \dfrac{1}{\xi^2} \sum_{t=1}^T \dfrac{\epsilon_t}{L-1} \left[1 - \dfrac{\epsilon_t}{L-1} \right],  \nonumber
\end{align}
and
\begin{align}
\circled{\footnotesize{8}}& := \dfrac{1}{\xi^2} \sum_{t=2}^T \sum_{s=1}^{t-1} \left[ \E\big[ I\{\ahat_t \neq A_t\} I\{\ahat_s \neq A_s\} \big] - \dfrac{\epsilon_t}{L-1} \E \big[I\{\ahat_s \neq A_s\} \big]  - \dfrac{\epsilon_s}{L-1} \E\big[I\{\ahat_t \neq A_t\}\big]\right.\nonumber\\
&\qquad\qquad\qquad\left.+ \dfrac{\epsilon_t \epsilon_s}{(L-1)^2}  \right]\nonumber\\
&=0,\nonumber
\end{align}    
which follows from \eqref{eq: expectation_indicator} and for $s<t$,
\begin{align*}
\E\big[ I\{\ahat_t \neq A_t\} I\{\ahat_s \neq A_s\} \big] &= \E\big[\E( I\{\ahat_t \neq A_t\} I\{\ahat_s \neq A_s\} \mid \mathcal{F}_{t-1}, X_t)\big]\\
& = \E\big[ I\{\ahat_s \neq A_s\} P(\ahat_t \neq A_t| \mathcal{F}_{t-1}, X_t)\big]\\
& = \E \Big[I\{\ahat_s \neq A_s\} \times \dfrac{\epsilon_t}{L-1} \Big]\\
&\stackrel{\eqref{eq: expectation_indicator}}{=} \left(\dfrac{\epsilon_s}{L-1}\right) \left(\dfrac{\epsilon_t}{L-1}\right) = \E[I\{\ahat_t \neq A_t\}] \E[I\{\ahat_s \neq A_s\}].
\end{align*}
Through similar calculation, it follows that
\begin{align*}
\circled{\footnotesize{9}}&:=\dfrac{1}{\xi^2}  \left[\sum_{t=1}^{T-1} \sum_{s=t+1}^{T} \E[I\{\ahat_t \neq A_t\} I\{\ahat_s \neq A_s\}] - \dfrac{\epsilon_t}{L-1} \E[I\{\ahat_s \neq A_s\}] - \dfrac{\epsilon_s}{L-1} \E[I\{\ahat_t \neq A_t\}]\right.\\
&\qquad\qquad\qquad\left.+ \dfrac{\epsilon_t \epsilon_s}{(L-1)^2} \right]\\
&=0.
\end{align*}
Therefore, we obtain
\begin{align*}
&P \left(\left|\sum_{t=1}^T I\{\ahat_t  \neq A_t\}  - \sum_{t=1}^T \dfrac{\epsilon_t}{L-1}\right| \geq \xi \right)  \leq \dfrac{1}{\xi^2} \sum_{t=1}^T \dfrac{\epsilon_t}{L-1} \left[1 - \dfrac{\epsilon_t}{L-1} \right] \leq \dfrac{1}{\xi^2} \sum_{t=1}^T \dfrac{\epsilon_t}{L-1},
\end{align*}
which implies for any $\delta>0$, with probability at least $1-\delta$,
\begin{align}
\sum_{t=1}^T \dfrac{\epsilon_t}{L-1} - \left[\dfrac{1}{\delta} \sum_{t=1}^T \dfrac{\epsilon_t}{L-1} \right]^{1/2}\le \sum_{t=1}^T I\{\ahat_t  \neq A_t\} \leq \sum_{t=1}^T \dfrac{\epsilon_t}{L-1} + \left[\dfrac{1}{\delta} \sum_{t=1}^T \dfrac{\epsilon_t}{L-1} \right]^{1/2}. \label{eq: bound_first_term_Randy}
\end{align}
Using \eqref{eq: bound_first_term_Randy} in \eqref{eq: holders_randomization} yields that with probability at least $1-\delta$,
\begin{align}
\sum_{t = 1}^T |f_{A_t}(X_t) - f_{\ahat_t}(X_t)| 
&\leq \kappa T^{1-\frac{1}{p}}\left[\sum_{t=1}^T \dfrac{\epsilon_t}{L-1} + \left\{\dfrac{1}{\delta} \sum_{t=1}^T \dfrac{\epsilon_t}{L-1}  \right\}^{1/2} \right]^{1/p}\sup_{\substack{a, a^\prime \in \mathcal{A} \\ a \neq a^\prime}}\normx{f_{a} - f_{a^\prime}}_\mathcal{H}. \label{eq:randomization_error}
\end{align}

Next, we construct an upper bound for the cumulative estimation error in the regret decomposition in \eqref{eq: regret_breakup}. By recalling $\mathcal{A}:=\{1,\ldots,L\}$,   consider
\begin{align}
\sup_{i\in \mathcal{A}}|(f_i(X_t) - \fh_{i}(X_t)) | & = \sup_{i\in\mathcal{A}}| \langle f_i - \fh_{i,t}, k(\cdot, X_t)\rangle_\mathcal{H}| 
\leq \kappa \sup_{i\in\mathcal{A}}\normx{f_i - \fh_{i,t}}_{\mathcal{H}}, \label{eq: estimation_to_instantregret}
\end{align}
where we will use Theorem~\ref{thm: theorem1} to bound 
\eqref{eq: estimation_to_instantregret}. To this end, by union bounding, we have
\begin{align}
    & P\left(\sup_{i \in \mathcal{A}} \Vert f_i - \fh_{i,t} \Vert_\mathcal{H} \ge  b_t \right) \leq \sum_{i=1}^L P\left(\Vert f_i - \fh_{i,t}\Vert_\mathcal{H} \ge b_t \right) \nonumber\\
    &\le\left(\frac{1}{t^2}\sum^t_{s=1}\frac{1}{\epsilon_s}\right)\sum^L_{i=1}\left(\frac{2\sqrt{2}\max\{C_0,C_i\}}{b_t}\right)^{1/w_i}  \nonumber\\
    &\le L\left(\frac{1}{t^2}\sum^t_{s=1}\frac{1}{\epsilon_s}\right)\max_{i\in\mathcal{A}} \left(\frac{2\sqrt{2}\max\{C_0,C_i\}}{b_t}\right)^{1/w_i}  \nonumber\\
    &\le L\left(\frac{1}{t^2}\sum^t_{s=1}\frac{1}{\epsilon_s}\right) \max_{i\in\mathcal{A}} \left(\frac{\Theta}{b_t}\right)^{1/w_i}\nonumber\\
&\le L\left(\frac{1}{t^2}\sum^t_{s=1}\frac{1}{\epsilon_s}\right) \times\begin{cases}
\left(\frac{\Theta}{b_t}\right)^{\max_{i\in\mathcal{A}}1/w_i},& b_t< \Theta\\
\left(\frac{\Theta}{b_t}\right)^{\min_{i\in\mathcal{A}}1/w_i},& b_t\ge \Theta
\end{cases},\nonumber
\end{align}
where $\Theta:=\max_{i\in\mathcal{A}} 2\sqrt{2}\max\{C_0,C_i\}$ and $w_i = {\gamma_i \alpha}/{(2 \gamma_i \alpha + \alpha + 1)}$. This means
with probability at least $1-\delta$,
\begin{align}
\sup_{i \in \mathcal{A}} \Vert f_i - \fh_{i,t} \Vert_\mathcal{H}\le \begin{cases}
\Theta \Delta_t^{\min_{i\in\mathcal{A}}w_i},& \Delta_t<1\\
\Theta \Delta_t^{\max_{i\in\mathcal{A}}w_i},& \Delta_t\ge 1\\
    \end{cases}=\begin{cases}
\Theta \Delta_t^{\frac{(\min_{i \in \mathcal{A}} \gamma_i) \alpha}{2(\min_{i \in \mathcal{A}} \gamma_i) \alpha + \alpha + 1}},& \Delta_t<1\\
\Theta \Delta_t^{\frac{(\max_{i \in \mathcal{A}} \gamma_i) \alpha}{2(\max_{i \in \mathcal{A}} \gamma_i) \alpha + \alpha + 1}},& \Delta_t\ge 1
    \end{cases},\label{Eq:tempii}
\end{align}
where $$\Delta_t:=\dfrac{L}{\delta t^2} \sum_{s=1}^t \dfrac{1}{\epsilon_s}$$ and used the fact that $h(x) = \frac{x \alpha}{2x\alpha + \alpha + 1}$ is a strictly  increasing function of $x$ for all $\alpha > 0$.
The result follows by using \eqref{Eq:tempii} in \eqref{eq: estimation_to_instantregret} and combining it with \eqref{eq:randomization_error} in \eqref{eq: regret_breakup}, while noting that $\lambda_{i,t}$ is given by the choice in \eqref{eq: Lthm_cond_lambda} in Theorem~\ref{thm: theorem1} but with $\delta$ replaced by $\delta/L$.

\subsection{Proof of Theorem~\ref{thm: regret_finite_case_pre}}
\label{proof: proof_thm4}
We bound the randomization error exactly as in the proof of Theorem~\ref{thm: regret_infinite_case_pre} in Section \ref{proof: proof_regret_infinite}. For bounding the cumulative estimation error, instead of Theorem~\ref{thm: theorem1}, we use the bound from Theorem~\ref{thm: theorem_finiteRKHS} in \eqref{eq: estimation_to_instantregret}. Then using the same union bounding idea as in the proof of Theorem~\ref{thm: regret_infinite_case_pre}, we obtain
\begin{align}
    & P\left(\sup_{i \in \mathcal{A}} \Vert f_i - \fh_{i,t} \Vert_\mathcal{H} \ge  b_t \right) \leq \sum_{i=1}^L P\left(\Vert f_i - \fh_{i,t}\Vert_\mathcal{H} \ge b_t \right) \nonumber\\
    &\le\left(\frac{1}{t^2}\sum^t_{s=1}\frac{1}{\epsilon_s}\right)\sum^L_{i=1}\left(\frac{4\max\{\tilde{C}_0,\tilde{C}_i\}}{b_t}\right)^{2}  
    \le L\left(\frac{1}{t^2}\sum^t_{s=1}\frac{1}{\epsilon_s}\right)\left(\frac{4\max\{\tilde{C}_0,\tilde{C}_*\}}{b_t}\right)^{2},  \nonumber
\end{align}
which implies that with probability at least $1-\delta$,
\begin{align}
\sup_{i \in \mathcal{A}} \Vert f_i - \fh_{i,t} \Vert_\mathcal{H}\le 4\max\{\tilde{C}_0,\tilde{C}_*\} \left(\frac{L}{ t^2}\sum^t_{s=1}\frac{1}{\epsilon_s}\right)^{1/2}.\nonumber
\end{align}
The result follows by using the above bound in \eqref{eq: estimation_to_instantregret} and combining it with \eqref{eq:randomization_error} in \eqref{eq: regret_breakup}.

\subsection{Proof of Theorem~\ref{theorem: finite_dim_with_Margin_condition}}
\label{proof: thm6}
 Since $L =2$, $a \in \{0,1\}$. It is important to note that we are working in the setting of a finite-dimensional RKHS. Recall, $A_s = \argmax_{a \in \{0,1\}} \hat{f}_{a, s-1}(X_s)$. Note that the regret in Definition \ref{def: Regret} can be written as
\begin{align*}
 R_T & = \sum_{s=1}^T |f_1(X_s) - f_0(X_s)| I\{\hat{a}_s \neq a^*_s\},
\end{align*}
where
\begin{align*}
I\{\hat{a}_s \neq a^*_s\} = I\{\hat{a}_s \neq a^*_s, a^*_s = A_s\} + I\{\hat{a}_s \neq a^*_s, a^*_s \neq A_s\}
\leq I\{\hat{a}_s \neq A_s\} + I \{A_s \neq a^*_s\}.
\end{align*}
Therefore,
\begin{align}
\E R_T &\leq \E \sum_{s=1}^T |f_1(X_s) - f_0(X_s)| I\{ \hat{a}_s \neq A_s\} + \E \sum_{s=1}^T |f_1(X_s) - f_0(X_s)| I\{ A_s \neq a^*_s\}\nonumber\\
&=R_T^{(1)}+R_T^{(2)}, \label{eq: expectedregretbreakupintoR1andR2}
\end{align}
where $$R_T^{(1)} := \E \sum_{s=1}^T |f_1(X_s) - f_0(X_s)| I\{ \hat{a}_s \neq A_s\}$$ is the error due to exploration (or randomization) and $$R_T^{(2)} := \E \sum_{s=1}^T |f_1(X_s) - f_0(X_s)| I\{ A_s \neq a^*_s\}$$ is the cumulative estimation error. Next, we bound these two terms as follows.
\begin{align}
R_T^{(2)} &= \E \sum_{s=1}^T |f_1(X_s) - f_0(X_s)| I\{ A_s \neq a^*_s\} \nonumber\\
& = \sum_{s=1}^T \E \left[|f_1(X_s) - f_0(X_s)| I\{ A_s \neq a^*_s\}  \right] \nonumber\\
& = \sum_{s=1}^T \E\left[ \E \left[|f_1(X_s) - f_0(X_s)| I\{ A_s \neq a^*_s\} \Big| \mathcal{F}_{s-1} \right] \right]. \label{eq: R_T(2)_expression}
\end{align}
We only consider the inner expectation from here onwards and find an upper bound for that.
Let $\hat{f}_{s-1} = \hat{f}_{1, s-1} - \hat{f}_{0, s-1}$ and $f_{-} = f_1 - f_0$. We have that
\begin{align}
\E & \left[|f_1(X_s) - f_0(X_s)| I\{ A_s \neq a^*_s\} \Big| \mathcal{F}_{s-1} \right] \nonumber\\
&=  -\E \left[\left( I\{ \hat{f}_{s-1}(X_s) \geq 0 \} - I\{f_{-}(X_s) \geq 0 \}  \right) I\{f_{-}(X_s) \neq 0\} f_{-}(X_s) \Big|  \mathcal{F}_{s-1}\right]\ge 0. \label{eq: Margin_cond_exp_term1}
 \end{align}
Similarly, we have that
\begin{align}
 \E\left[\left( I\{ \hat{f}_{s-1}(X_s) \geq 0 \} - I\{f_{-}(X_s) \geq 0 \}  \right) I\{f_{-}(X_s) \neq 0\} \hat{f}_{s-1}(X_s) \Big| \mathcal{F}_{s-1} \right] \geq 0. \label{eq: nonnegative_exp_term2}
\end{align}
Therefore using the fact that both \eqref{eq: Margin_cond_exp_term1} and \eqref{eq: nonnegative_exp_term2} are non-negative, we get
\begin{align}
  &\sum_{s=1}^T \E  \left[|f_1(X_s) - f_0(X_s)| I\{ A_s \neq a^*_s\} \Big| \mathcal{F}_{s-1} \right] \nonumber \\
  &\leq \sum_{s=1}^T \E \left[\left( I\{ \hat{f}_{s-1}(X_s) \geq 0 \} - I\{f_{-}(X_s) \geq 0 \}  \right) I\{f_{-}(X_s) \neq 0\}(\hat{f}_{s-1}(X_s) - f_{-}(X_s)) \Big|\mathcal{F}_{s-1}  \right]\nonumber\\
  &= S_1+S_2,\label{eq: R2_margin_simplified}
\end{align}
where for $\theta > 0$, 
\begin{align*}S_1&:=\sum_{s=1}^T \E\left[ I\{0 < |f_{-}(X_s)| \leq T^{-\theta}\} \left( I\{ \hat{f}_{s-1}(X_s) \geq 0 \} - I\{f_{-}(X_s) \geq 0 \}  \right)\right.\\
&\qquad\qquad\qquad \left.\times I\{f_{-}(X_s) \neq 0\}(\hat{f}_{s-1} - f_{-})(X_s) \Big| \mathcal{F}_{s-1}\right],\end{align*}
and 
\begin{align*}
S_2&:=\sum_{s=1}^T  \E \left[ I\{ |f_{-}(X_s)| > T^{-\theta}\} \left( I\{ \hat{f}_{s-1}(X_s) \geq 0 \} - I\{f_{-}(X_s) \geq 0 \}  \right)\right.\\
&\qquad\qquad\qquad \left.\times I\{f_{-}(X_s) \neq 0\}(\hat{f}_{s-1} - f_{-})(X_s) \Big| \mathcal{F}_{s-1} \right].
\end{align*}
Note that
\begin{align}
S_1 
&\leq \sum_{s=1}^T  \E\left[ I\{0 < |f_{-}(X_s)| \leq T^{-\theta}\} (\hat{f}_{s-1} - f_{-})(X_s) \Big| \mathcal{F}_{s-1} \right] \nonumber \\
& \leq \sum_{s=1}^T C \kappa T^{-\theta} \normx{ \hat{f}_{s-1} - f_{-} }_\mathcal{H} \label{eq: applyMarginConditionThm5}\\
&\leq C \kappa T^{-\theta} \sum_{s=1}^T \normx{ \hat{f}_{s-1} - f_{-} }_\mathcal{H} 
\leq   C \kappa T^{-\theta} \sum_{s=1}^T \left[\normx{\hat{f}_{1,s-1} - f_1 }_\mathcal{H} + \normx{\hat{f}_{0,s-1} - f_0}_\mathcal{H}\right], \label{eq: S1_split}
\end{align}
where \eqref{eq: applyMarginConditionThm5} follows from \ref{assump: AssumptionMargin} and the fact that $\sup_{x\in\mathcal{X}}k(x,x) \leq \kappa$. The last inequality follows from the definition of $\hat{f}_{s-1}$ and $f_{-}$. 
Now, taking the expectation of $S_1$ in \eqref{eq: S1_split} we get that,
\begin{align}
    \E S_1 &\leq C \kappa T^{-\theta} \sum_{s=1}^T \left[\E\normx{\hat{f}_{1,s-1} - f_1 }_\mathcal{H} + \E \normx{\hat{f}_{0,s-1} - f_0}_\mathcal{H}\right]\nonumber\\
    & \le 2C\kappa T^{-\theta} \max_{i\in\{0,1\}}B(\tilde{C}_0,\tilde{C}_i,0,\eta)
 \sum_{t=1}^T \left[\dfrac{1}{t^2} \sum_{s=1}^t \dfrac{1}{\epsilon_s}\right]^{1/2},\label{eq: boundforS1}
\end{align}
where the last inequality from \eqref{eq: estimation_bound_zeta_inexpectation} in Theorem \ref{thm: theorem_finiteRKHS}.

Next, to construct an upper bound for $S_2$, we use the fact that
\begin{align*}
I\{|(\hat{f}_{s-1} - f_{-}) (x) | > |f_{-}(x)| \} \geq I\{\hat{f}_{s-1}(x) \geq 0\} - I\{f_{-}(x) \geq 0\}.
\end{align*}
For $\zeta \ge 0$, we obtain
\begin{align}
S_2 
&\leq \sum_{s=1}^T \E\left[ I\{|f_{-}(X_s)| > T^{-\theta}\} I\{|(\hat{f}_{s-1} - f_{-})(X_s)| > |f_{-}(X_s)| \} | (\hat{f}_{s-1} - f_{-})(X_s)| \Big|\mathcal{F}_{s-1} \right] \nonumber\\
& \leq  \sum_{s=1}^T \E\left[ I\{|f_{-}(X_s)| > T^{-\theta}\} \dfrac{| (\hat{f}_{s-1} - f_{-})(X_s)|^{1+\zeta}}{|f_{-}(X_s)|^{\zeta}} \Big|\mathcal{F}_{s-1} \right] \nonumber\\
& \leq T^{\theta \zeta} \sum_{s=1}^T \E\left[ | (\hat{f}_{s-1} - f_{-})(X_s)|^{1 + \zeta} \Big|\mathcal{F}_{s-1} \right]\leq \kappa^{1 + \zeta} T^{\theta \zeta} \sum_{s=1}^T \normx{\hat{f}_{s-1} - f_{-}}_\mathcal{H}^{1+\zeta}. \label{eq: S2_bound}
\end{align}
Taking expectation of $S_2$, for the choice of $\lambda_{i,t}$ as in \eqref{eq: lamt_Thm6} and using \eqref{eq: estimation_bound_zeta_inexpectation},  we get that for $0 \le \zeta < 1$, 
\begin{align}
\E S_2  &\leq \kappa^{1+\zeta} T^{\theta \zeta} \sum_{s=1}^T \mathbb{E}\normx{\hat{f}_{s-1} - f_{-}}_\mathcal{H}^{1+\zeta}
\leq 2\max_{i\in\{0,1\}}B(\tilde{C}_0,\tilde{C}_i,\zeta,\eta) \kappa^{1 + \zeta} T^{\theta \zeta} \sum_{t=1}^T \left[\dfrac{1}{t} \sum_{s=1}^t \dfrac{1}{\epsilon_s}\right]^{(1+\zeta)/2}, \label{eq: boundonES2}
\end{align}
Combining \eqref{eq: boundforS1}, \eqref{eq: boundonES2}, and \eqref{eq: R2_margin_simplified} in \eqref{eq: R_T(2)_expression}, we obtain 
\begin{align}
    R^{(2)}_T& \le  \E S_1 + \E S_2 \nonumber\\
    & \lesssim 
    T^{-\theta} \sum_{t=1}^T  \left[\dfrac{1}{t^2} \sum_{s=1}^t \dfrac{1}{\epsilon_s} \right]^{1/2} + T^{\theta \zeta} \sum_{t=1}^T \left[\dfrac{1}{t^2} \sum_{s=1}^t \dfrac{1}{\epsilon_s} \right]^{(1+\zeta)/2}. \label{R2T_bound}
\end{align}
Now, we bound $R_T^{(1)}$ as
\begin{align}
R_T^{(1)} &= \E \sum_{s=1}^T |f_1(X_s) - f_0(X_s)| I\{\hat{a}_s \neq A_s\} \nonumber \\
& \leq \kappa \normx{f_1 - f_0}_\mathcal{H} \sum_{s =1 }^T \E I\{ \hat{a}_s \neq A_s\}
 \leq \kappa  \normx{f_1 - f_0}_\mathcal{H} \sum_{s =1 }^T P(\hat{a}_s \neq A_s) \nonumber\\
& = \kappa  \normx{f_1 - f_0}_\mathcal{H} \sum_{s =1 }^T  \dfrac{\epsilon_s}{2}. \label{eq: RT1_margin}
\end{align}
Combining \eqref{R2T_bound} and \eqref{eq: RT1_margin} in \eqref{eq: expectedregretbreakupintoR1andR2} yields the result.

\section*{Acknowledgments}
BKS is partially supported by National Science Foundation (NSF) CAREER Award DMS-1945396. 
\bibliography{KernelEpsGreedy, Refs}

\end{document}